\documentclass[11pt]{article}

\usepackage[final]{acl}
\usepackage{times}
\usepackage{latexsym}

\usepackage[T1]{fontenc}

\usepackage[utf8]{inputenc}

\usepackage{microtype}

\usepackage{inconsolata}

\usepackage{graphicx}

\usepackage{hyperref}       
\usepackage{url}            
\usepackage{booktabs}       
\usepackage{amsfonts}       
\usepackage{amsmath,amssymb}
\usepackage{nicefrac}       
\usepackage{xcolor}         
\usepackage{soul}           
\usepackage{colortbl}       
\usepackage{algorithm}
\usepackage{algpseudocode}
\usepackage[most]{tcolorbox}

\usepackage{caption}
\usepackage{multirow}

\usepackage{amsthm}

\newtheorem{theorem}{Theorem}[section]
\newtheorem{proposition}[theorem]{Proposition}
\newtheorem{corollary}[theorem]{Corollary}
\newtheorem{lemma}[theorem]{Lemma}
\newtheorem{assumption}[theorem]{Assumption}

\theoremstyle{definition}

\theoremstyle{remark}

\newcommand{\Ours}{SCAS}
\definecolor{revisionblue}{RGB}{222,242,255}
\definecolor{promptFrame}{gray}{0.45}
\definecolor{promptTitle}{gray}{0.90}
\definecolor{promptBack}{gray}{0.985}
\definecolor{promptAnswer}{gray}{0.92}
\sethlcolor{revisionblue}

\soulregister{\citep}{7}
\soulregister{\citet}{7}
\soulregister{\ref}{7}
\soulregister{\emph}{7}
\soulregister{\Ours}{0}

\newcommand{\answerplaceholder}[1]{%
  \fcolorbox{promptFrame}{promptAnswer}{\strut\texttt{#1}}%
}
\newcommand{\promptrole}[1]{\noindent\textbf{#1}}
\newcommand{\promptseparator}{%
  \par\vspace{0.45em}%
  \noindent\textcolor{black!22}{\rule{\linewidth}{0.4pt}}%
  \par\vspace{0.45em}%
}

\newtcolorbox{teacherpromptbox}[1]{%
  enhanced,
  colback=promptBack,
  colframe=promptFrame,
  colbacktitle=promptTitle,
  coltitle=black,
  fonttitle=\bfseries\small,
  fontupper=\small,
  title={#1},
  boxrule=0.9pt,
  arc=0mm,
  left=2.5mm,
  right=2.5mm,
  top=2mm,
  bottom=2mm
}

\newtcolorbox{evalpromptbox}[1]{%
  enhanced,
  colback=promptBack,
  colframe=promptFrame,
  colbacktitle=promptTitle,
  coltitle=black,
  fonttitle=\bfseries\small,
  fontupper=\small,
  title={#1},
  boxrule=0.9pt,
  arc=0mm,
  left=2.5mm,
  right=2.5mm,
  top=2mm,
  bottom=2mm
}
\usepackage{tikz}

\DeclareRobustCommand{\circlednum}[1]{%
  \tikz[baseline=(C.base)]{
    \node[
      fill=black,
      text=white,
      circle,
      inner sep=0pt,
      minimum size=0.9em,
      font=\tiny\bfseries
    ] (C) {#1};
  }%
}

\title{When the Strongest Teacher Is Not the Best Teacher: Student-Centric Answer Selection}

\author{
\textbf{Zhengyu Hu}$^{1,*}$,
~~\textbf{Zheyuan Xiao}$^{2,*}$,
~~\textbf{Linxin Song}$^{3}$,
~~\textbf{Fengqing Jiang}$^{1}$,
~~\textbf{Yuetai Li}$^{1}$,\\
~~\textbf{Zhihan Xiong}$^{1}$,
~~\textbf{Yue Liu}$^{4}$,
~~\textbf{Junhao Lin}$^{5}$,
~~\textbf{Yao Su}$^{6}$,
~~\textbf{Lijie Hu}$^{7}$,\\
~~\textbf{Kaize Ding}$^{8}$,
~~\textbf{Teng Xiao}$^{1,9}$,
~~\textbf{Radha Poovendran}$^{1}$\\
\normalfont
$^1$ University of Washington
\quad $^2$ University of Texas at Austin
\quad $^3$ University of Southern California\\
\quad $^4$ National University of Singapore
\quad $^5$ Microsoft
\quad $^6$ WPI\\
$^7$ MBZUAI
\quad $^8$ Northwestern University
$^{9}$ Allen Institute for AI (AI2)\\
\small{$^*$Equal contribution.}
}

\begin{document}
\maketitle

\begin{abstract}
LLM training increasingly relies on teacher-generated supervision, from synthetic responses to reasoning traces and tool-use demonstrations.
Current practice often chooses the highest-performing teacher to generate student training data, implicitly treating teacher test performance as a proxy for teaching quality.
We show that this assumption can fail: even when multiple teachers provide correct answers to the same question, the answer from the strongest teacher is not necessarily the best supervision for a given student.
To address this gap, we propose \emph{Student-Centric Answer Sampling} (\Ours{}), a framework that selects from verified teacher-generated answers according to their estimated student-centric learning cost.
Motivated by a token-wise gradient decomposition, we derive an efficient forward-only proxy for this cost and use it to guide answer selection during training.
Experiments across 30 teacher models, 6 student base models, and 6 tasks show that \Ours{} consistently improves student performance, suggesting that effective distillation should prioritize supervision matched to the current student rather than teacher strength alone.
\end{abstract}

\section{Introduction}

Modern LLM training increasingly relies on model-generated supervision, as high-quality human annotation is expensive to collect and limited in coverage~\citep{Hu2024ExplainingLB, Chen2025RevisitingSL, Hu2025PopulationAlignedPG, Djuhera2025FixingII, Nad2025SyntheticDG}. 
Teacher models now produce synthetic responses~\citep{Wang2022SelfInstructAL, Xu2024MagpieAD, Xu2026DS2InstructDD}, reasoning traces~\citep{Yu2025CoTSelfInstructBH, Yang2026WhichRT}, agent trajectories~\citep{Li2025FromWT, Wang2025AdaptingWA, Chen2025TrainingLA}, and tool-use demonstrations~\citep{Liu2024ToolACEWT, Xu2025TOUCANS1, Luo2025SelfTrainingLL} at scale, making teacher-generated data central to distillation-style LLM training~\citep{Fang2025KnowledgeDA, Shen2025CYCLEINSTRUCTFS, Burns2023WeaktoStrongGE, Gu2023MiniLLMOD, Agarwal2023OnPolicyDO}. 
In practice, teacher selection is often teacher-centric: the strongest available model is chosen to generate student training data~\citep{Xu2024MagpieAD, Abdin2024Phi4TR, Xu2025KodCodeAD, Li2025FromWT}, implicitly assuming that \emph{teacher competence} is a reliable proxy for \emph{teaching quality}.

However, our teacher--student comparison as shown in Figure~\ref{fig:intro_image} challenges this assumption.
We evaluate nine teacher models on the test set of each task, then use each teacher to generate training answers for Qwen2.5 students of three sizes across DeepScaleR, GSM8K, and MATH.
To isolate teaching quality from answer correctness, the training data for this comparison is restricted to the largest common subset of questions for which all nine teachers produce correct answers; details are provided in Appendix~\ref{app:intro-figure-detail}.
Despite this controlled setting, higher teacher accuracy does not consistently translate into higher student accuracy.
A teacher may have stronger absolute task performance, yet the supervision it provides need not be equally effective for every student.

\begin{figure}[t]
\captionsetup{skip=2pt}
\centering
\includegraphics[width=1.0\linewidth]{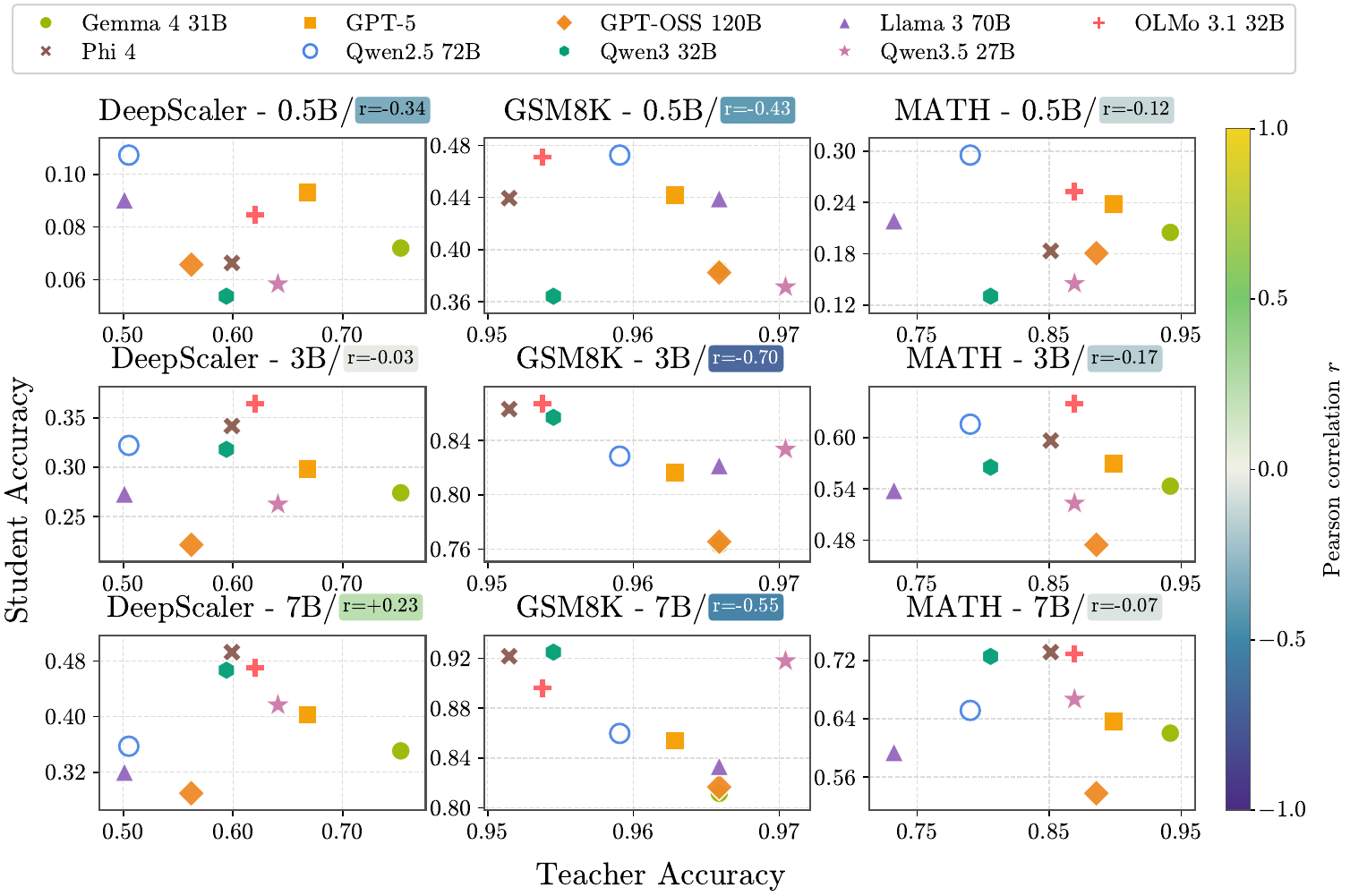}
\caption{\textbf{Teacher--student Accuracy Comparison.}
Higher teacher accuracy does not consistently yield higher student accuracy.
Colored \(r\) values denote the per-panel teacher--student rank correlation.\protect\footnotemark
}
\label{fig:intro_image}
\vspace{-1.0em}
\end{figure}
\footnotetext{See Appendix~\ref{app:intro-figure-detail} for the definition of
the correlation coefficient.}

These observations suggest that supervision quality should be viewed as student-centric.
Given multiple teachers for the same question, the best supervision may not come from the teacher with the highest test performance, but from the answer that best matches the current student.

To make this idea concrete, we view the suitability of a teacher-generated answer through the update it would induce on the current student.
In supervised fine-tuning, pairing the same question with different correct answers creates different token sequences, which lead to different hidden states, prediction errors, and gradients.
We therefore analyze the gradient induced by a question--answer sequence and decompose its squared norm into answer--answer (AA), answer--conditioning (AQ/QA), and conditioning--conditioning (QQ) interaction blocks.
This decomposition clarifies why correctness alone does not determine supervision quality: two correct teacher answers can require different changes from the student, depending on how their answer tokens interact with one another and with the question context.
We refer to this student-dependent update burden as \textbf{student-centric learning cost}.
To avoid expensive backpropagations for computing these gradient interactions, we derive an efficient forward-only proxy for estimating student-centric learning cost.
Then, building on this proxy, we propose \textbf{Student-Centric Answer Sampling} (\Ours{}), a training framework that selects from verified teacher-generated answers according to their estimated student-centric learning cost. Rather than defaulting to the highest-performing teacher, \Ours{} prioritizes lower-cost answers that better match the current student's learning state. To evaluate this idea, we collect candidate answers from 30 widely used open- and closed-source models and conduct experiments across 6 different student base models and 6 representative tasks. Our results show that student-centric answer selection is an effective principle for turning diverse teacher generations into better training data.

\begin{figure*}[t]
\centering
\includegraphics[width=1.0\linewidth]{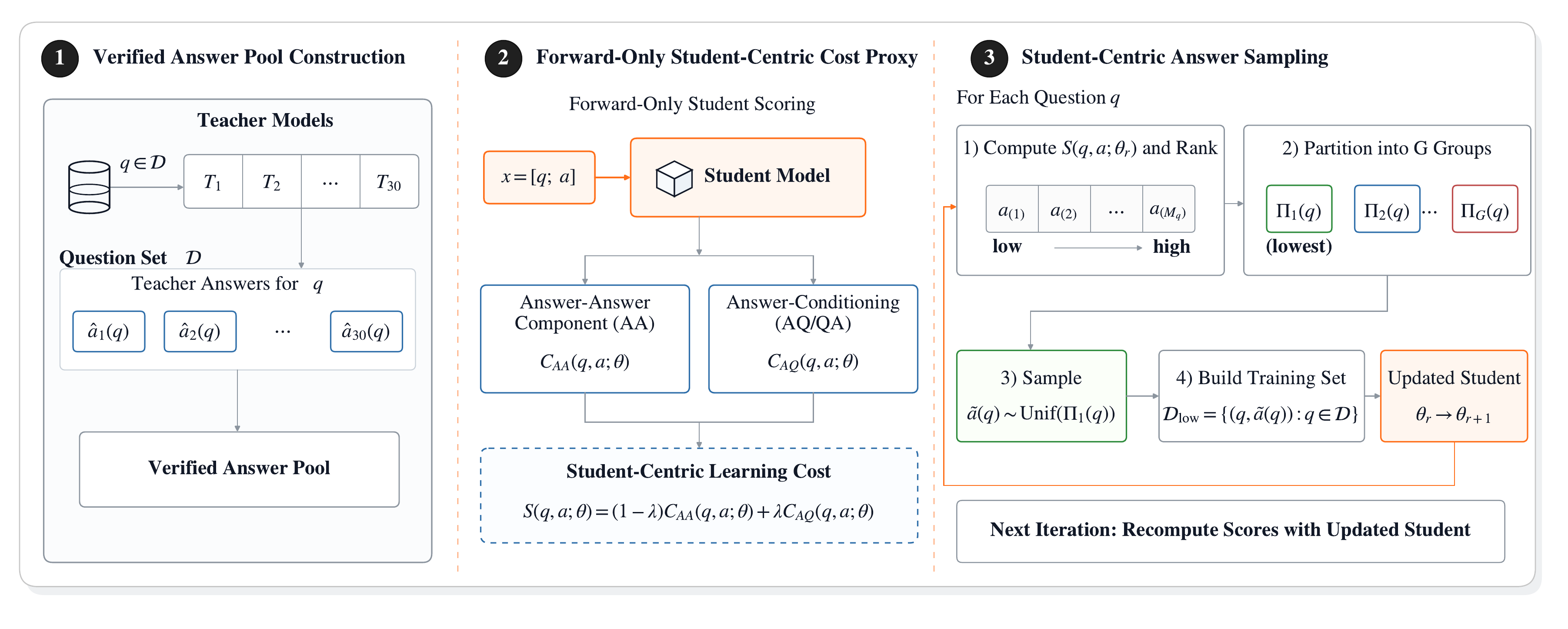}
\caption{
\textbf{Framework overview of Student-Centric Answer Sampling (\Ours{}).}
\circlednum{1} A verified answer pool is constructed from multiple teacher-generated answers for each question.
\circlednum{2} The current student estimates a forward-only student-centric learning cost for each candidate answer.
\circlednum{3} Candidates are ranked and grouped by the estimated cost, and low-cost answers are sampled to form a round-specific training set for updating the student.
Scores are recomputed with the updated student in the next iteration.
}
\label{fig:framework}
\end{figure*}

\section{Method}
\label{sec:method}

\vspace{-1mm}
We organize this section as follows.
Section~\ref{subsec:formulation-motivation} first formalizes the candidate-answer setting and motivates student-centric learning cost, starting from the verified answer pool illustrated in \circlednum{1}.
Section~\ref{subsec:score} then derives an efficient forward-only proxy for estimating this cost under the current student, corresponding to the scoring module in \circlednum{2}.
Section~\ref{subsec:group-training} finally describes how these scores are used to sample training answers and iteratively update the student, as shown in \circlednum{3}.
The overall framework is shown in Figure~\ref{fig:framework}.

\subsection{Formulation and Motivation}
\label{subsec:formulation-motivation}

\paragraph{Setup and notation.}
Let $\mathcal{D}=\{q_i\}_{i=1}^N$ denote the training questions. 
For each $q\in\mathcal{D}$, let $\mathcal{A}(q)$ denote the set of verified correct teacher-produced answers\footnote{See Appendix~\ref{app:candidate-verification} for verification details.} for $q$. The student is an autoregressive language model $p_\theta(\cdot)$ with parameters $\theta$. 
For a candidate answer $a\in\mathcal{A}(q)$, we form the training-templated sequence $x=[q;\,a]$. Let $I=\{1,\dots,|x|\}$ denote the token positions of $x$, and write $I=I_Q\cup I_A$, where $I_Q=\{1,\dots,|q|\}$ indexes the question tokens and $I_A=\{|q|+1,\dots,|x|\}$ indexes the answer tokens. For each token position $t\in I$, let $h_t$ denote the last-layer hidden state at position $t$ under the training template. A summary of notation is provided in Appendix~\ref{subsec:notation}.

\paragraph{Token-wise gradient decomposition.}
We first characterize how token-level interactions contribute to the gradient of a generic student weight matrix. Let $W\in\mathbb{R}^{d\times m}$ be any weight matrix in the student, let $X=[z_t^\top]_{t\in I}\in\mathbb{R}^{|I|\times d}$ denote the token representations feeding into $W$, and let $A=XW\in\mathbb{R}^{|I|\times m}$ be the corresponding pre-activations. For a given layer, the rows of $X$ can be taken as the token representations at that layer feeding into $W$; our derivation is agnostic to the specific choice. Writing $G=[g_t^\top]_{t\in I}\in\mathbb{R}^{|I|\times m}$ for the loss gradient with respect to $A$, standard matrix calculus gives
\begin{equation}
\label{eq:grad}
\nabla_W\mathcal{L} \;=\; X^\top G.
\end{equation}

We are interested in the squared Frobenius norm of this gradient,
\begin{equation}
\label{eq:grad-norm-base}
\big\|\nabla_W\mathcal{L}\big\|_F^2 \;=\; \big\|X^\top G\big\|_F^2.
\end{equation}

Expanding the Frobenius norm into pairwise inner products (Appendix~\ref{app:grad-decomp}), and writing $z_i^\top$ and $g_i^\top$ for the $i$-th rows of $X$ and $G$, yields
\begin{equation}
\label{eq:grad-norm-pairwise-full}
\big\|\nabla_W\mathcal{L}\big\|_F^2
\;=\;
\sum_{i,j\in I} \big(z_i^\top z_j\big)\,\big(g_i^\top g_j\big).
\end{equation}

We now split token indices into conditioning and answer positions, $I=I_A\cup I_Q$, and separate Eq.~\eqref{eq:grad-norm-pairwise-full} into answer--answer, answer--conditioning, and conditioning--conditioning contributions:
\begin{equation}
\label{eq:block-decomp}
\begin{aligned}
\big\|\nabla_W\mathcal{L}\big\|_F^2
&=
\underbrace{\sum_{i,j\in I_A} (z_i^\top z_j)(g_i^\top g_j)}_{\text{AA}} \\
&\quad+
\underbrace{2\sum_{i\in I_A,\,j\in I_Q} (z_i^\top z_j)(g_i^\top g_j)}_{\text{AQ/QA}} \\
&\quad+
\underbrace{\sum_{i,j\in I_Q} (z_i^\top z_j)(g_i^\top g_j)}_{\text{QQ}}.
\end{aligned}
\end{equation}

This decomposition makes explicit how different parts of the input sequence jointly contribute to the update induced on the student.

\paragraph{Motivation.}
Eq.~\eqref{eq:block-decomp} shows that the update induced by a question--answer sequence depends on how answer tokens interact with one another and with the conditioning tokens.
Thus, for a fixed question, different verified teacher answers can induce different gradient interaction patterns and impose different update burdens on the current student.
We refer to this student-dependent update burden as \emph{student-centric learning cost}.
This cost is the quantity we seek to estimate when ranking candidate answers: lower-cost answers provide training signals that better match the current student.
Directly computing these blockwise gradient interactions requires backpropagated token gradients $g_i$ for every candidate answer, which is too expensive for large-scale answer selection.
We therefore derive an efficient forward-only proxy for student-centric learning cost in the next subsection.

\subsection{A Forward-only Proxy for Student-Centric Learning Cost}
\label{subsec:score}

Having defined student-centric learning cost as the update burden induced by a candidate answer, we now derive a scalable proxy for estimating it.
Guided by the block structure in Eq.~\eqref{eq:block-decomp}, we focus on the answer-involving AA and AQ/QA blocks.
For a fixed question, the QQ term exhibits much smaller cross-candidate variance than the answer-involving blocks, and therefore contributes little to answer ranking; empirical evidence is provided in Section~\ref{subsec:blockwise-std-analysis}.

The remaining challenge is to approximate these retained interactions using only forward-pass signals. 
We adopt a two-step reduction. First, we replace residual magnitudes in the retained last-layer interactions with token-level forward uncertainty, yielding a token-sensitive proxy that preserves the AA and AQ/QA structure. 
Second, we replace token-level uncertainty within each block by its mean, obtaining a more stable sequence-level proxy. 
Concretely, this leads to using mean token NLL to summarize candidate difficulty under the current student, while hidden-state interactions capture the retained AA and AQ/QA structure in a blockwise manner. This yields an efficient forward-only score for ranking multiple correct answers.
A theoretical analysis is provided in Appendix~\ref{app:theory}.

We instantiate the forward-only proxy using final-layer hidden states, for which the corresponding LM-head residuals admit a direct connection to token-level NLL; layer ablations in Section~\ref{subsec:layer-analysis} further support this choice.
Given $x=[q;\,a]$, we run the student once and extract the final-layer hidden states $H_\theta(x)=[h_t]_{t\in I}$.
We normalize each token representation as $u_t = h_t/\|h_t\|$, and define the answer and conditioning representation aggregates

{\small
\begin{equation}
\label{eq:mu-repeat}
\mu_A \;=\; \frac{1}{|I_A|}\sum_{i\in I_A} u_i,
\qquad
\mu_Q \;=\; \frac{1}{|I_Q|}\sum_{j\in I_Q} u_j.
\end{equation}}
We further define sequence-level difficulty signals using mean token NLL:
\begin{equation}
\label{eq:difficulty-repeat}
\begin{aligned}
d_A
&=
\frac{1}{|I_A|}
\sum_{i\in I_A}
-\log p_\theta(x_i \mid q, a_{<i}), \\
d_Q
&=
\frac{1}{|I_Q|}
\sum_{j\in I_Q}
-\log p_\theta(x_j \mid x_{<j}).
\end{aligned}
\end{equation}

Our score is based on a blockwise difficulty approximation: within each interaction block, we use a shared sequence-level difficulty scalar to modulate representation-side interactions. For the AA block, we define
\begin{equation}
\label{eq:score-aa}
C_{AA}(q,a;\theta)
\;=\;
d_A^2 \,\|\mu_A\|^2,
\end{equation}
where $\|\mu_A\|^2$ summarizes answer-side pairwise similarity structure. For the AQ block, we define
\begin{equation}
\label{eq:score-aq}
C_{AQ}(q,a;\theta)
\;=\;
d_A\,d_Q\,(\mu_A^\top \mu_Q),
\end{equation}
where $\mu_A^\top\mu_Q$ summarizes answer--conditioning interaction structure.

Putting these together, we define the student-centric learning cost as
{\small
\begin{equation}
\label{eq:score-repeat}
S(q,a;\theta)
=
(1-\lambda)C_{AA}(q,a;\theta)
+
\lambda C_{AQ}(q,a;\theta),
\end{equation}
}
where $\lambda \in [0,1]$ controls the trade-off between the two terms.
$S(q,a;\theta)$ is a forward-only proxy for ranking multiple already-correct answers under the current student. 
It quantifies how costly a correct answer is for the student to learn from at its current stage. 
Lower-cost candidates are interpreted as answers that the current student can absorb with less effort, and are therefore preferred for data selection.
A theoretical justification of the proposed score is provided in Appendix~\ref{app:theory}.

\begin{algorithm}[h]
\captionsetup{font=footnotesize, labelfont=bf}
\footnotesize
\caption{Student-Centric Answer Sampling (\Ours{})}
\label{alg:group-training}

\noindent\textbf{Require:}
\par\noindent
\begin{tabular}[t]{@{}l@{}}
training questions $\mathcal D$; candidate answer sets $\{\mathcal A(q)\}_{q\in\mathcal D}$; \\
number of groups $G$; initial student parameters $\theta_0$; \\
number of selection-training rounds $R$
\end{tabular}
\vspace{2pt}

\begin{algorithmic}[1]
\For{selection-training round $r=0,1,\ldots,R-1$}
    \ForAll{$q\in\mathcal D$}
        \State Compute $S(q,a;\theta_r)$ for all $a\in\mathcal A(q)$
        \State Sort $\mathcal A(q)$ in ascending order of $S(q,a;\theta_r)$
        \State Let $M_q=|\mathcal A(q)|$
        \State Write the sorted candidates as $a_{(1)},\ldots,a_{(M_q)}$
        \For{$g=0,1,\ldots,G$}
            \State Set $b_g=\left\lfloor gM_q/G\right\rfloor$
        \EndFor
        \For{$g=1,2,\ldots,G$}
            \State Set $\Pi_g(q)=\{a_{(k)}: b_{g-1}<k\le b_g\}$
        \EndFor
        \State Sample $\tilde a(q)\sim \mathrm{Unif}(\Pi_1(q))$
    \EndFor
    \State Form $\mathcal D_{\mathrm{low}}=\{(q,\tilde a(q)):q\in\mathcal D\}$
    \State Train the student on $\mathcal D_{\mathrm{low}}$ and obtain $\theta_{r+1}$
\EndFor
\end{algorithmic}
\end{algorithm}

\subsection{Student-Centric Answer Sampling}
\label{subsec:group-training}

Directly selecting the single lowest-score answer for each question can be brittle, as the exact per-question minimum may be sensitive to score noise, outliers, or small differences among near-tied candidates. We therefore adopt \Ours{}, a stratified sampling procedure that preserves low learning cost while improving robustness; ablation results supporting this design are provided in Section~\ref{subsec:group-number}.

At selection-training round $r$, let $\theta_r$ denote the current student parameters. For each question $q\in\mathcal D$, we compute $S(q,a;\theta_r)$ for all verified candidates $a\in\mathcal A(q)$ and sort the candidates in ascending order. Let $M_q=|\mathcal A(q)|$, and write the sorted candidates as $a_{(1)},a_{(2)},\dots,a_{(M_q)}$. We then partition the sorted list into $G$ consecutive groups $\Pi_1(q),\Pi_2(q),\dots,\Pi_G(q)$ using floor-based boundaries, where $\lfloor\cdot\rfloor$ denotes the floor function:
{\small
\begin{equation}
\Pi_g(q)
=
\Big\{
a_{(k)}:
\Big\lfloor \frac{(g-1)M_q}{G}\Big\rfloor < k \le
\Big\lfloor \frac{gM_q}{G}\Big\rfloor
\Big\}.
\end{equation}
}
Here, $g=1,\dots,G$.
By construction, $\Pi_1(q)$ is the lowest-score group.

We then sample one answer uniformly from the lowest-score group, $\tilde a(q)\sim \mathrm{Unif}(\Pi_1(q))$, for every question $q$, and construct the training set
\begin{equation}
\mathcal D_{\mathrm{low}}
=
\{(q,\tilde a(q)):\,q\in\mathcal D\}.
\end{equation}
The student is trained on $\mathcal D_{\mathrm{low}}$ to obtain the next iterate $\theta_{r+1}$. 
The full \Ours{} procedure is given in Algorithm~\ref{alg:group-training}.

\begin{table*}[h]
\caption{
\textbf{Overall Performance of Answer-selection Methods on Qwen2.5 Student Models.}
Higher is better for all metrics.
Best and second-best results within each student model are marked in \textcolor{red}{red} and \textcolor{blue}{blue}, respectively.
}
\vspace{-2mm}
\centering
\resizebox{0.95\textwidth}{!}{
\begin{tabular}{@{}llccccccc@{}}
\toprule
\textbf{Model} & \textbf{Method}
& \textbf{DeepScaleR}
& \textbf{GSM8K}
& \textbf{MATH}
& \textbf{OpenR1-Math}
& \textbf{IFEval}
& \textbf{LiveBench}
& \textbf{Avg.} \\
\midrule

\multirow{8}{*}{Qwen2.5-0.5B}
& RSR
& 7.13 $\pm$ 0.17
& 26.26 $\pm$ 0.35
& 21.62 $\pm$ 0.38
& 6.40 $\pm$ 0.24
& 22.77 $\pm$ 0.20
& 24.30 $\pm$ 0.21
& 18.08 \\

& GRACE
& 5.67 $\pm$ 0.17
& \textcolor{blue}{\textbf{37.65 $\pm$ 0.53}}
& \textcolor{blue}{\textbf{24.13 $\pm$ 0.31}}
& \textcolor{blue}{\textbf{8.30 $\pm$ 0.14}}
& 23.32 $\pm$ 0.19
& 26.90 $\pm$ 0.21
& \textcolor{blue}{\textbf{21.00}} \\

& RSR-Item
& 4.53 $\pm$ 0.42
& 16.10 $\pm$ 0.35
& 11.69 $\pm$ 0.32
& 7.77 $\pm$ 0.46
& 23.47 $\pm$ 0.28
& 26.80 $\pm$ 0.20
& 15.06 \\

& PPL
& 4.10 $\pm$ 0.14
& 6.07 $\pm$ 0.31
& 6.44 $\pm$ 0.22
& 6.23 $\pm$ 0.38
& 23.29 $\pm$ 0.23
& 26.30 $\pm$ 0.26
& 12.07 \\

& IFD
& 5.23 $\pm$ 0.48
& 7.00 $\pm$ 0.34
& 6.59 $\pm$ 0.21
& 6.03 $\pm$ 0.29
& \textcolor{blue}{\textbf{24.03 $\pm$ 0.19}}
& \textcolor{blue}{\textbf{27.10 $\pm$ 0.14}}
& 12.66 \\

& Router
& 6.63 $\pm$ 0.74
& 8.01 $\pm$ 0.81
& 9.75 $\pm$ 0.06
& 6.50 $\pm$ 0.50
& 23.84 $\pm$ 0.40
& 26.70 $\pm$ 0.52
& 13.57 \\

& Influence
& \textcolor{blue}{\textbf{8.77 $\pm$ 0.54}}
& 7.46 $\pm$ 0.16
& 22.98 $\pm$ 0.26
& 7.63 $\pm$ 0.80
& 23.29 $\pm$ 0.33
& 26.90 $\pm$ 0.34
& 16.17 \\

\rowcolor{gray!12}
& \Ours{}
& \textcolor{red}{\textbf{10.10 $\pm$ 0.85}}
& \textcolor{red}{\textbf{45.03 $\pm$ 0.22}}
& \textcolor{red}{\textbf{27.62 $\pm$ 0.07}}
& \textcolor{red}{\textbf{10.77 $\pm$ 0.19}}
& \textcolor{red}{\textbf{24.40 $\pm$ 0.16}}
& \textcolor{red}{\textbf{27.20 $\pm$ 0.17}}
& \textcolor{red}{\textbf{24.19}} \\

\midrule

\multirow{8}{*}{Qwen2.5-3B}
& RSR
& 21.53 $\pm$ 0.29
& 59.92 $\pm$ 0.35
& 45.02 $\pm$ 0.50
& 17.29 $\pm$ 0.45
& 56.56 $\pm$ 0.25
& \textcolor{blue}{\textbf{42.20 $\pm$ 0.35}}
& 40.42 \\

& GRACE
& \textcolor{blue}{\textbf{28.40 $\pm$ 0.65}}
& \textcolor{blue}{\textbf{76.80 $\pm$ 0.03}}
& \textcolor{blue}{\textbf{57.55 $\pm$ 0.50}}
& \textcolor{blue}{\textbf{30.90 $\pm$ 0.78}}
& 58.75 $\pm$ 0.48
& \textcolor{red}{\textbf{42.30 $\pm$ 0.55}}
& \textcolor{blue}{\textbf{49.12}} \\

& RSR-Item
& 20.07 $\pm$ 0.54
& 34.95 $\pm$ 0.59
& 26.91 $\pm$ 0.14
& 18.20 $\pm$ 0.37
& 59.15 $\pm$ 0.49
& 41.60 $\pm$ 0.30
& 33.48 \\

& PPL
& 19.40 $\pm$ 0.71
& 18.88 $\pm$ 0.22
& 16.57 $\pm$ 0.36
& 15.33 $\pm$ 1.03
& 60.40 $\pm$ 0.47
& 39.90 $\pm$ 0.50
& 28.41 \\

& IFD
& 26.80 $\pm$ 0.88
& 18.32 $\pm$ 0.04
& 16.12 $\pm$ 0.03
& 24.57 $\pm$ 0.82
& \textcolor{blue}{\textbf{60.48 $\pm$ 0.42}}
& 41.60 $\pm$ 0.27
& 31.32 \\

& Router
& 22.33 $\pm$ 1.11
& 23.20 $\pm$ 0.45
& 25.79 $\pm$ 0.53
& 18.70 $\pm$ 0.51
& 59.89 $\pm$ 0.55
& 41.70 $\pm$ 0.34
& 31.94 \\

& Influence
& 27.83 $\pm$ 0.73
& 25.22 $\pm$ 0.74
& 49.98 $\pm$ 0.43
& 28.05 $\pm$ 0.65
& 59.89 $\pm$ 0.64
& 41.60 $\pm$ 0.52
& 38.76 \\

\rowcolor{gray!12}
& \Ours{}
& \textcolor{red}{\textbf{33.20 $\pm$ 0.21}}
& \textcolor{red}{\textbf{82.82 $\pm$ 0.35}}
& \textcolor{red}{\textbf{61.69 $\pm$ 0.09}}
& \textcolor{red}{\textbf{33.20 $\pm$ 0.43}}
& \textcolor{red}{\textbf{61.37 $\pm$ 0.33}}
& \textcolor{red}{\textbf{42.30 $\pm$ 0.20}}
& \textcolor{red}{\textbf{52.43}} \\

\midrule

\multirow{8}{*}{Qwen2.5-7B}
& RSR
& 30.20 $\pm$ 0.57
& 62.55 $\pm$ 0.31
& 53.89 $\pm$ 0.31
& 27.20 $\pm$ 0.71
& 60.07 $\pm$ 0.41
& 47.40 $\pm$ 0.37
& 46.89 \\

& GRACE
& \textcolor{blue}{\textbf{38.97 $\pm$ 1.30}}
& \textcolor{blue}{\textbf{86.85 $\pm$ 0.47}}
& \textcolor{blue}{\textbf{65.71 $\pm$ 0.21}}
& \textcolor{blue}{\textbf{38.93 $\pm$ 0.24}}
& \textcolor{blue}{\textbf{62.70 $\pm$ 0.26}}
& 47.50 $\pm$ 0.18
& \textcolor{blue}{\textbf{56.78}} \\

& RSR-Item
& 24.37 $\pm$ 0.91
& 55.12 $\pm$ 0.65
& 37.40 $\pm$ 0.07
& 24.97 $\pm$ 0.54
& 61.00 $\pm$ 0.51
& \textcolor{blue}{\textbf{48.40 $\pm$ 0.46}}
& 41.88 \\

& PPL
& 18.03 $\pm$ 0.41
& 20.60 $\pm$ 0.47
& 18.04 $\pm$ 0.03
& 22.17 $\pm$ 0.82
& 62.29 $\pm$ 0.40
& 47.80 $\pm$ 0.43
& 31.49 \\

& IFD
& 17.73 $\pm$ 0.78
& 20.32 $\pm$ 0.24
& 18.25 $\pm$ 0.11
& 21.30 $\pm$ 0.62
& 61.92 $\pm$ 0.39
& 47.90 $\pm$ 0.26
& 31.24 \\

& Router
& 27.47 $\pm$ 1.13
& 21.99 $\pm$ 0.22
& 22.55 $\pm$ 0.43
& 27.33 $\pm$ 0.45
& 60.26 $\pm$ 0.33
& 47.10 $\pm$ 0.37
& 34.45 \\

& Influence
& 34.50 $\pm$ 0.98
& 19.61 $\pm$ 0.35
& 18.62 $\pm$ 0.31
& 34.50 $\pm$ 0.22
& 62.29 $\pm$ 0.29
& 47.60 $\pm$ 0.25
& 36.19 \\

\rowcolor{gray!12}
& \Ours{}
& \textcolor{red}{\textbf{41.97 $\pm$ 1.42}}
& \textcolor{red}{\textbf{89.38 $\pm$ 0.16}}
& \textcolor{red}{\textbf{69.91 $\pm$ 0.19}}
& \textcolor{red}{\textbf{42.13 $\pm$ 0.48}}
& \textcolor{red}{\textbf{68.58 $\pm$ 0.32}}
& \textcolor{red}{\textbf{49.60 $\pm$ 0.19}}
& \textcolor{red}{\textbf{60.26}} \\

\bottomrule
\end{tabular}
}
\vspace{-2mm}
\label{tab:overall-performance}
\end{table*}

\section{Experiments Setup}
\label{sec:experiments-setup}

 \paragraph{Models \& Datasets.}
For each math training question, we use 30 teacher models and keep only verified correct answers; for each open-ended training question, we use Llama-3.3-70B-Instruct~\citep{Dubey2024TheL3} to generate 10 diverse responses. See Appendix~\ref{app:candidate-verification} for more details.
We evaluate six instruction-tuned student models: Qwen2.5-0.5B-Instruct, Qwen2.5-3B-Instruct, Qwen2.5-7B-Instruct~\citep{Yang2024Qwen25TR}, Llama-3.2-1B-Instruct, Llama-3.2-3B-Instruct, and Meta-Llama-3-8B-Instruct~\citep{Dubey2024TheL3}. 
For mathematical dataset, we use MATH~\citep{Hendrycks2021MeasuringMP}, GSM8K~\citep{Cobbe2021TrainingVT}, DeepScaleR~\citep{deepscaler2025}, and OpenR1-Math~\citep{openr1update2025} as both training and evaluation sources. 
For open-ended instruction tuning, we train on Magpie~\citep{Xu2024MagpieAD} and evaluate on IFEval~\citep{Zhou2023InstructionFollowingEF} and LiveBench~\citep{White2024LiveBenchAC}. 
We report accuracy for mathematical tasks and official benchmark scores for open-ended tasks; when multiple runs are available, we report mean and standard deviation. 
Further implementation details are provided in Appendix~\ref{app:implet-detail}.

\paragraph{Baselines.}
We compare \Ours{} with perplexity (PPL)~\citep{Bengio2003ANP}, instruction-following difficulty (IFD)~\citep{Li2023FromQT}, RSR and its item-level variant (RSR-Item)~\citep{Yang2026WhichRT}, GRACE~\citep{Panigrahi2025InGG}, router-based teacher selection (Router)~\citep{Liu2026NoSB}, and influence-function-based data selection (Influence)~\citep{Humane2025InfluenceFF}. 
For likelihood-based scores whose selection direction is not fixed a priori, we evaluate both low-score and high-score variants. 
All methods are trained under the same student model, data budget, and fine-tuning configuration.
Further implementation details are provided in Appendix~\ref{app:implet-detail}.

\section{Results and Analysis}

\begin{figure*}[t]
\centering
\includegraphics[width=1.0\linewidth]{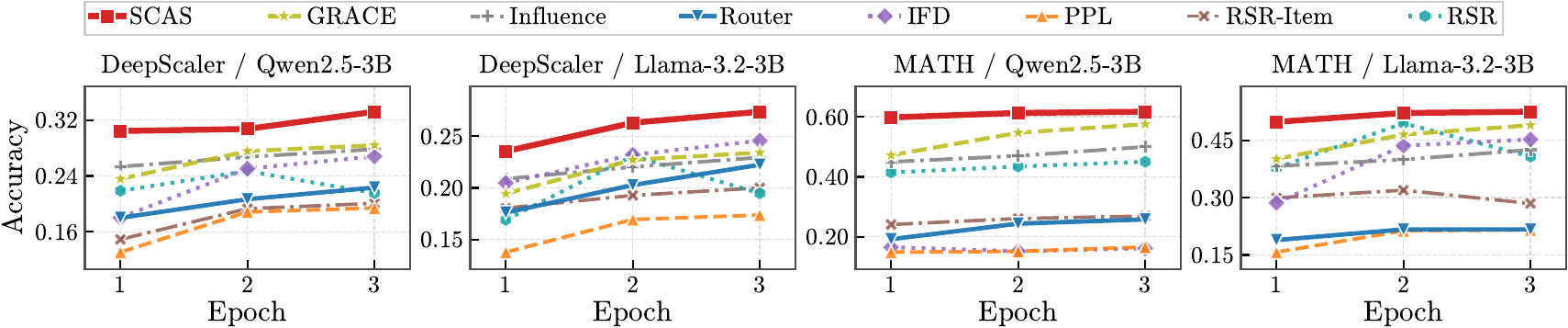}
\caption{\textbf{Data Efficiency Across Training Epochs.}}
\label{fig:data-efficiency}
\end{figure*}

\subsection{Overall Performance}
\label{subsec:overall-performance}

Table~\ref{tab:overall-performance} reports the main results on three Qwen2.5 student models. \Ours{} achieves the highest average score for all model sizes, improving over the strongest competing average by 3.19 points on Qwen2.5-0.5B, 3.31 points on Qwen2.5-3B, and 3.48 points on Qwen2.5-7B. The gains are especially clear on mathematical  benchmarks. Compared with the strongest baseline, \Ours{} improves GSM8K by 7.38 points for Qwen2.5-0.5B, 6.02 points for Qwen2.5-3B, and 2.53 points for Qwen2.5-7B. Similar improvements appear on MATH, where the gains are 3.49, 4.14, and 4.20 points, respectively, and on OpenR1-Math, where the gains are 2.47, 2.30, and 3.20 points.
The results also show that teacher- or difficulty-centric criteria are not uniformly reliable across student scales and tasks. GRACE is usually the strongest baseline, but other methods occasionally provide the second-best result, such as Influence on DeepScaleR for Qwen2.5-0.5B and RSR-Item on LiveBench for Qwen2.5-7B. In contrast, \Ours{} obtains the best or tied-best result in every model--metric column, with the only tie occurring on LiveBench for Qwen2.5-3B. \Ours{} also remains strong on open-ended instruction-following benchmarks, obtaining the best IFEval score for all three model sizes and the best or tied-best LiveBench score across all settings. These results support the central hypothesis of this work: among multiple correct teacher answers, selecting answers that are more learnable for the current student leads to better downstream training than relying only on teacher strength or generic data-difficulty scores.

\subsection{Efficiency Analysis}

\paragraph{Data Efficiency}

We next compare how different selection criteria affect learning progress over three training epochs.
Figure~\ref{fig:data-efficiency} reports performance after each training epoch on DeepScaleR and MATH for Qwen2.5-3B and Llama-3.2-3B students.
\Ours{} improves faster across datasets and student families, reaching higher accuracy after the first epoch and maintaining the lead through the final checkpoint.
This pattern suggests that selecting lower-learning-cost answers helps the student absorb supervision more efficiently at each stage of fine-tuning.
In contrast, difficulty-centric baselines improve more slowly or behave inconsistently across epochs, especially on MATH, where methods such as PPL and RSR-Item remain below the best-performing methods.
These results show that student-centric answer selection improves data efficiency by converting the same training data and optimization schedule into higher student accuracy throughout fine-tuning.

\paragraph{Computing Efficiency}

\begin{figure}[h]
\centering
\includegraphics[width=1.0\linewidth]{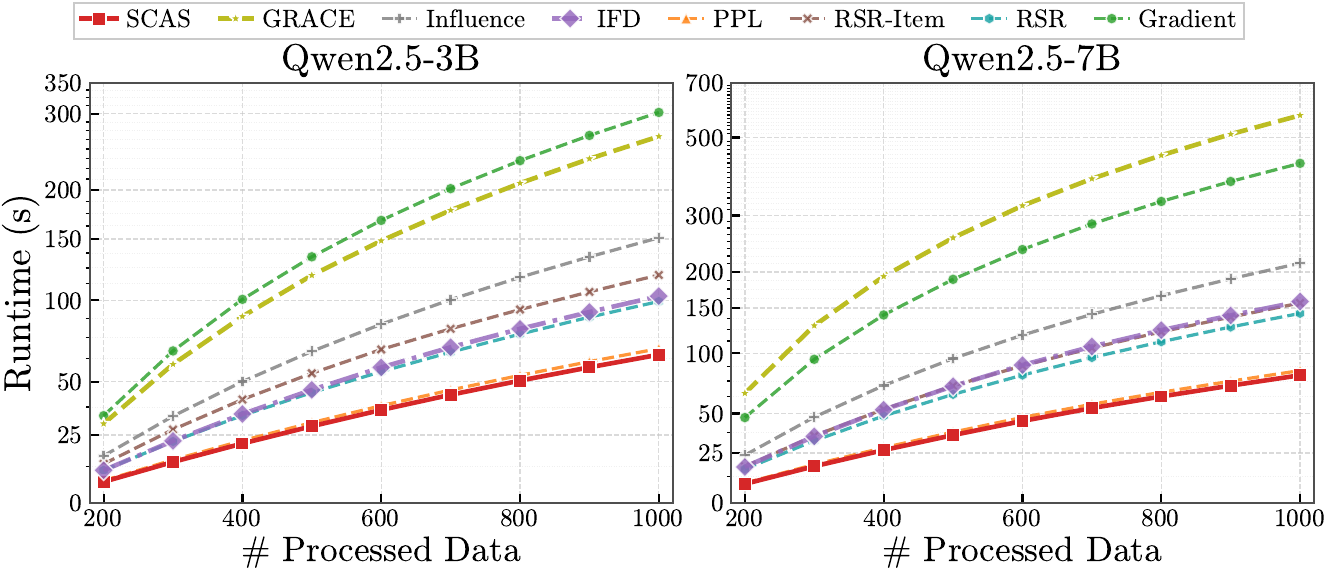}
\caption{
\textbf{Computing Efficiency for Selecting 1,000 Data Points.}
}
\label{fig:computing-efficiency}
\end{figure}

We further evaluate the computational efficiency of \Ours{}.
We measure this efficiency by testing the time required for different methods to select 1,000 training data points, where each question has 30 teacher-generated candidate answers 
We conduct this experiment on the MATH dataset with Qwen2.5-3B and Qwen2.5-7B.
The results are shown in Figure~\ref{fig:computing-efficiency}.
The experimental configuration follows Appendix~\ref{app:implet-detail}.
Figure~\ref{fig:computing-efficiency} compares \Ours{} with the baselines used in our experiments, including PPL, IFD, RSR, RSR-Item, Influence, and GRACE.
\Ours{} is computationally efficient across both model sizes.
At 1,000 processed data points, \Ours{} takes 64.9 seconds on Qwen2.5-3B and 79.9 seconds on Qwen2.5-7B, which is lower than the baselines.
This efficiency follows from our forward-only design: after a single forward pass, the score only requires token-level NLL values and lightweight hidden-state aggregation.
These results show that the proposed student-centric selection score is practical for big candidate pools, while retaining the training gains observed in the previous sections.

\subsection{Number of Groups}
\label{subsec:group-number}

\begin{figure}[h]
\centering
\includegraphics[width=1.0\linewidth]{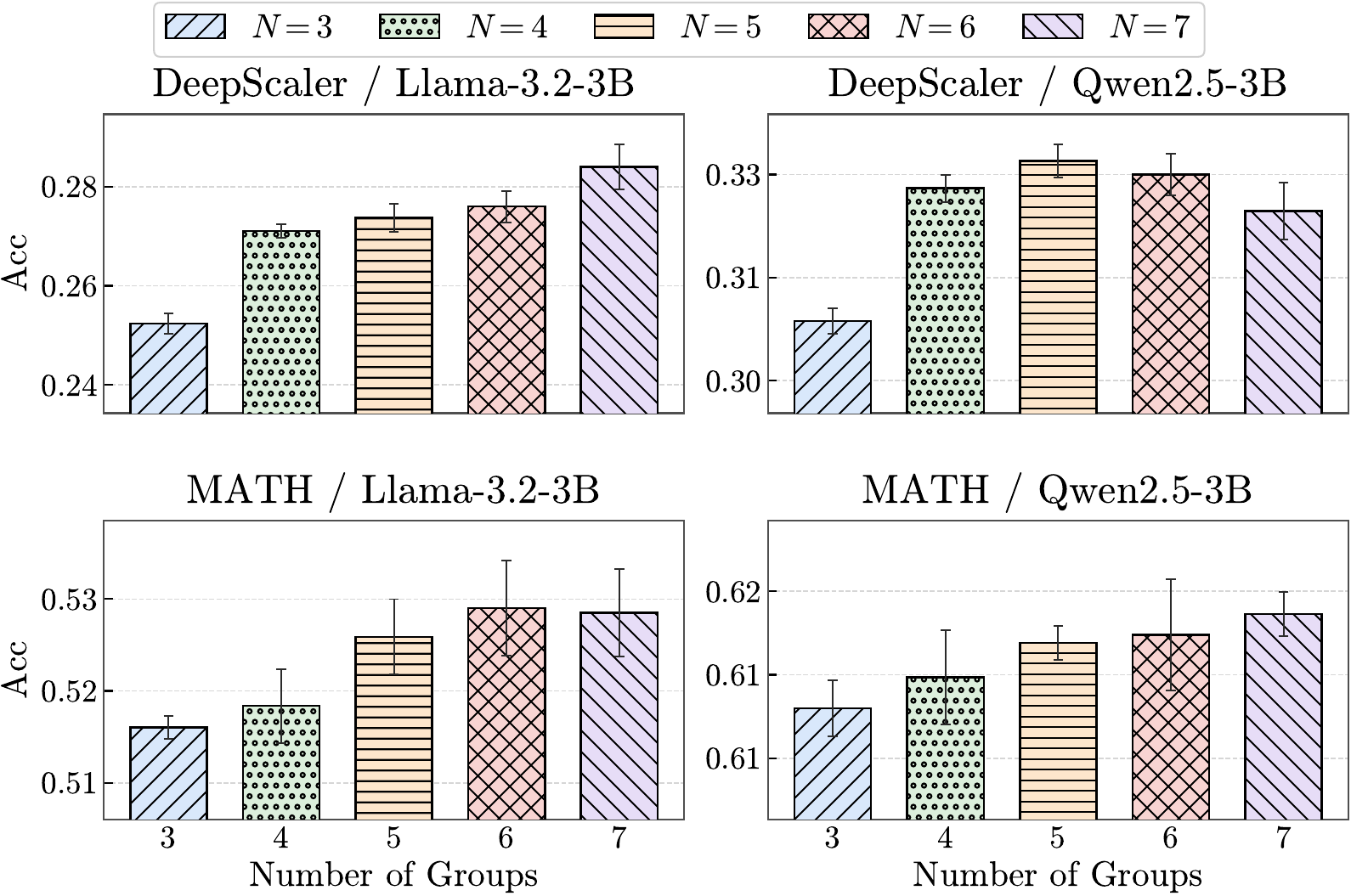}
\caption{
\textbf{Effect of the Number of Groups $G$ in \Ours{}.}
}
\label{fig:group-number}
\end{figure}

The number of groups $G$ controls the granularity of the stratified sampling step in Algorithm~\ref{alg:group-training}. 
Smaller $G$ keeps the lowest-score group $\Pi_1(q)$ more diverse, but may introduce noise by including candidates whose learning costs are only moderately low. 
Larger $G$ makes $\Pi_1(q)$ more selective, but can also make sampling more sensitive to score noise.
We study this effect on DeepScaleR and MATH using two  student models, Qwen2.5-3B and Llama-3.2-3B, by varying $G\in\{3,4,5,6,7\}$ under the same experimental setup as Section~\ref{subsec:overall-performance}. 
As shown in Figure~\ref{fig:group-number}, all settings improve when moving from coarse partitions ($G=3$ or $4$) to a moderately finer grouping. 
After that, the trend depends on the model and dataset: DeepScaleR with Llama-3.2-3B and MATH with Qwen2.5-3B continue to increase as $G$ grows, whereas DeepScaleR with Qwen2.5-3B and MATH with Llama-3.2-3B peak and then slightly decline. 
This suggests a trade-off between reducing coarse-group noise and maintaining a sufficiently robust sampling pool. 
We use $G=5$ as a stable default in the main experiments.

\subsection{Ablation on $\lambda$}
\label{subsec:lambda-ablation}

\begin{figure}[t]
\centering
\includegraphics[width=1.0\linewidth]{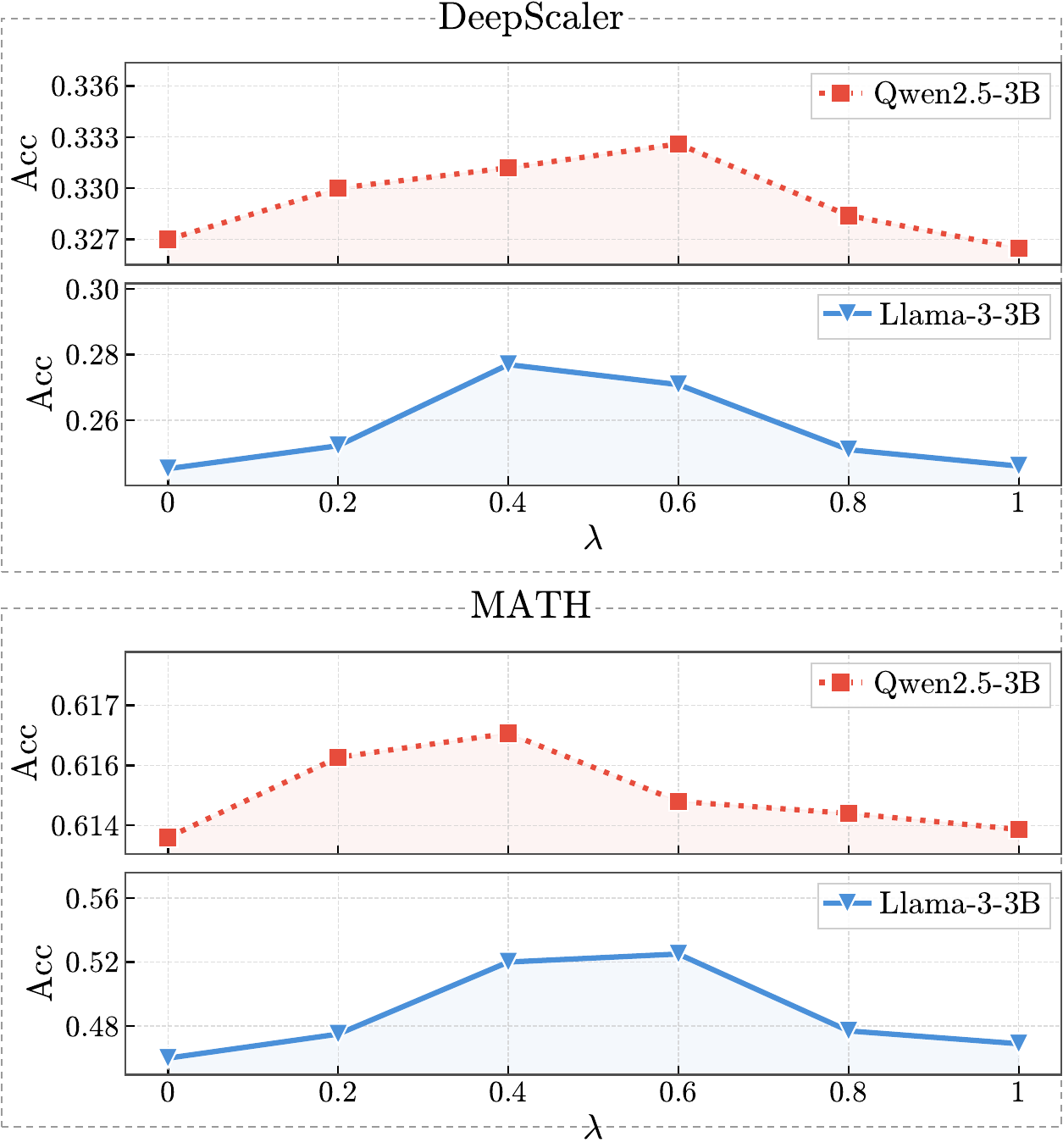}
\caption{
\textbf{Effect of the Trade-off Parameter $\lambda$}.
}
\label{fig:lambda-ablation}
\end{figure}

We ablate the trade-off parameter $\lambda$ in Eq.~\eqref{eq:score-repeat} to examine how the AA and AQ/QA terms contribute to student-centric answer selection. The two endpoints correspond to using only one component of the score: the left endpoint uses the AA term alone, while the right endpoint uses only the AQ/QA term. We vary $\lambda$ over $\{0,0.2,0.4,0.6,0.8,1.0\}$ and evaluate on DeepScaleR and MATH with Qwen2.5-3B and Llama-3.2-3B.
As shown in Figure~\ref{fig:lambda-ablation}, intermediate trade-offs generally lead to better performance than either endpoint. This trend is most pronounced for Llama-3.2-3B, whose accuracy peaks around 0.4 on DeepScaleR and around 0.6 on MATH. Qwen2.5-3B is less sensitive to this hyperparameter, but its best results also occur in the middle range, around 0.4--0.6. These patterns suggest that relying on either answer-internal interactions or answer--conditioning interactions alone is suboptimal.
The ablation supports our blockwise formulation of the learning-cost proxy: the AA and AQ/QA terms provide complementary signals, and combining them yields a more effective criterion for selecting learnable answers. We therefore use an intermediate trade-off in the main experiments.

\subsection{Ablation on the Learning Cost}
\label{subsec:learning-cost-ablation}

\begin{figure}[t]
\centering
\includegraphics[width=1.0\linewidth]{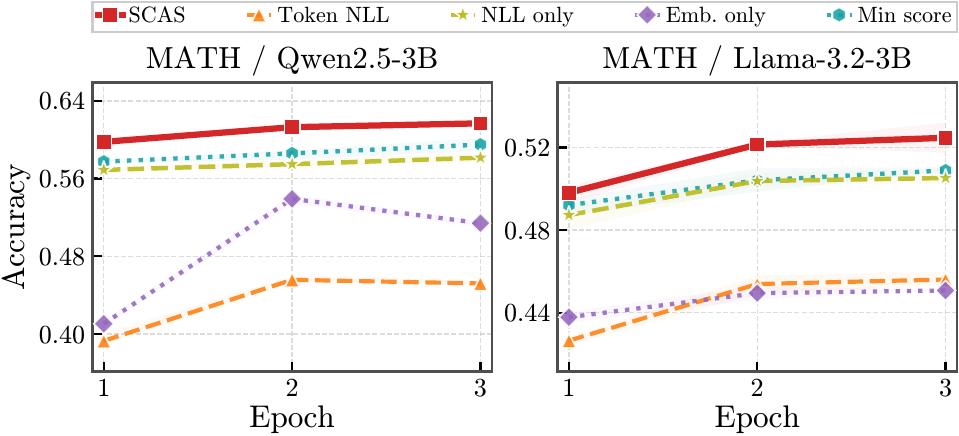}
\caption{
\textbf{Ablation of the Learning-cost Score.}
}
\label{fig:learning-cost-ablation}
\end{figure}

We ablate the learning-cost score on MATH with Qwen2.5-3B-Instruct and Llama-3.2-3B-Instruct, training each variant for three epochs. 
\Ours{} uses the product of sequence-level NLL and the embedding-based score.
We compare it with four controlled variants: Token NLL keeps the product form but replaces sequence-level NLL with token-level NLL; NLL only uses the sequence-level NLL term from the \Ours{} score; Emb. only uses the embedding-based term from the \Ours{} score; and Min score uses the full score but removes grouping, always selecting the single lowest-score answer for each question.
Detailed definitions of these variants are provided in Appendix~\ref{app:learning-cost-ablation-defs}.
The results are shown in Figure~\ref{fig:learning-cost-ablation}.
\Ours{} performs best for both student models throughout training.
The NLL-only and embedding-only variants both fall below the full score, showing that the two terms provide complementary signals for answer-level learnability.
Token NLL is consistently weaker, suggesting that local token difficulty does not capture the answer-level signal needed for selection.
Min score is competitive with NLL only but remains below \Ours{}, suggesting that always choosing the single lowest-score answer can be affected by score noise; this highlights the role of grouping in making low-cost answer selection more robust.

\subsection{Mean and Std of AA, AQ/QA, and QQ}
\label{subsec:blockwise-std-analysis}

\begin{table}[t]
\caption{\textbf{Mean and Standard Deviation of Blockwise Scores Across Candidate Answers.}}
\vspace{-2mm}
\centering
\scriptsize
\renewcommand{\arraystretch}{1.05}
\setlength{\tabcolsep}{2pt}
\begin{tabular*}{\linewidth}{@{\extracolsep{\fill}}ccl|ccc|ccc@{}}
\toprule
\textbf{Model} & \textbf{Size} & \textbf{Stat.} & \multicolumn{3}{c|}{\textbf{DeepScaleR}} & \multicolumn{3}{c}{\textbf{MATH}} \\
\cmidrule(lr){4-6}\cmidrule(l){7-9}
 & & & \textbf{AA} & \textbf{AQ/QA} & \textbf{QQ} & \textbf{AA} & \textbf{AQ/QA} & \textbf{QQ} \\
\midrule
\multirow{6}{*}{\rotatebox[origin=c]{90}{\footnotesize Qwen-2.5}}
& \multirow{2}{*}{0.5B} & Mean & 0.56 & 0.95 & 4.18 & 1.21 & 1.27 & 4.02 \\
&                       & Std. & 1.4465 & 0.8323 & 0.0055 & 2.9445 & 1.4772 & 0.0057 \\
& \multirow{2}{*}{3B}   & Mean & 0.91 & 1.04 & 4.62 & 2.60 & 1.59 & 4.36 \\
&                       & Std. & 2.7117 & 1.1836 & 0.0077 & 7.1118 & 2.3813 & 0.0077 \\
& \multirow{2}{*}{7B}   & Mean & 2.59 & 2.01 & 7.57 & 9.96 & 3.62 & 7.22 \\
&                       & Std. & 9.0950 & 2.8131 & 0.0067 & 29.1193 & 6.4291 & 0.0068 \\
\midrule
\multirow{6}{*}{\rotatebox[origin=c]{90}{\footnotesize Llama-3}}
& \multirow{2}{*}{1B}   & Mean & 1.32 & 2.93 & 10.31 & 3.11 & 5.73 & 10.47 \\
&                       & Std. & 3.7038 & 4.4042 & 0.0073 & 9.0740 & 11.3458 & 0.0077 \\
& \multirow{2}{*}{3B}   & Mean & 1.03 & 2.76 & 10.88 & 2.20 & 5.13 & 11.02 \\
&                       & Std. & 2.7242 & 3.7724 & 0.0049 & 6.1921 & 9.5294 & 0.0051 \\
& \multirow{2}{*}{8B}   & Mean & 1.06 & 3.41 & 14.07 & 2.30 & 6.08 & 14.28 \\
&                       & Std. & 2.5867 & 4.2268 & 0.0086 & 6.1960 & 10.5918 & 0.0088 \\
\bottomrule
\end{tabular*}
\vspace{-2mm}
\label{tab:blockwise-std}
\end{table}

The proxy in Eq.~\eqref{eq:score-repeat} keeps the answer-involving AA and AQ/QA blocks while omitting the conditioning-only QQ block. We validate this design choice by measuring how much each block varies across candidate answers for the same question. For each of DeepScaleR and MATH, we randomly sample 2,000 examples from the training set. On these examples, we compute the mean and standard deviation of AA, AQ/QA, and QQ using six student models: Qwen2.5-0.5B, Qwen2.5-3B, Qwen2.5-7B, Llama-3.2-1B, Llama-3.2-3B, and Llama-3-8B.
Table~\ref{tab:blockwise-std} shows that QQ is nearly constant across candidates: its standard deviation is at most about $0.01$ in all settings. In contrast, the answer-involving blocks exhibit much larger cross-candidate variation, with AA standard deviations ranging from $1.45$ to $29.12$ and AQ/QA standard deviations ranging from $0.83$ to $11.35$. Thus, QQ behaves mostly as a question-level offset, whereas AA and AQ/QA provide the main discriminative signals during data selection.

\section{Rank Correlation with Gradient Scores}
\label{subsec:rank-correlation-gradient}

We evaluate whether \Ours{} preserves the ranking induced by exact gradient values. 
For each question, we rank the candidate answers using each method and compute Spearman correlation with the ranking obtained from true gradient values; teacher-selection baselines follow the corresponding teacher-model ranking. 
We average over 1,000 questions from each of DeepScaleR, GSM8K, MATH, OpenR1-Math, and Magpie.
Table~\ref{tab:rank-correlation-gradient-by-size} reports mean correlations on five datasets, where DS, OR1, and Mag. denote DeepScaleR, OpenR1-Math, and Magpie, respectively. 
\Ours{} achieves the best average correlation for both Qwen2.5-3B and Qwen2.5-7B, reaching 0.783 and 0.844, and is best in every dataset column. 
These results indicate that \Ours{} better preserves the gradient-value preference order while remaining forward-only.

\begin{table}[t]
\caption{\textbf{Mean Spearman Rank Correlation with Gradient Values.}}
\vspace{-2mm}
\centering
\scriptsize
\renewcommand{\arraystretch}{1.03}
\setlength{\tabcolsep}{1pt}
\begin{tabular*}{\linewidth}{@{\extracolsep{\fill}}llcccccc@{}}
\toprule
\textbf{Model} & \textbf{Method} & \makebox[3.0em][c]{\textbf{DS}} & \makebox[3.0em][c]{\textbf{GSM8K}} & \makebox[3.0em][c]{\textbf{MATH}} & \makebox[3.0em][c]{\textbf{OR1}} & \makebox[3.0em][c]{\textbf{Mag.}} & \makebox[3.0em][c]{\textbf{Avg.}} \\
\midrule
\multirow{8}{*}{\rotatebox[origin=c]{90}{Qwen2.5-3B}} & GRACE     & \makebox[3.0em][r]{\phantom{-}0.781} & \makebox[3.0em][r]{\phantom{-}0.853} & \makebox[3.0em][r]{\phantom{-}0.829} & \makebox[3.0em][r]{\phantom{-}0.775} & \makebox[3.0em][r]{\phantom{-}0.348} & \makebox[3.0em][r]{\phantom{-}0.717} \\
 & Influence & \makebox[3.0em][r]{\phantom{-}0.604} & \makebox[3.0em][r]{\phantom{-}0.676} & \makebox[3.0em][r]{\phantom{-}0.662} & \makebox[3.0em][r]{\phantom{-}0.356} & \makebox[3.0em][r]{\phantom{-}0.261} & \makebox[3.0em][r]{\phantom{-}0.512} \\
 & IFD       & \makebox[3.0em][r]{\phantom{-}0.025} & \makebox[3.0em][r]{\phantom{-}0.053} & \makebox[3.0em][r]{-0.074} & \makebox[3.0em][r]{\phantom{-}0.101} & \makebox[3.0em][r]{\phantom{-}0.159} & \makebox[3.0em][r]{\phantom{-}0.053} \\
 & Router    & \makebox[3.0em][r]{\phantom{-}0.247} & \makebox[3.0em][r]{-0.582} & \makebox[3.0em][r]{-0.288} & \makebox[3.0em][r]{-0.393} & \makebox[3.0em][r]{-0.304} & \makebox[3.0em][r]{-0.264} \\
 & RSR       & \makebox[3.0em][r]{-0.795} & \makebox[3.0em][r]{-0.836} & \makebox[3.0em][r]{-0.811} & \makebox[3.0em][r]{-0.796} & \makebox[3.0em][r]{\phantom{-}0.221} & \makebox[3.0em][r]{-0.603} \\
 & RSR-Item  & \makebox[3.0em][r]{-0.802} & \makebox[3.0em][r]{-0.767} & \makebox[3.0em][r]{-0.791} & \makebox[3.0em][r]{-0.726} & \makebox[3.0em][r]{-0.074} & \makebox[3.0em][r]{-0.632} \\
 & PPL       & \makebox[3.0em][r]{-0.771} & \makebox[3.0em][r]{-0.842} & \makebox[3.0em][r]{-0.809} & \makebox[3.0em][r]{-0.791} & \makebox[3.0em][r]{-0.344} & \makebox[3.0em][r]{-0.712} \\
\cmidrule(lr){2-8}
 & \cellcolor{gray!12}Ours & \cellcolor{gray!12}\makebox[3.0em][r]{\textbf{\phantom{-}0.788}} & \cellcolor{gray!12}\makebox[3.0em][r]{\textbf{\phantom{-}0.883}} & \cellcolor{gray!12}\makebox[3.0em][r]{\textbf{\phantom{-}0.875}} & \cellcolor{gray!12}\makebox[3.0em][r]{\textbf{\phantom{-}0.795}} & \cellcolor{gray!12}\makebox[3.0em][r]{\textbf{\phantom{-}0.572}} & \cellcolor{gray!12}\makebox[3.0em][r]{\textbf{\phantom{-}0.783}} \\
\midrule
\multirow{8}{*}{\rotatebox[origin=c]{90}{Qwen2.5-7B}} & GRACE     & \makebox[3.0em][r]{\phantom{-}0.796} & \makebox[3.0em][r]{\phantom{-}0.891} & \makebox[3.0em][r]{\phantom{-}0.849} & \makebox[3.0em][r]{\phantom{-}0.804} & \makebox[3.0em][r]{\phantom{-}0.628} & \makebox[3.0em][r]{\phantom{-}0.793} \\
 & Influence & \makebox[3.0em][r]{\phantom{-}0.183} & \makebox[3.0em][r]{\phantom{-}0.820} & \makebox[3.0em][r]{\phantom{-}0.800} & \makebox[3.0em][r]{-0.089} & \makebox[3.0em][r]{\phantom{-}0.515} & \makebox[3.0em][r]{\phantom{-}0.446} \\
 & IFD       & \makebox[3.0em][r]{-0.360} & \makebox[3.0em][r]{-0.303} & \makebox[3.0em][r]{-0.483} & \makebox[3.0em][r]{-0.228} & \makebox[3.0em][r]{-0.374} & \makebox[3.0em][r]{-0.349} \\
 & Router    & \makebox[3.0em][r]{\phantom{-}0.285} & \makebox[3.0em][r]{-0.582} & \makebox[3.0em][r]{-0.266} & \makebox[3.0em][r]{-0.399} & \makebox[3.0em][r]{\phantom{-}0.241} & \makebox[3.0em][r]{-0.144} \\
 & RSR       & \makebox[3.0em][r]{-0.837} & \makebox[3.0em][r]{-0.636} & \makebox[3.0em][r]{-0.816} & \makebox[3.0em][r]{-0.817} & \makebox[3.0em][r]{\phantom{-}0.338} & \makebox[3.0em][r]{-0.553} \\
 & RSR-Item  & \makebox[3.0em][r]{-0.856} & \makebox[3.0em][r]{-0.722} & \makebox[3.0em][r]{-0.807} & \makebox[3.0em][r]{-0.787} & \makebox[3.0em][r]{\phantom{-}0.040} & \makebox[3.0em][r]{-0.626} \\
 & PPL       & \makebox[3.0em][r]{-0.856} & \makebox[3.0em][r]{-0.897} & \makebox[3.0em][r]{-0.871} & \makebox[3.0em][r]{-0.844} & \makebox[3.0em][r]{-0.724} & \makebox[3.0em][r]{-0.838} \\
\cmidrule(lr){2-8}
 & \cellcolor{gray!12}Ours & \cellcolor{gray!12}\makebox[3.0em][r]{\textbf{\phantom{-}0.861}} & \cellcolor{gray!12}\makebox[3.0em][r]{\textbf{\phantom{-}0.904}} & \cellcolor{gray!12}\makebox[3.0em][r]{\textbf{\phantom{-}0.883}} & \cellcolor{gray!12}\makebox[3.0em][r]{\textbf{\phantom{-}0.847}} & \cellcolor{gray!12}\makebox[3.0em][r]{\textbf{\phantom{-}0.726}} & \cellcolor{gray!12}\makebox[3.0em][r]{\textbf{\phantom{-}0.844}} \\
\bottomrule
\end{tabular*}
\vspace{-2mm}
\label{tab:rank-correlation-gradient-by-size}
\end{table}

\section{Layer Analysis}
\label{subsec:layer-analysis}
\begin{figure}[t]
\centering
\includegraphics[width=0.95\linewidth]{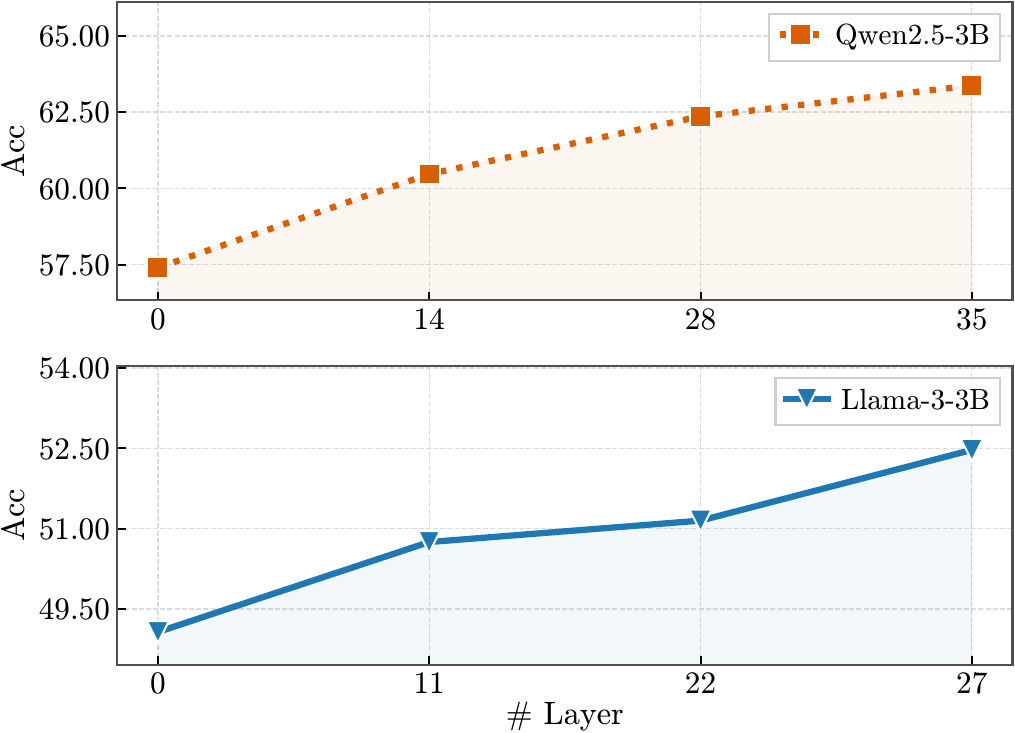}
\caption{
\textbf{Effect of the Representation Layer on Math.}
}
\label{fig:layer-analysis}
\end{figure}
In our main experiments, \Ours{} uses the final-layer hidden states of the student model to compute the answer-selection score. This follows our forward-only proxy: since token probabilities are computed from final-layer representations through the language modeling head, these representations are most directly connected to the NLL-based difficulty terms in our score.
To examine the effect of layer choice, we conduct a layer analysis on the Math setting with Llama-3.2-3B and Qwen2.5-3B. For each model, we select several approximately evenly spaced layers from shallow to deep, including the final layer, while keeping all other components unchanged. The results are shown in Figure~\ref{fig:layer-analysis}. Performance consistently improves with deeper layers for both models, with the final layer achieving the highest average accuracy. This supports our use of final-layer hidden states, suggesting that representations closest to the language modeling head better capture the student-centric signal needed for answer selection.


\section{Related Work}


\paragraph{LLM-Generated Synthetic Data and Distillation.}
LLM-generated supervision has become central to modern LLM training, from synthetic responses~\citep{Wang2022SelfInstructAL,Xu2026CoActCL, Xu2024MagpieAD} to reasoning traces, agent trajectories, and tool-use demonstrations~\citep{Li2025FromWT, Xu2025TOUCANS1,Wang2026RethinkingTE}. 
These signals are widely used for supervised fine-tuning, self-training, and distillation~\citep{Ko2025DistiLLM2AC, Hu2025UnveilingTL, Yang2026LearningBT}. 
Many pipelines therefore rely on the strongest available model as the teacher~\citep{Zhang2025FindYO, Wang2025LLMpoweredMF, Zhang2025TheBI}, implicitly treating teacher task performance as a proxy for data quality~\citep{Xu2025StrongerMA, Lei2026HumanLLMTP, He2026StudentintheLoopCD}. 
While recent work studies better synthetic data generation, knowledge transfer~\citep{Kessler2025TowardsAS, Wang2026SocialCoachPS, Shing2025TAIDTA}, and student-aware teacher selection~\citep{Panigrahi2025InGG}, we ask a finer-grained question: given multiple correct teacher-generated answers, which answer is most suitable for the current student?

\vspace{-2mm}

\paragraph{Data Selection for LLM Training.}
Our work also builds on data selection for model training. 
Classical methods select or reweight examples by quality, diversity, or estimated utility~\citep{Yoon2019DataVU, Sener2017ActiveLF}, while curriculum and self-paced learning emphasize the role of example difficulty in optimization~\citep{Bengio2009CurriculumL, Kumar2010SelfPacedLF,Hu2024EditableCB}. 
Gradient- and uncertainty-based criteria further estimate instance usefulness~\citep{Gal2017DeepBA, Koh2017UnderstandingBP}, and recent LLM studies adapt these ideas through synthetic data filtering, response ranking, adaptive sampling~\citep{Hu2023HowMV, Xia2024LESSSI, Hu2023LeveragingRG, Hu2024LetsAG,Hu2024SemiSupervisedCB}, and student-aware teacher or trajectory selection~\citep{Yang2026WhichRT}. 
Most of this work selects questions, examples, trajectories, or datasets~\citep{Zhang2023OnTT, Zhang2025EDGEED, Liu2025GuardReasonerTR}; our setting instead selects among multiple verified correct answers to the same question. 
We model this answer-level choice through student-centric learning cost and instantiate it with a forward-only proxy in \Ours{}.

\section{Conclusion}
This work shows that teacher competence is not always a reliable proxy for teaching quality in teacher-generated LLM supervision. Even among correct answers, the strongest teacher may not provide the best supervision for a given student. We propose \Ours{}, a forward-only proxy for student-centric learning cost that selects verified teacher-generated answers matched to the current student. Across 30 teacher models, 6 student base models, and 6 tasks, \Ours{} consistently outperforms teacher-centric and difficulty-based selection, highlighting the importance of student-matched supervision over teacher strength alone.

\section{Limitations}
This work has three main limitations. First, our experiments primarily focus on 3B- and 7B-scale instruction-tuned students, and the relationship between teacher strength and student-centric supervision may differ for substantially larger students, stronger base models, or architectures with different training dynamics. Second, although our teacher pool includes 30 models, it is not exhaustive, and a larger or more diverse pool may affect our conclusions.

\section{Acknowledgement}

This work is partially supported by the Air Force Office
of Scientific Research (AFOSR) under grant FA9550-23-1-
0208, and the National Science Foundation (NSF) under grants
IIS 2229876.
This work is supported in part by funds provided by the
National Science Foundation, by the Department of Homeland
Security, and by IBM. Any opinions, findings, and conclusions
or recommendations expressed in this material are
those of the author(s) and do not necessarily reflect the views
of the National Science Foundation or its federal agency and
industry partners.

\clearpage

\bibliography{custom}

\clearpage

\appendix

\section{Implementation Details}
\label{app:implet-detail}

\subsection{Experimental Setup}
\label{app:experimental-setup}

All experiments were conducted on a high-performance computing server equipped
with an AMD EPYC 7V12 64-Core Processor, 512 GB of system memory, 96 logical
CPUs, and eight NVIDIA A100 GPUs with 40 GB of memory each. Unless otherwise
specified, abbreviated student-model names refer to the following Hugging Face
checkpoints: Qwen2.5-0.5B-Instruct\footnote{\url{https://huggingface.co/Qwen/Qwen2.5-0.5B-Instruct}},
Qwen2.5-3B-Instruct\footnote{\url{https://huggingface.co/Qwen/Qwen2.5-3B-Instruct}},
Qwen2.5-7B-Instruct\footnote{\url{https://huggingface.co/Qwen/Qwen2.5-7B-Instruct}},
Llama-3.2-1B-Instruct\footnote{\url{https://huggingface.co/meta-llama/Llama-3.2-1B-Instruct}},
Llama-3.2-3B-Instruct\footnote{\url{https://huggingface.co/meta-llama/Llama-3.2-3B-Instruct}},
and Meta-Llama-3-8B-Instruct\footnote{\url{https://huggingface.co/meta-llama/Meta-Llama-3-8B-Instruct}}.
All student models are trained using full-parameter supervised fine-tuning
under the same optimization schedule. For \Ours{}, we use three
selection--training rounds and recompute the candidate scores using the
updated student between consecutive rounds. Because \Ours{} samples uniformly
from the lowest-score candidate group, we repeat the experiments using random
seeds 0, 1, and 2 and report the mean and standard deviation across runs.
Table~\ref{tab:training-hyperparameters} summarizes the training and teacher
generation settings.
Table~\ref{tab:data-reproducibility} reports the training-set sizes, evaluation
sets and versions, and candidate-pool statistics. For mathematical datasets,
the candidate count is the mean number of answers retained per question after
correctness verification. For Magpie, it denotes the fixed number of responses
generated per instruction. 
The common-correct subsets in Table~\ref{tab:data-reproducibility} are used
only for the controlled experiment in Figure~\ref{fig:intro_image}, where we
retain the intersection of questions answered correctly by all teachers. In
the main experiments with the 30-teacher pool, we retain the full set of
training questions and discard only incorrect candidate answers within each
question.
For the baselines, we follow prior settings and released implementations.
Specifically, for perplexity (PPL)~\citep{Bengio2003ANP} and
instruction-following difficulty (IFD)~\citep{Li2023FromQT}, we follow the
evaluation setup of \citet{Xu2025StrongerMA}; for RSR and RSR-Item, we follow
\citet{Yang2026WhichRT}; for GRACE, we follow the released implementation of
\citet{Panigrahi2025InGG}; for router-based teacher selection, we follow
\citet{Liu2026NoSB}; and for influence-function-based selection, we follow
\citet{Humane2025InfluenceFF}.
For \Ours{} scoring, we use the final-layer hidden states of the current student
to compute the representation aggregates \(\mu_A\) and \(\mu_Q\). The final
layer achieves the best answer-selection performance in our layer ablation and
is closest to the LM-head residuals motivating our forward proxy. The complete
layer-ablation results are reported in Section~\ref{subsec:layer-analysis}.
Our code is publicly available\footnote{\url{https://github.com/ppsmk388/Student-Centric-Answer-Selection}}.

\begin{table*}[t]
\centering
\caption{Training and teacher-generation settings used in the main
experiments.}
\small
\begin{tabular}{@{}p{0.27\textwidth}p{0.68\textwidth}@{}}
\toprule
\textbf{Setting} & \textbf{Value} \\
\midrule
Fine-tuning type
& Full-parameter supervised fine-tuning \\
Training framework
& LLaMA-Factory with Transformers and DeepSpeed ZeRO-3 \\
Optimizer
& AdamW using the LLaMA-Factory/Transformers defaults \\
Learning rate
& \(1.0\times10^{-5}\) \\
Learning-rate scheduler
& Cosine decay with a warmup ratio of \(0.05\) \\
Training epochs
& 3 \\
\Ours{} selection--training rounds
& 3 \\
Maximum sequence length
& 4,096 tokens \\
Random seeds
& \(0, 1, 2\) \\
Effective batch size
& 64 for Qwen2.5-0.5B and Llama-3.2-1B; 48 for Qwen2.5-3B and
Llama-3.2-3B; 16 for Qwen2.5-7B and Llama-3-8B \\
Number of candidate groups
& \(G=5\) \\
Math teacher decoding
& \texttt{temperature}$=0.7$ and \texttt{max\_tokens}$=4096$ \\
Magpie teacher decoding
& \texttt{temperature}$=0.7$ and \texttt{top\_p}$=0.9$; 10 responses
generated per instruction \\
\bottomrule
\end{tabular}
\label{tab:training-hyperparameters}
\end{table*}

\begin{table*}[t]
\centering
\small
\caption{Dataset sizes, evaluation sets, and candidate-pool statistics.
The last column reports the common-correct subsets used only in the controlled
teacher--student comparison in Figure~\ref{fig:intro_image}.}
\begin{tabular}{@{}l r p{0.32\textwidth} c c@{}}
\toprule
\textbf{Dataset}
& \textbf{Train}
& \textbf{Evaluation set/version}
& \textbf{Candidates/item}
& \textbf{Fig.~\ref{fig:intro_image} subset} \\
\midrule
GSM8K
& 7,473
& Official test split (1,319)
& 26.60
& 6,739 / 7,473 \\

MATH
& 7,500
& Official test split (5,000)
& 25.41
& 4,951 / 7,500 \\

DeepScaleR
& 39,315
& Held-out evaluation set (1,000)
& 16.85
& 11,258 / 39,315 \\

OpenR1-Math
& 50,000
& Held-out evaluation set (1,000)
& 18.17
& -- \\

Magpie
& 80,000
& IFEval official set (541); LiveBench commit
\texttt{3839eb0} (1,000)
& 10 generated
& -- \\
\bottomrule
\end{tabular}
\label{tab:data-reproducibility}
\end{table*}

\subsection{Teacher Model Pool for Mathematical Tasks}
\label{app:teacher-model-pool}

For the mathematical tasks, namely MATH~\citep{Hendrycks2021MeasuringMP}, GSM8K~\citep{Cobbe2021TrainingVT}, DeepScaleR~\citep{deepscaler2025}, and OpenR1-Math~\citep{openr1update2025}, we collect candidate answers from a pool of 30 teacher models. 
The same teacher pool is used to generate candidate answers for all mathematical datasets, and all answer-selection methods operate on the same verified candidate pool after correctness filtering. 
The complete list of teacher models is shown in Table~\ref{tab:teacher-model-pool}.

\begin{table*}[t]
\centering
\small
\caption{\textbf{Teacher Model Pool for Mathematical Candidate Generation.}
We use 30 open- and closed-source teacher models to generate candidate answers for mathematical tasks.}
\vspace{-1mm}
\begin{tabular}{lll}
\toprule
\textbf{Model} & \textbf{Model} & \textbf{Model} \\
\midrule
\texttt{gemma-3-12b-it} & \texttt{gemma-3-27b-it} & \texttt{gemma-4-31b-it} \\
\texttt{gpt-4.1\_2025-04-14} & \texttt{gpt-4o\_2024-08-06} & \texttt{gpt-4o\_2024-11-20} \\
\texttt{gpt-5-chat\_2025-08-07} & \texttt{gpt-5-chat\_2025-10-03} & \texttt{gpt-5-nano\_2025-08-07} \\
\texttt{gpt-5.1-chat\_2025-11-13} & \texttt{gpt-5.2-chat\_2025-12-11} & \texttt{gpt-5\_2025-08-07} \\
\texttt{gpt-oss-120b} & \texttt{gpt-oss-20b} & \texttt{grok-3\_1} \\
\texttt{grok-4-fast-non-reasoning\_1} & \texttt{grok-4\_1} & \texttt{llama-3.3-70b-instruct} \\
\texttt{Llama-3.1-70B-Instruct} & \texttt{mistral-3-14b-instruct} & \texttt{nvidia-llama3.3-nemotron-49b-v1.5} \\
\texttt{olmo-3.1-32b-instruct} & \texttt{phi-4-reasoning-plus} & \texttt{qwen2.5-14b-instruct} \\
\texttt{qwen2.5-32b-instruct} & \texttt{qwen2.5-72b-instruct} & \texttt{qwen2.5-coder-32b} \\
\texttt{qwen2.5-math-72b} & \texttt{qwen3-32b} & \texttt{qwen3.5-27b} \\
\bottomrule
\end{tabular}
\vspace{-2mm}
\label{tab:teacher-model-pool}
\end{table*}

\subsection{Candidate Answer Verification}
\label{app:candidate-verification}

In this work, a candidate answer is considered \emph{verified} if it passes an automatic correctness check appropriate to the target task. 
The purpose of verification is to ensure that answer selection is performed among valid supervision signals rather than confounding the selection criterion with answer correctness. 
That is, our method does not attempt to choose between correct and incorrect answers; instead, it ranks and samples among candidate answers that have already been verified as acceptable for training.

For mathematical reasoning tasks, including MATH~\citep{Hendrycks2021MeasuringMP}, GSM8K~\citep{Cobbe2021TrainingVT}, DeepScaleR~\citep{deepscaler2025}, and OpenR1-Math~\citep{openr1update2025}, candidate answers are generated by the 30-model teacher pool in Appendix~\ref{app:teacher-model-pool}. We verify them by comparing the predicted final answer with the reference answer and filter out candidates that are judged incorrect. 
Only the remaining correct candidates are retained in the candidate pool \(\mathcal A(q)\) for each question \(q\). 
This filtering step ensures that all answer-selection methods operate on the same set of verified correct mathematical solutions.

For the open-ended instruction-following task Magpie~\citep{Xu2024MagpieAD}, exact correctness verification is not well defined because there is no single reference answer. 
Therefore, instead of applying answer-level correctness filtering, we generate 10 candidate responses for each instruction using the same teacher model, Llama-3.3-70B-Instruct. 
For generating diverse responses from LLMs, we follow the configuration in \citet{Yuan2024SelfRewardingLM}, setting the sampling temperature to \(T=0.7\) and nucleus sampling probability to \(p=0.9\). 
Since all Magpie candidates are generated by the same model under the same decoding configuration, we assume that their overall response quality is comparable and that they have a similar degree of reliability for open-ended supervision. 
Under this setting, candidate selection mainly reflects student-centric learnability rather than differences in teacher identity or answer correctness.

\subsection{Introductory Teacher--Student Comparison}
\label{app:intro-figure-detail}

Figure~\ref{fig:intro_image} visualizes the relation between teacher task accuracy and downstream student accuracy in a controlled setting.
We use nine selected teacher models: {gpt-oss-120b}, {olmo-3.1-32b-instruct}, {qwen3.5-27b}, {qwen2.5-72b}, {llama-3.3-70b}, {gemma-4-31b-it}, {phi-4-reasoning}, {gpt-5-chat\_2025-10-03}, and {qwen3-32b}.
We evaluate the teachers and students on three mathematical datasets: DeepScaleR, GSM8K, and MATH.
The student models in this visualization are Qwen2.5-0.5B-Instruct, Qwen2.5-3B-Instruct, and Qwen2.5-7B-Instruct.

For the x-axis, each teacher is evaluated directly on the corresponding test set, and the resulting accuracy is used as its teacher accuracy.
For the y-axis, each teacher first generates answers for the student training set.
To control for correctness and make the teacher comparison depend on the form of supervision rather than on different error rates in the training data, we construct the largest common correct subset of training questions: an instance is retained only if all nine selected teachers generate a correct answer for it.
Each teacher then contributes its own generated answer on this same retained subset, and a Qwen2.5 student is fine-tuned on the resulting teacher-specific synthetic training set.
The plotted student accuracy is the student's test accuracy after this fine-tuning step.
This protocol ensures that all teachers are compared on the same questions and that every plotted training answer is correct.

For each dataset--student subplot, the displayed correlation coefficient is
Pearson's~\citep{PearsonVIINO} \(r\) computed over the nine teachers. Let \(x_i\) denote the test
accuracy of teacher \(i\), and let \(y_i\) denote the test accuracy of the
student fine-tuned on teacher \(i\)'s synthetic answers. We compute
\begin{equation}
r =
\frac{
\sum_{i=1}^{9} (x_i-\bar{x})(y_i-\bar{y})
}{
\sqrt{\sum_{i=1}^{9} (x_i-\bar{x})^2}
\sqrt{\sum_{i=1}^{9} (y_i-\bar{y})^2}
},
\end{equation}
where \(\bar{x}\) and \(\bar{y}\) are the means over the nine teachers.

\subsection{Learning-Cost Ablation Variants}
\label{app:learning-cost-ablation-defs}

We define the variants used in the learning-cost ablation in Section~\ref{subsec:learning-cost-ablation}. 
All variants use the same candidate-answer pool, correctness filtering, student models, and training schedule; they differ only in how candidate answers are scored or sampled.
Let \(E_{AA}(q,a;\theta)=\|\mu_A\|^2\) and \(E_{AQ}(q,a;\theta)=\mu_A^\top\mu_Q\). 
The full \Ours{} score used before grouping is
{\small
\begin{equation}
\begin{aligned}
S_{\mathrm{full}}(q,a;\theta)
&=
(1-\lambda)d_A^2 E_{AA}(q,a;\theta) \\
&\quad+
\lambda d_A d_Q E_{AQ}(q,a;\theta).
\end{aligned}
\end{equation}
}

For the token-level variant, let
\(\ell_i=-\log p_\theta(x_i\mid q,a_{<i})\) for \(i\in I_A\) and
\(\ell_j=-\log p_\theta(x_j\mid x_{<j})\) for \(j\in I_Q\). 
Instead of replacing token-level NLLs by their sequence means \(d_A\) and \(d_Q\),

\noindent \textbf{Token NLL} keeps token-level weights inside the interaction terms:
{\small
\begin{equation}
\bar v_A=\frac{1}{|I_A|}\sum_{i\in I_A}\ell_i u_i,
\qquad
\bar v_Q=\frac{1}{|I_Q|}\sum_{j\in I_Q}\ell_j u_j,
\end{equation}
}
{\small
\begin{equation}
S_{\mathrm{tok}}(q,a;\theta)
=
(1-\lambda)\|\bar v_A\|^2
+
\lambda\,\bar v_A^\top \bar v_Q .
\end{equation}
}

\noindent \textbf{NLL only} removes the embedding-side factors and uses
{\small
\begin{equation}
S_{\mathrm{NLL}}(q,a;\theta)
=
(1-\lambda)d_A^2+\lambda d_A d_Q .
\end{equation}
}

\noindent \textbf{Emb. only} removes the NLL factors and uses
{\small
\begin{equation}
\begin{aligned}
S_{\mathrm{emb}}(q,a;\theta)
&=
(1-\lambda)E_{AA}(q,a;\theta) \\
&\quad+
\lambda E_{AQ}(q,a;\theta).
\end{aligned}
\end{equation}
}
For Token NLL, NLL only, and Emb. only, we sort candidates by the corresponding score and apply the same grouping-and-sampling procedure as \Ours{}.

\noindent \textbf{Min score} keeps \(S_{\mathrm{full}}\) but removes grouping:
{\small
\begin{equation}
\tilde a_{\min}(q)
=
\operatorname*{arg\,min}_{a\in\mathcal A(q)}
S_{\mathrm{full}}(q,a;\theta_r).
\end{equation}
}
Thus Min score always selects the single lowest-score candidate answer for each question.

\subsection{Notation}
\label{subsec:notation}

Table~\ref{tab:notA} and Table~\ref{tab:notB} summarize the main notation used in the paper. Auxiliary symbols used only in appendix proofs are defined locally when introduced.

\begin{table*}[t]
\caption{\textbf{Summary of Notation (A): Data, Sequences, Student model, and Gradient Decomposition.}}
\vspace{-2mm}
\centering
\scriptsize
\renewcommand{\arraystretch}{1.05}
\setlength{\tabcolsep}{4pt}
\begin{tabular}{@{}p{0.24\textwidth}p{0.70\textwidth}@{}}
\toprule
\textbf{Symbol} & \textbf{Description} \\
\midrule
\multicolumn{2}{@{}l}{\textbf{Part I: Data and training sequences}} \\
\midrule
$\mathcal{D}=\{q_i\}_{i=1}^N$ & Training set of questions, where $N$ is the number of questions. \\
$q$ & A question or prompt from $\mathcal{D}$. \\
$\mathcal{A}(q)$ & Set of verified correct teacher-generated candidate answers for question $q$. \\
$a$ & A candidate answer in $\mathcal{A}(q)$. \\
$x=[q;\,a]$ & Training-templated sequence formed by concatenating the question and answer. \\
$x_t$ & Token at position $t$ in $x$. \\
$x_{<t}$ & Prefix tokens before position $t$. \\
$a_{<i}$ & Answer-side prefix before answer token position $i$. \\
$I=\{1,\dots,|x|\}$ & Set of token positions in $x$. \\
$I_Q$ & Token positions corresponding to the question or conditioning part. \\
$I_A$ & Token positions corresponding to the answer part. \\
$|q|, |x|, |I_Q|, |I_A|$ & Sequence or block lengths. \\
\midrule
\multicolumn{2}{@{}l}{\textbf{Part II: Student model and gradient decomposition}} \\
\midrule
$p_\theta(\cdot)$ & Autoregressive student language model. \\
$\theta$ & Student model parameters. \\
$\theta_r$ & Student parameters at selection-training round $r$. \\
$\mathbb{R}$ & Set of real numbers. \\
$h_t$ & Final-layer hidden state at token position $t$. \\
$H_\theta(x)=[h_t]_{t\in I}$ & Final-layer hidden-state sequence produced by the student on $x$. \\
$W\in\mathbb{R}^{d\times m}$ & Generic student weight matrix used in the gradient-decomposition analysis. \\
$d,m$ & Input and output dimensions of the generic weight matrix $W$. \\
$X=[z_t^\top]_{t\in I}$ & Token representations feeding into $W$, where $z_t$ is the representation at position $t$. \\
$A=XW$ & Pre-activation matrix corresponding to $W$; this local matrix $A$ is distinct from the answer set $\mathcal{A}(q)$. \\
$\mathcal{L}$ & Training loss for a sequence. \\
$G=[g_t^\top]_{t\in I}$ & Gradient of $\mathcal{L}$ with respect to $A$, where $g_t$ is the token-level gradient at position $t$. \\
$\nabla_W\mathcal{L}$ & Gradient of the loss with respect to $W$. \\
$\|\cdot\|_F$ & Frobenius norm. \\
$(\cdot)^\top$ & Vector or matrix transpose. \\
AA & Answer--answer interaction block, involving pairs $i,j\in I_A$. \\
AQ/QA & Answer--conditioning interaction block, involving one answer position and one question position. \\
QQ & Conditioning--conditioning interaction block, involving pairs $i,j\in I_Q$. \\
\bottomrule
\end{tabular}
\vspace{-2mm}
\label{tab:notA}
\end{table*}

\begin{table*}[t]
\caption{\textbf{Summary of Notation (B): Forward-only Learning-cost Proxy and Student-centric Answer Sampling.}}
\vspace{-2mm}
\centering
\scriptsize
\renewcommand{\arraystretch}{1.05}
\setlength{\tabcolsep}{4pt}
\begin{tabular}{@{}p{0.24\textwidth}p{0.70\textwidth}@{}}
\toprule
\textbf{Symbol} & \textbf{Description} \\
\midrule
\multicolumn{2}{@{}l}{\textbf{Part III: Forward-only learning-cost proxy}} \\
\midrule
$u_t=h_t/\|h_t\|$ & Normalized final-layer hidden state at token position $t$. \\
$\mu_A$ & Mean normalized hidden representation over answer positions, $\mu_A=\frac{1}{|I_A|}\sum_{i\in I_A}u_i$. \\
$\mu_Q$ & Mean normalized hidden representation over question or conditioning positions, $\mu_Q=\frac{1}{|I_Q|}\sum_{j\in I_Q}u_j$. \\
$d_A$ & Mean token negative log-likelihood (NLL) over answer positions. \\
$d_Q$ & Mean token NLL over question or conditioning positions. \\
$p_\theta(x_i\mid q,a_{<i})$ & Student probability assigned to answer token $x_i$ given the question and previous answer tokens. \\
$p_\theta(x_j\mid x_{<j})$ & Student probability assigned to conditioning token $x_j$ given its prefix. \\
$C_{AA}(q,a;\theta)$ & Answer--answer component of the proxy, $d_A^2\|\mu_A\|^2$. \\
$C_{AQ}(q,a;\theta)$ & Answer--conditioning component of the proxy, $d_A d_Q(\mu_A^\top\mu_Q)$. \\
$S(q,a;\theta)$ & Student-centric learning-cost score used to rank candidate answers. \\
$\lambda\in[0,1]$ & Trade-off parameter between $C_{AA}$ and $C_{AQ}$ in $S(q,a;\theta)$. \\
\midrule
\multicolumn{2}{@{}l}{\textbf{Part IV: Student-centric answer sampling}} \\
\midrule
\Ours{} & Student-Centric Answer Sampling, the proposed answer-selection and training procedure. \\
$R$ & Total number of selection-training rounds. \\
$r$ & Selection-training round index. \\
$M_q=|\mathcal{A}(q)|$ & Number of verified candidate answers for question $q$. \\
$a_{(k)}$ & The $k$-th candidate answer after sorting $\mathcal{A}(q)$ by increasing $S(q,a;\theta_r)$. \\
$G$ & Number of consecutive groups in \Ours{}; distinct from the gradient matrix $G$ in Table~\ref{tab:notA}. \\
$g$ & Group index, with $g=1,\dots,G$. \\
$\Pi_g(q)$ & The $g$-th group of sorted candidate answers for question $q$. \\
$\Pi_1(q)$ & Lowest-score group, from which \Ours{} samples during training. \\
$\lfloor\cdot\rfloor$ & Floor function used to define group boundaries. \\
$\tilde a(q)$ & Answer sampled for question $q$ from the lowest-score group. \\
$\mathrm{Unif}(\Pi_1(q))$ & Uniform distribution over candidates in $\Pi_1(q)$. \\
$\mathcal{D}_{\mathrm{low}}$ & Selected training set $\{(q,\tilde a(q)):q\in\mathcal{D}\}$ constructed from the lowest-score groups. \\
\midrule
\multicolumn{2}{@{}l}{\textbf{Part V: Complexity analysis}} \\
\midrule
$n_Q,n_A,n$ & Numbers of question tokens, answer tokens, and total tokens, with $n=n_Q+n_A$. \\
$F(n)$ & Time cost of one forward pass on a length-$n$ sequence. \\
$B(n)$ & Time cost of one backward pass on a length-$n$ sequence. \\
\bottomrule
\end{tabular}
\vspace{-2mm}
\label{tab:notB}
\end{table*}

\subsection{Datasets}
\label{abs:dataset}

\noindent \textbf{MATH}~\citep{Hendrycks2021MeasuringMP}. 
MATH is a benchmark of competition-style mathematical problems covering diverse subjects such as algebra, geometry, number theory, counting and probability, and precalculus. 
The problems require multi-step symbolic and quantitative reasoning, making the dataset a standard testbed for evaluating whether distilled students can learn from teacher-generated mathematical solutions. 
In our experiments, MATH is used as one of the mathematical reasoning sources for both answer-pool construction and evaluation.

\noindent \textbf{GSM8K}~\citep{Cobbe2021TrainingVT}. 
GSM8K contains grade-school math word problems paired with natural-language solutions and final numerical answers. 
Compared with MATH, GSM8K emphasizes everyday arithmetic reasoning expressed in textual form, and therefore provides a complementary test of whether selected teacher answers help students follow concise step-by-step problem solving. 
We use GSM8K as a mathematical reasoning source and evaluate student accuracy by checking the final answer.

\noindent \textbf{DeepScaleR}~\citep{deepscaler2025}. 
DeepScaleR is a reasoning-oriented dataset associated with the DeepScaleR training recipe, which studies scaling reinforcement learning for mathematical reasoning models. 
The dataset provides challenging mathematical prompts suitable for generating and verifying multiple teacher solutions. 
We use DeepScaleR as one of the mathematical reasoning sources to test whether \Ours{} remains effective on harder reasoning instances beyond standard math benchmarks.

\noindent \textbf{OpenR1-Math}~\citep{openr1update2025}. 
OpenR1-Math is a large-scale mathematical reasoning dataset released as part of the Open R1 effort. 
It aggregates and filters math problems for training reasoning models, and is intended to support open reproduction of reasoning-oriented post-training pipelines. 
In our experiments, OpenR1-Math provides an additional source of verifiable math questions for constructing candidate teacher answers and evaluating the resulting student models.

\noindent \textbf{Magpie}~\citep{Xu2024MagpieAD}. 
Magpie is an alignment-data synthesis framework that constructs instruction-following data by prompting aligned language models to produce instructions and responses. 
Unlike the mathematical datasets above, Magpie covers open-ended instruction-following scenarios where correctness is less easily captured by exact final-answer matching. 
We use Magpie as the training source for open-ended instruction tuning.

\noindent \textbf{IFEval}~\citep{Zhou2023InstructionFollowingEF}. 
IFEval evaluates instruction-following ability using prompts with explicitly checkable constraints, such as formatting requirements, keyword constraints, and structural conditions. 
Because many of its criteria can be verified automatically, it provides a controlled benchmark for measuring whether a student follows user instructions after fine-tuning. 
We use IFEval as one of the open-ended evaluation benchmarks.

\noindent \textbf{LiveBench}~\citep{White2024LiveBenchAC}. 
LiveBench is a contamination-limited benchmark designed to evaluate contemporary language models. 
It covers multiple ability categories and is intended to reduce the risk that benchmark items have appeared in model training data. 
We use LiveBench as a complementary open-ended evaluation benchmark for testing the generality of student-centric answer selection.

\subsection{Baseline Adaptation and Comparison Protocol}
\label{app:baseline-adaptation}

All methods operate on the same candidate pool \(\mathcal{A}(q)\) for each
question \(q\) and select one candidate answer per question to construct the
SFT data. The selected data are then used to train the same student model with
the same number of training examples and fine-tuning configuration. We
describe below how methods with different native granularities are adapted to
this common candidate-answer selection setting.

\noindent\textbf{RSR}~\citep{Yang2026WhichRT}.
RSR is applied as a teacher-level selection method. It selects a teacher, and
we use that teacher's verified candidate answer as supervision for each
question.

\noindent\textbf{GRACE}~\citep{Panigrahi2025InGG}.
GRACE selects the teacher best matched to the current student. For each
question, we use the verified candidate answer generated by the selected
teacher.

\noindent\textbf{RSR-Item}~\citep{Yang2026WhichRT}.
RSR-Item is the item-level variant of RSR. We apply it directly at the answer
level by scoring every candidate and selecting the preferred answer for each
question.

\noindent\textbf{PPL}~\citep{Bengio2003ANP}.
We compute perplexity for each question--answer pair and rank the candidate
answers accordingly. Because the preferred score direction is not fixed
a priori in this setting, we evaluate both low-PPL and high-PPL selection
variants.

\noindent\textbf{IFD}~\citep{Li2023FromQT}.
We compute instruction-following difficulty for each question--answer pair
and use it to rank the candidate answers. As with PPL, we evaluate both
low-score and high-score selection variants.

\noindent\textbf{Router}~\citep{Liu2026NoSB}.
We treat the candidate teacher answers as routing options. The router selects
one candidate answer for each question, which is then used as student
supervision.

\noindent\textbf{Influence}~\citep{Humane2025InfluenceFF}.
We estimate the influence of each candidate answer on the student and select
the candidate with the highest estimated utility for each question.

\section{Additional Results on Llama-3.2 Student Models}
\label{app:llama-results}

Table~\ref{tab:llama-overall-performance} reports the complete results for
Llama-3.2-1B-Instruct and Llama-3.2-3B-Instruct under the same experimental
protocol as Table~\ref{tab:overall-performance}. \Ours{} achieves the highest
mean on every benchmark and for both model-level averages. Compared with the
strongest competing average, \Ours{} improves performance by 4.18 points for
Llama-3.2-1B and 3.72 points for Llama-3.2-3B. These results demonstrate that
the effectiveness of student-centric answer selection generalizes beyond the
Qwen2.5 model family.

\begin{table*}[t]
\caption{
\textbf{Overall Performance of Answer-selection Methods on Llama-3.2 Student
Models.}
}
\label{tab:llama-overall-performance}
\vspace{-2mm}
\centering
\resizebox{\textwidth}{!}{
\begin{tabular}{@{}llccccccc@{}}
\toprule
\textbf{Model} & \textbf{Method}
& \textbf{DeepScaleR}
& \textbf{GSM8K}
& \textbf{MATH}
& \textbf{OpenR1-Math}
& \textbf{IFEval}
& \textbf{LiveBench}
& \textbf{Avg.} \\
\midrule

\multirow{8}{*}{Llama-3.2-1B}
& RSR
& 10.47 $\pm$ 0.45
& \textcolor{blue}{\textbf{52.16 $\pm$ 0.85}}
& 22.44 $\pm$ 0.26
& 8.47 $\pm$ 1.17
& 50.65 $\pm$ 0.44
& 31.60 $\pm$ 0.27
& \textcolor{blue}{\textbf{29.30}} \\

& GRACE
& 12.03 $\pm$ 0.51
& 49.53 $\pm$ 0.80
& 21.19 $\pm$ 0.38
& 9.20 $\pm$ 0.70
& 47.87 $\pm$ 0.39
& \textcolor{blue}{\textbf{31.70 $\pm$ 0.23}}
& 28.59 \\

& RSR-Item
& 10.90 $\pm$ 0.60
& 39.25 $\pm$ 1.10
& 25.18 $\pm$ 0.33
& \textcolor{blue}{\textbf{9.90 $\pm$ 0.20}}
& \textcolor{blue}{\textbf{52.60 $\pm$ 0.64}}
& 30.20 $\pm$ 0.23
& 28.01 \\

& PPL
& 8.07 $\pm$ 1.16
& 8.87 $\pm$ 0.60
& 11.68 $\pm$ 0.07
& 8.93 $\pm$ 0.25
& 52.31 $\pm$ 0.52
& 29.60 $\pm$ 0.29
& 19.91 \\

& IFD
& 11.17 $\pm$ 0.46
& 18.10 $\pm$ 1.03
& 21.14 $\pm$ 0.21
& 7.10 $\pm$ 0.56
& 50.09 $\pm$ 0.40
& 30.40 $\pm$ 0.16
& 23.00 \\

& Router
& \textcolor{blue}{\textbf{12.30 $\pm$ 1.47}}
& 12.64 $\pm$ 0.32
& 15.23 $\pm$ 0.28
& 8.70 $\pm$ 0.35
& 52.50 $\pm$ 0.88
& 29.00 $\pm$ 0.60
& 21.73 \\

& Influence
& 12.20 $\pm$ 0.36
& 35.26 $\pm$ 2.66
& \textcolor{blue}{\textbf{29.59 $\pm$ 0.15}}
& 9.26 $\pm$ 0.97
& 50.76 $\pm$ 0.72
& 31.23 $\pm$ 0.39
& 28.05 \\

\rowcolor{gray!12}
& \Ours{}
& \textcolor{red}{\textbf{14.07 $\pm$ 0.50}}
& \textcolor{red}{\textbf{55.57 $\pm$ 1.59}}
& \textcolor{red}{\textbf{32.62 $\pm$ 0.72}}
& \textcolor{red}{\textbf{11.83 $\pm$ 0.45}}
& \textcolor{red}{\textbf{53.87 $\pm$ 0.35}}
& \textcolor{red}{\textbf{32.90 $\pm$ 0.20}}
& \textcolor{red}{\textbf{33.48}} \\

\midrule

\multirow{8}{*}{Llama-3.2-3B}
& RSR
& 19.70 $\pm$ 0.78
& 77.16 $\pm$ 0.24
& 40.35 $\pm$ 0.49
& 18.20 $\pm$ 0.26
& \textcolor{blue}{\textbf{74.86 $\pm$ 0.33}}
& 40.50 $\pm$ 0.34
& 45.13 \\

& GRACE
& 23.80 $\pm$ 0.52
& \textcolor{blue}{\textbf{79.48 $\pm$ 1.21}}
& \textcolor{blue}{\textbf{44.39 $\pm$ 0.43}}
& \textcolor{blue}{\textbf{21.87 $\pm$ 0.51}}
& 74.27 $\pm$ 0.59
& 41.90 $\pm$ 0.54
& \textcolor{blue}{\textbf{47.62}} \\

& RSR-Item
& 21.77 $\pm$ 1.24
& 64.11 $\pm$ 1.48
& 38.89 $\pm$ 0.57
& 17.13 $\pm$ 0.72
& 73.86 $\pm$ 0.62
& 41.20 $\pm$ 0.31
& 42.83 \\

& PPL
& 17.83 $\pm$ 0.51
& 16.65 $\pm$ 0.87
& 18.29 $\pm$ 0.08
& 14.53 $\pm$ 0.31
& 72.16 $\pm$ 0.59
& \textcolor{blue}{\textbf{42.20 $\pm$ 0.53}}
& 30.28 \\

& IFD
& \textcolor{blue}{\textbf{26.80 $\pm$ 1.15}}
& 46.98 $\pm$ 1.13
& 29.05 $\pm$ 0.51
& 21.70 $\pm$ 0.87
& 73.38 $\pm$ 0.51
& 40.50 $\pm$ 0.26
& 39.74 \\

& Router
& 20.53 $\pm$ 0.68
& 24.39 $\pm$ 0.73
& 21.10 $\pm$ 0.59
& 16.57 $\pm$ 0.25
& 72.64 $\pm$ 0.67
& 40.60 $\pm$ 0.33
& 32.64 \\

& Influence
& 25.03 $\pm$ 0.65
& 49.84 $\pm$ 1.18
& 34.03 $\pm$ 0.58
& 21.54 $\pm$ 0.57
& 74.43 $\pm$ 0.80
& 41.30 $\pm$ 0.52
& 41.03 \\

\rowcolor{gray!12}
& \Ours{}
& \textcolor{red}{\textbf{27.37 $\pm$ 0.64}}
& \textcolor{red}{\textbf{82.45 $\pm$ 0.81}}
& \textcolor{red}{\textbf{52.47 $\pm$ 0.73}}
& \textcolor{red}{\textbf{25.37 $\pm$ 0.32}}
& \textcolor{red}{\textbf{76.57 $\pm$ 0.40}}
& \textcolor{red}{\textbf{43.80 $\pm$ 0.19}}
& \textcolor{red}{\textbf{51.34}} \\

\bottomrule
\end{tabular}
}
\vspace{-2mm}
\end{table*}

\section{Gradient Decomposition Details}
\label{app:grad-decomp}

For completeness, we derive Eqs.~\eqref{eq:grad}, \eqref{eq:grad-norm-base}, and \eqref{eq:grad-norm-pairwise-full}. Recall that $A=XW$ with $X\in\mathbb{R}^{n\times d}$, $W\in\mathbb{R}^{d\times m}$, $A\in\mathbb{R}^{n\times m}$, and that $\mathcal{L}=\mathcal{L}(A)$ is a scalar loss depending on $W$ only through $A$. We write $G=\frac{\partial \mathcal{L}}{\partial A}\in\mathbb{R}^{n\times m}$ and use the Frobenius inner product $\langle U,V\rangle_F=\mathrm{tr}(U^\top V)$.

Since $A=XW$ and $X$ is treated as constant when differentiating with respect to $W$, the differential of $A$ is
\begin{equation}
dA = X\,dW.
\end{equation}
By the chain rule,
\begin{equation}
\begin{aligned}
d\mathcal{L}
&=
\left\langle
\frac{\partial \mathcal{L}}{\partial A},\, dA
\right\rangle_F \\
&=
\mathrm{tr}\!\big(G^\top dA\big) \\
&=
\mathrm{tr}\!\big(G^\top X\,dW\big).
\end{aligned}
\end{equation}
Using the cyclic property of the trace, we obtain
\begin{equation}
\begin{aligned}
\mathrm{tr}\!\big(G^\top X\,dW\big)
&=
\mathrm{tr}\!\big((X^\top G)^\top dW\big) \\
&=
\big\langle X^\top G,\, dW \big\rangle_F .
\end{aligned}
\end{equation}
Comparing with the definition $d\mathcal{L}=\langle \nabla_W \mathcal{L}, dW\rangle_F$ yields Eq.~\eqref{eq:grad}:
\begin{equation}
\nabla_W\mathcal{L} = X^\top G.
\end{equation}
Eq.~\eqref{eq:grad-norm-base} then follows immediately:
\begin{equation}
\big\|\nabla_W\mathcal{L}\big\|_F^2 \;=\; \big\|X^\top G\big\|_F^2.
\end{equation}

To obtain Eq.~\eqref{eq:grad-norm-pairwise-full}, we expand the Frobenius norm in elementwise form. Let $z_i^\top$ and $g_i^\top$ denote the $i$-th rows of $X$ and $G$, respectively. Then $X X^\top$ and $G G^\top$ have entries $(X X^\top)_{ij}=z_i^\top z_j$ and $(G G^\top)_{ij}=g_i^\top g_j$. Using $\|M\|_F^2=\mathrm{tr}(M^\top M)$ and the trace identity $\mathrm{tr}(AB)=\sum_{i,j} A_{ij} B_{ji}$, we have
\begin{equation}
\begin{aligned}
\big\|X^\top G\big\|_F^2
&=
\mathrm{tr}\!\big(G^\top X X^\top G\big)
=
\mathrm{tr}\!\big(X X^\top G G^\top\big) \\
&=
\sum_{i,j=1}^n
(X X^\top)_{ij} (G G^\top)_{ji} \\
&=
\sum_{i,j=1}^n
(z_i^\top z_j)\,(g_i^\top g_j).
\end{aligned}
\end{equation}
which yields Eq.~\eqref{eq:grad-norm-pairwise-full}.

\section{Complexity Analysis}
\label{app:complexity}

Our analysis is for a single target student weight matrix $W\in\mathbb{R}^{d\times m}$. For a candidate sequence $x=[q;\,a]$, let $n_Q=|I_Q|$, $n_A=|I_A|$, and $n=n_Q+n_A$. Let $F(n)$ and $B(n)$ denote the time cost of one forward pass and one backward pass of the student on a length-$n$ sequence.

Exact blockwise gradient scoring implied by Eq.~\eqref{eq:block-decomp} requires both token representations and token-level backpropagated gradients, and therefore incurs one forward pass and one backward pass:
\begin{equation}
\label{eq:exact-base-cost-revise}
F(n)+B(n).
\end{equation}

A direct implementation of Eq.~\eqref{eq:block-decomp} enumerates all token pairs in the AA, AQ/QA, and QQ blocks. Since these blocks contain $n_A^2$, $n_A n_Q$, and $n_Q^2$ pairs, respectively, and each pairwise term requires $O(d+m)$ arithmetic, the additional overhead is
{\small
\begin{equation}
\label{eq:exact-naive-overhead-revise}
O\!\big((n_A^2+n_A n_Q+n_Q^2)(d+m)\big)
=
O\!\big(n^2(d+m)\big).
\end{equation}
}
Thus, under naive token-pair enumeration, the per-candidate score computation cost is
\begin{equation}
\label{eq:exact-per-candidate-naive-revise}
O\!\big(F(n)+B(n)+n^2(d+m)\big).
\end{equation}

The same exact score can be computed more efficiently in matrix form. Let $X_A\in\mathbb{R}^{n_A\times d}$ and $X_Q\in\mathbb{R}^{n_Q\times d}$ denote the answer-side and question-side rows of $X$, and let $G_A\in\mathbb{R}^{n_A\times m}$ and $G_Q\in\mathbb{R}^{n_Q\times m}$ denote the corresponding rows of $G$. Then Eq.~\eqref{eq:block-decomp} can be written as
{\small
\begin{equation}
\label{eq:block-matrix-form-revise}
\begin{aligned}
\big\|\nabla_W \mathcal L\big\|_F^2
&=
\|X_A^\top G_A\|_F^2
+
2\big\langle X_A^\top G_A,\; X_Q^\top G_Q\big\rangle_F \\
&\quad+
\|X_Q^\top G_Q\|_F^2 .
\end{aligned}
\end{equation}
}
Computing $X_A^\top G_A$ costs $O(n_A d m)$ and computing $X_Q^\top G_Q$ costs $O(n_Q d m)$; the remaining Frobenius norms and inner product cost $O(dm)$. Hence, the additional overhead beyond forward/backward is
\begin{equation}
\label{eq:exact-optimized-overhead-revise}
O(ndm),
\end{equation}
and the per-candidate score computation cost under the optimized exact implementation becomes
\begin{equation}
\label{eq:exact-per-candidate-optimized-revise}
O\!\big(F(n)+B(n)+ndm\big).
\end{equation}

We now analyze the cost of computing our proxy in Eq.~\eqref{eq:score-repeat}. This requires only one forward pass. After the forward pass, the representation aggregates in Eq.~\eqref{eq:mu-repeat} are computed by linear-time accumulation over token positions, costing $O(nd)$. The difficulty terms in Eq.~\eqref{eq:difficulty-repeat} are obtained by averaging token-level NLL values already produced by the same forward pass, which costs $O(n)$. The remaining score computation costs $O(d)$. Therefore, the per-candidate score computation cost of our proxy is
\begin{equation}
\label{eq:proxy-per-candidate-revise}
O\!\big(F(n)+nd\big).
\end{equation}

The key difference is therefore that exact blockwise scoring requires a backward pass, and even under an optimized implementation still incurs an additional $O(ndm)$ post-backward cost, whereas our proxy is forward-only and requires only linear-time post-forward aggregation. In particular, compared with the optimized exact implementation, our score replaces
\begin{equation}
O\!\big(F(n)+B(n)+ndm\big)
\end{equation}
with
\begin{equation}
O\!\big(F(n)+nd\big).
\end{equation}
This is the main computational advantage of our method for large-scale answer selection.

\section{Layer Analysis}
\label{subsec:layer-analysis}

\begin{figure}[t]
\centering
\includegraphics[width=0.95\linewidth]{Images/exp04_layer_sweep_math.pdf}
\caption{
\textbf{Effect of the Representation Layer on Math.}
}
\label{fig:layer-analysis}
\end{figure}
In our main experiments, \Ours{} uses the final-layer hidden states of the student model to compute the answer-selection score. 
This choice follows our forward-only proxy: since token probabilities are computed from final-layer representations through the language modeling head, these representations provide the most direct connection to the NLL-based difficulty terms in our score.
To examine the effect of layer choice, we conduct a layer analysis on the Math setting with Llama-3.2-3B and Qwen2.5-3B. 
For each model, we select several approximately evenly spaced layers from shallow to deep layers, and additionally include the final layer. 
All other components of the training and selection pipeline are kept unchanged. 
The results are shown in Figure~\ref{fig:layer-analysis}.
As the extraction layer becomes deeper, \Ours{} achieves consistently better performance on both student models. 
The final layer obtains the highest average accuracy for both Llama-3.2-3B and Qwen2.5-3B. 
This result supports our use of final-layer hidden states, suggesting that representations closest to the language modeling head better capture the student-centric signal needed for answer selection.

\section{Scaling Strong Teachers by Answer Union}
\label{app:teacher-pool-scaling}

We add a controlled appendix experiment to test whether simply merging answers from strong teachers can replace student-centric answer selection.
We use the same nine teacher models and the same controlled teacher--student setup as Figure~\ref{fig:intro_image}; details of the teacher set, common-correct training subset, and fine-tuning protocol are provided in Appendix~\ref{app:intro-figure-detail}.
Within this setup, we rank the nine teachers by their direct test-set performance and construct cumulative teacher pools: top-1, top-1--2, top-1--3, and so on up to top-1--9.
For each cumulative pool, we use the corresponding common-correct training subset and take the union of the selected teachers' verified answers as the synthetic supervision set.
The student is fine-tuned on this answer union for three epochs.
There is no dynamic answer re-selection in this experiment.
This setup asks whether adding more high-performing teachers, while keeping the controlled protocol of Figure~\ref{fig:intro_image}, is sufficient to match \Ours{}.

\begin{figure*}[t]
\centering
\includegraphics[width=1.0\linewidth]{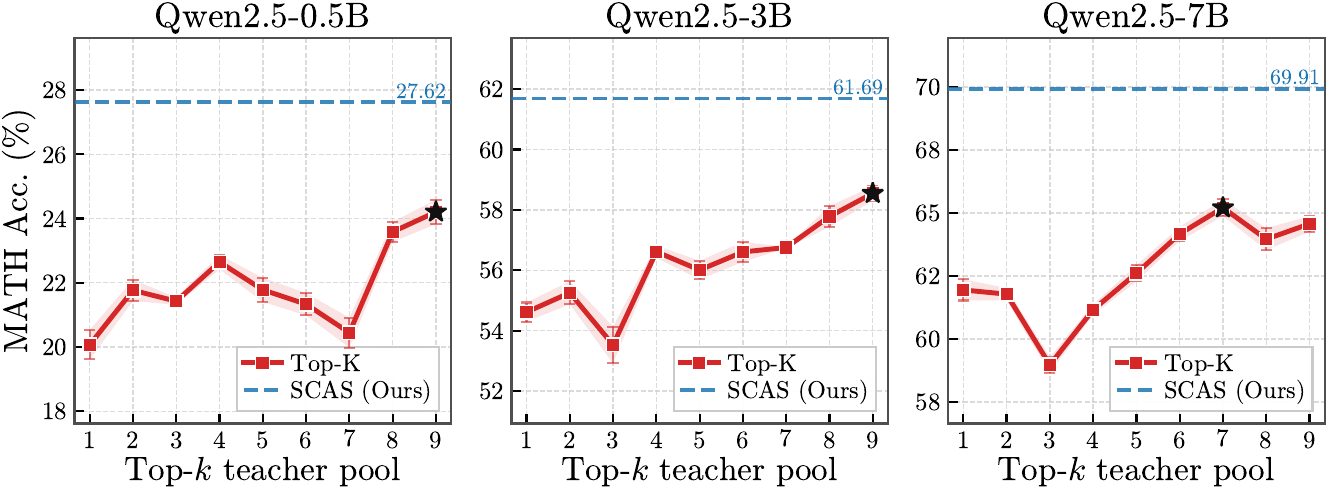}
\caption{
\textbf{Answer-Union Scaling with Strong Teachers.}
Top-$k$ merges $k$ teachers' verified answers, giving $k$ training answers per retained question; \Ours{} uses one selected answer per question.
The shaded band shows one standard deviation.
}
\label{fig:teacher-pool-scaling}
\end{figure*}

Figure~\ref{fig:teacher-pool-scaling} reports the MATH results for Qwen2.5 students.
Adding more high-performing teachers often improves over using only the single best teacher, but the gains are not enough to match \Ours{}.
For Qwen2.5-0.5B, the answer-union baseline increases from 20.07 at top-1 to 24.21 at top-9, still 3.41 points below \Ours{}.
For Qwen2.5-3B, it rises from 54.61 at top-1 to 58.55 at top-9, remaining 3.14 points below \Ours{}.
For Qwen2.5-7B, it peaks at top-7 with 65.21 and then declines, leaving a 4.70-point gap from \Ours{} even at its best point.
Importantly, the top-$k$ union baseline is not a fixed-budget comparison: it trains on $k$ verified answers for each retained question, so its answer-level training set is $k$ times larger than the one-answer-per-question supervision used by \Ours{} on the same retained questions.
Thus, even with up to $9\times$ answer-level training data, larger unions of strong-teacher answers do not recover the benefit of student-centric selection.
The result supports our central claim: what matters is not only having more correct answers from stronger teachers, but selecting the answers that better match the current student's learning state.

\section{Verification of Candidate Response Correctness and Quality}
\label{app:candidate-verification}

\noindent\textbf{Mathematical reasoning tasks.}
Candidate responses are not filtered using exact string matching. For each
teacher-generated response, an LLM verifier receives the problem, the
reference answer, and the complete candidate solution, and evaluates both the
final answer and the reasoning process. A candidate is retained only if its
final answer is mathematically equivalent to the reference answer and its
reasoning contains no substantive mathematical or logical error. Thus, a
response with a correct final answer but flawed reasoning is rejected, while
equivalent symbolic forms, simplified fractions, minor formatting differences,
and omitted routine steps are accepted.
To validate the model-based verification procedure, we randomly sampled 50
candidate solutions from each of GSM8K, MATH, DeepScaleR, and OpenR1-Math,
yielding 200 examples in total. Four annotators independently evaluated each
solution using the same final-answer and reasoning criteria, and the
majority-vote label was used as the human reference. As shown in
Table~\ref{tab:math-verifier-audit}, the LLM verifier agreed with the human
majority on 192 of 200 examples (96.0\%), with per-dataset agreement ranging
from 94.0\% to 98.0\%. These results support the reliability of the verifier
in handling mathematically equivalent answers while rejecting solutions with
substantive reasoning errors.

\noindent\textbf{Open-ended instruction tuning.}
Open-ended instructions generally do not admit a unique reference answer.
Therefore, we refer to the corresponding Magpie responses as
\emph{quality-audited candidate responses}, rather than verified-correct
answers. All candidates were generated by Llama-3.3-70B-Instruct under the
same decoding configuration. To assess their quality, we randomly sampled
1,000 responses and evaluated them using GPT-5 with a 1--5 rubric covering
instruction following, factual or logical correctness, relevance and
completeness, and clarity and coherence.
Although a score of 3 denotes a usable response with minor issues, we adopt a
more conservative threshold and count only scores of 4--5 as high quality.
GPT-5 rated 998 of the 1,000 sampled responses (99.8\%) as high quality. We
further conducted a human evaluation on a random 100-response subset. Four
annotators independently applied the same rubric, and the final human label
was determined by majority vote. Human judgments classified 98 responses as
high quality, while GPT-5 classified 99. Their binary judgments agreed on
99 of the 100 examples. These results support the overall quality of the
open-ended candidate pool and the reliability of the model-based audit.

\begin{table*}[t]
\caption{Prompt used to verify mathematical candidate solutions.}
\label{tab:prompt-math-verification}
\begin{evalpromptbox}{Evaluator prompt: Mathematical candidate verification}
\promptrole{System Prompt:}

You are an expert mathematics evaluator.

Given a math problem, a reference answer, and a candidate solution, determine
whether the candidate solution is correct.

Evaluate both the final answer and the reasoning process. A candidate should be
marked correct only if its final answer is mathematically equivalent to the
reference answer and its reasoning contains no substantive mathematical or
logical error. A candidate with a correct final answer but incorrect reasoning
must be marked incorrect.

Minor formatting differences or omitted routine steps should not be treated as
errors.

Respond with ONLY a valid JSON object:

\begin{quote}
\ttfamily\footnotesize
\{\\
\quad "correct": true,\\
\quad "model\_answer": "extracted final answer",\\
\quad "final\_answer\_correct": true,\\
\quad "reasoning\_correct": true,\\
\quad "reasoning": "brief explanation"\\
\}
\end{quote}

\promptseparator
\promptrole{User Prompt:}

\promptrole{Problem:}

\answerplaceholder{\textless problem\textgreater}

\promptrole{Reference Answer:}

\answerplaceholder{\textless reference answer\textgreater}

\promptrole{Candidate Solution:}

\answerplaceholder{\textless candidate solution\textgreater}

\promptseparator
\promptrole{Assistant Response:}
\answerplaceholder{\textless verifier JSON output\textgreater}
\end{evalpromptbox}
\end{table*}

\begin{table}[t]
\caption{Agreement between the mathematical-response verifier and
majority-vote human judgments.}
\centering
\small
\begin{tabular}{lcc}
\toprule
\textbf{Dataset} & \textbf{Samples} & \textbf{LLM--human agreement} \\
\midrule
GSM8K       & 50  & 49/50 (98.0\%) \\
MATH        & 50  & 48/50 (96.0\%) \\
DeepScaleR  & 50  & 47/50 (94.0\%) \\
OpenR1-Math & 50  & 48/50 (96.0\%) \\
\midrule
Overall     & 200 & 192/200 (96.0\%) \\
\bottomrule
\end{tabular}
\label{tab:math-verifier-audit}
\end{table}

\begin{table*}[t]
\caption{Prompt used for model-based quality audit of Magpie candidate
responses.}
\label{tab:prompt-magpie-quality-audit}
\begin{evalpromptbox}{Evaluator prompt: Magpie quality audit}
\promptrole{System Prompt:}

You are an expert evaluator of open-ended instruction-following responses.

Given a user instruction and a candidate response, evaluate the overall quality
of the candidate response according to the following criteria:

1. Instruction Following:

Whether the response correctly follows the user's explicit requirements and
intended goal.

2. Factual or Logical Correctness:

Whether the response is factually accurate and logically consistent. For
creative or subjective tasks, evaluate plausibility and internal consistency
rather than requiring a unique answer.

3. Relevance and Completeness:

Whether the response directly addresses the instruction and sufficiently
covers its important aspects.

4. Clarity and Coherence:

Whether the response is clear, understandable, and well organized.

Open-ended instructions may admit multiple valid responses and may not have a
unique reference answer. Do not penalize a response merely because it uses
different wording, organization, or another valid approach.

Use the following scoring scale:

5 = Excellent:

Fully satisfies all evaluation criteria and represents exemplary
instruction-tuning data.

4 = High quality:

Satisfies all important evaluation criteria, provides a good response, and is
suitable for instruction tuning.

3 = Good:

Suitable for instruction tuning, with minor issues that do not affect its core
correctness or usefulness.

2 = Weak:

Contains a notable error, omission, irrelevant content, or failure to satisfy
an important requirement, and is not suitable for instruction tuning.

1 = Unusable:

Contains serious factual or logical errors, is largely irrelevant or
incoherent, violates the main instruction, or contains unsafe content.

Both scores 4 and 5 indicate high-quality responses. A score of 5 should be
reserved for responses that are especially complete, precise, and well
presented.

Set ``usable\_for\_training'' to true for scores of 3, 4, or 5.
Set ``usable\_for\_training'' to false for scores of 1 or 2.

Respond with ONLY a valid JSON object in the following format:

\begin{quote}
\ttfamily\footnotesize
\{\\
\quad "score": 1,\\
\quad "usable\_for\_training": false,\\
\quad "issues": "none or a brief description of the identified issues",\\
\quad "reasoning": "a brief explanation of the evaluation"\\
\}
\end{quote}

Example:

\promptrole{Instruction:}

Write a polite two-sentence email declining a meeting because of a scheduling
conflict and suggest meeting next week instead.

\promptrole{Candidate Response:}

Thank you for the invitation, but I am unable to attend due to a scheduling
conflict. Would it be possible to reschedule the meeting for next week?

\promptrole{Evaluation:}

\begin{quote}
\ttfamily\footnotesize
\{\\
\quad "score": 5,\\
\quad "usable\_for\_training": true,\\
\quad "issues": "none",\\
\quad "reasoning": "The response follows all requested constraints and is
polite, relevant, complete, and clear."\\
\}
\end{quote}

\promptseparator
\promptrole{User Prompt:}

Evaluate the following response.

\promptrole{Instruction:}

\answerplaceholder{\textless user instruction\textgreater}

\promptrole{Candidate Response:}

\answerplaceholder{\textless candidate response\textgreater}

\promptrole{Evaluation:}

\promptseparator
\promptrole{Assistant Response:}
\answerplaceholder{\textless evaluator JSON output\textgreater}
\end{evalpromptbox}
\end{table*}

\begin{table*}[h]
\caption{Quality audit of teacher-generated Magpie responses. High and lower
denote scores of 4--5 and 1--3, respectively. The human evaluation set is a
subset of the 1,000-response model-based audit set.}
\centering
\small
\begin{tabular}{lccc}
\toprule
\textbf{Judgment source}
& \textbf{High}
& \textbf{Lower}
& \textbf{Human agreement} \\
\midrule
GPT-5 ($n=1{,}000$)
& 998 (99.8\%)
& 2 (0.2\%)
& -- \\

Human majority ($n=100$)
& 98 (98.0\%)
& 2 (2.0\%)
& -- \\

GPT-5 ($n=100$)
& 99 (99.0\%)
& 1 (1.0\%)
& 99/100 (99.0\%) \\
\bottomrule
\end{tabular}

\label{tab:magpie-quality-audit}
\end{table*}

\section{Additional Evaluation on Logical Reasoning}
\label{app:logical-reasoning}

To evaluate whether the
method generalizes beyond mathematical reasoning, we conduct an additional
experiment on logical reasoning.
We fine-tune Qwen2.5-7B-Instruct on 8,000 examples from
MiniMaxAI/SynLogic\footnote{\url{https://huggingface.co/datasets/MiniMaxAI/SynLogic}}.
The candidate-answer pool is generated by Gemma-4-31B, GPT-OSS-120B,
OLMo-3.1-32B, Phi-4-Reasoning, and Qwen3.5-27B. We evaluate the resulting
student on the Large and Medium subsets of
ZebraLogicBench\footnote{\url{https://huggingface.co/datasets/allenai/ZebraLogicBench}}.
All methods use the same candidate pool, student model, training-set size, and
fine-tuning configuration.
As shown in Table~\ref{tab:logical-reasoning}, \Ours{} outperforms the
strongest baseline by 2.1 points on the Large subset and 4.9 points on the
Medium subset. This experiment provides additional evidence that
student-centric answer selection generalizes from mathematical tasks to
logical reasoning.
\begin{table}[h]
\caption{Performance on ZebraLogicBench after fine-tuning
Qwen2.5-7B-Instruct on SynLogic. Values are reported as mean $\pm$ standard
deviation over three runs.}
\centering
\small
\begin{tabular}{lcc}
\toprule
\textbf{Method} & \textbf{Large} & \textbf{Medium} \\
\midrule
GRACE
& \textcolor{blue}{\textbf{7.2 $\pm$ 0.5}}
& 49.1 $\pm$ 0.8 \\

RSR
& 5.1 $\pm$ 0.4
& \textcolor{blue}{\textbf{51.3 $\pm$ 0.7}} \\

\rowcolor{gray!12}
\Ours{}
& \textcolor{red}{\textbf{9.3 $\pm$ 0.6}}
& \textcolor{red}{\textbf{56.2 $\pm$ 0.6}} \\
\bottomrule
\end{tabular}
\label{tab:logical-reasoning}
\end{table}

\section{Theoretical Analysis}
\label{app:theory}

In this appendix, we justify the proposed score $S(q,a;\theta)$ as a \emph{ranking-oriented forward proxy} for the answer-involving interactions induced by a candidate answer $a$ under the current student $\theta$.
Our goal is \emph{not} to reconstruct the full-model gradient norm exactly.
Instead, we derive a tractable answer-level quantity whose ordering is aligned with a retained last-layer interaction energy whenever the approximation error is smaller than the inter-candidate margin.
Consistent with the implemented score, our analysis focuses on final-layer hidden states and final LM-head interactions.

Our analysis proceeds in three stages.
First, we isolate the exact \emph{last-layer} answer-involving interactions for a fixed question--answer sequence.
Second, we derive a \emph{token-sensitive forward proxy} by replacing backward residual magnitudes with forward uncertainty signals.
Third, we show that the implemented score in the main text arises from a \emph{blockwise constant reduction} of this token-sensitive proxy, where the blockwise mean token NLL is the natural least-squares representative and also provides a denoising effect for answer-level ranking.

\subsection{Exact last-layer answer-involving interactions}
\label{app:theory-exact}

Let $x=[q;\,a]$ denote the training-templated sequence formed by concatenating the question $q$ and a verified correct candidate answer $a$.
Let $I=\{1,\dots,|x|\}$ be the token positions of $x$, and write
\[
I = I_Q \cup I_A,
\]
where $I_Q$ indexes conditioning/question tokens and $I_A$ indexes answer tokens.

Although the motivating decomposition in Section~\ref{subsec:formulation-motivation} is written for a generic weight matrix, the tractable forward proxy we derive here targets the \emph{final LM head}, where the output residuals are available from a single forward pass.
We use this last-layer quantity as the ranking target for answer selection.

Consider the final LM head
\[
W \in \mathbb{R}^{d\times V},
\]
where $V$ is the vocabulary size.
For each position $t\in I$, let $h_t\in\mathbb{R}^d$ be the final hidden state, let
\[
\begin{aligned}
p_t &:= p_\theta(\cdot \mid x_{<t}) \in \mathbb{R}^V, \\
y_t &:= x_t, \\
r_t &:= p_t - e_{y_t} \in \mathbb{R}^V .
\end{aligned}
\]
where $e_{y_t}$ is the one-hot vector of the target token.
The sequence loss is
\[
\mathcal L(x)=\sum_{t\in I} -\log p_\theta(y_t\mid x_{<t}).
\]

The gradient of the sequence loss with respect to the output head is
\begin{equation}
\label{eq:app-last-layer-grad}
\nabla_W \mathcal L(x)
=
\sum_{t\in I} h_t r_t^\top.
\end{equation}
Therefore,
\begin{equation}
\label{eq:app-last-layer-grad-norm}
\|\nabla_W \mathcal L(x)\|_F^2
=
\sum_{i,j\in I}
(h_i^\top h_j)(r_i^\top r_j).
\end{equation}

Splitting token indices into conditioning and answer positions yields
{\small
\begin{equation}
\label{eq:app-last-layer-block}
\|\nabla_W \mathcal L(x)\|_F^2
=
G_{AA}(x) + 2G_{AQ}(x) + G_{QQ}(x),
\end{equation}
}
where
\begin{align}
G_{AA}(x)
&=
\sum_{i,j\in I_A}
(h_i^\top h_j)(r_i^\top r_j), \\
G_{AQ}(x)
&=
\sum_{i\in I_A,\;j\in I_Q}
(h_i^\top h_j)(r_i^\top r_j),
\end{align}
and $G_{QQ}(x)$ is defined analogously.

Since answer selection compares multiple candidate answers for a \emph{fixed} question $q$, the terms involving answer tokens are the primary objects of interest.
At the last layer, the QQ block depends only on the conditioning prefix and is therefore invariant across candidate answers for the same $q$.
Thus, for ranking purposes, it suffices to focus on the retained answer-involving interactions.

As in the main text, we work with normalized hidden states
\[
u_t := \frac{h_t}{\|h_t\|_2}.
\]
This yields the normalized retained blocks
{\small
\begin{equation}
\label{eq:app-normalized-blocks}
\begin{aligned}
\widetilde G_{AA}(x)
&=
\frac{1}{|I_A|^2}
\sum_{i,j\in I_A}
(u_i^\top u_j)(r_i^\top r_j), \\
\widetilde G_{AQ}(x)
&=
\frac{1}{|I_A||I_Q|}
\sum_{i\in I_A,\;j\in I_Q}
(u_i^\top u_j)(r_i^\top r_j).
\end{aligned}
\end{equation}
}
We then define the retained answer-involving interaction energy
{\small
\begin{equation}
\label{eq:app-retained-energy}
\begin{aligned}
\widetilde{\mathcal G}_{\lambda^\star}(q,a;\theta)
&:=
(1-\lambda^\star)\,\widetilde G_{AA}(x)
+
\lambda^\star\,\widetilde G_{AQ}(x), \\
&\hspace{2em}
\lambda^\star\in[0,1].
\end{aligned}
\end{equation}
}
Our goal is to derive an efficient forward-only proxy for $\widetilde{\mathcal G}_{\lambda^\star}(q,a;\theta)$.

\subsection{From exact interactions to a token-sensitive forward proxy}
\label{app:theory-token-sensitive}

We first relate the residual magnitude $\|r_t\|_2$ to the token-level negative log-likelihood (NLL)
\[
\ell_t := -\log p_\theta(y_t\mid x_{<t}).
\]

\begin{lemma}[Residual magnitude is comparable to token NLL]
\label{lem:app-residual-nll}
Assume
\[
p_\theta(y_t\mid x_{<t}) \ge \tau > 0.
\]
Then
\begin{equation}
\label{eq:app-residual-nll-bound}
\tau\,\ell_t
\;\le\;
\|r_t\|_2
\;\le\;
\sqrt{2}\,\ell_t.
\end{equation}
\end{lemma}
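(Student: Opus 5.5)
The plan is to reduce both inequalities to statements about the single scalar $\pi_t := p_\theta(y_t\mid x_{<t})\in[\tau,1]$. Two facts are needed: $\|r_t\|_2$ is within a factor $\sqrt{2}$ of $1-\pi_t$, and $1-\pi_t$ is comparable to $\ell_t=-\log\pi_t$ on $[\tau,1]$.

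\emph{Step 1: sandwich $\|r_t\|_2$ by $1-\pi_t$.} Since $r_t = p_t - e_{y_t}$, expanding coordinatewise gives
\[
\|r_t\|_2^2 = (1-\pi_t)^2 + \sum_{k\neq y_t} p_t(k)^2 .
\]
Dropping the nonnegative sum yields $\|r_t\|_2 \ge 1-\pi_t$. For the upper bound, the off-target probabilities are nonnegative and sum to $1-\pi_t$. Hence $\sum_{k\neq y_t} p_t(k)^2 \le \bigl(\sum_{k\neq y_t} p_t(k)\bigr)^2 = (1-\pi_t)^2$, so $\|r_t\|_2 \le \sqrt{2}\,(1-\pi_t)$.

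\emph{Step 2: compare $1-\pi_t$ with $\ell_t$.} For the upper direction, apply the elementary inequality $\log s \le s-1$ at $s=\pi_t$. This gives $1-\pi_t \le -\log \pi_t = \ell_t$, and combining with Step 1 yields $\|r_t\|_2\le\sqrt{2}\,\ell_t$. This direction holds without the assumption on $\tau$.

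For the lower direction, write $\ell_t = \int_{\pi_t}^{1} \frac{ds}{s}$ and bound the integrand by $1/\pi_t$. This gives $\ell_t \le (1-\pi_t)/\pi_t \le (1-\pi_t)/\tau$, using $\pi_t\ge\tau$. Therefore $\tau\ell_t \le 1-\pi_t \le \|r_t\|_2$.

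The only delicate point is this lower bound. The ratio $\ell_t/(1-\pi_t)$ blows up as $\pi_t\to 0$, so some floor on the target probability is unavoidable, and this is exactly where $\tau$ enters. Everything else is routine. I would also remark that the degenerate case $\pi_t=1$ gives $r_t=0$ and $\ell_t=0$, so both inequalities hold trivially there.
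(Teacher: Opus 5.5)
Your proof is correct and follows the paper's argument step for step. The paper uses the same identity $\|r_t\|_2^2=(1-\pi_t)^2+\sum_{k\neq y_t}p_t(k)^2$ to sandwich $\|r_t\|_2$ between $1-\pi_t$ and $\sqrt{2}(1-\pi_t)$, then $-\log(1-u)\ge u$ for $1-\pi_t\le\ell_t$, and a mean-value bound for $\ell_t\le(1-\pi_t)/\tau$, which is the same estimate as your integral bound; you also supply the justification $\sum_{k\neq y_t}p_t(k)^2\le(1-\pi_t)^2$ that the paper leaves implicit.
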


\begin{proof}
Let
\[
q_t := 1-p_\theta(y_t\mid x_{<t}).
\]
Since
\[
r_t = p_t - e_{y_t},
\]
we have
\[
\begin{aligned}
\|r_t\|_2^2
&=
\big(1-p_\theta(y_t\mid x_{<t})\big)^2
+
\sum_{v\neq y_t} p_t(v)^2 \\
&=
q_t^2 + \sum_{v\neq y_t} p_t(v)^2 .
\end{aligned}
\]
Hence
\[
q_t \le \|r_t\|_2 \le \sqrt{2}\,q_t.
\]

Also,
\[
\ell_t = -\log p_\theta(y_t\mid x_{<t}) = -\log(1-q_t).
\]
Using $-\log(1-u)\ge u$ for $u\in[0,1)$ gives
\[
q_t \le \ell_t.
\]
Since $p_\theta(y_t\mid x_{<t})\ge \tau$, we have $1-q_t\ge \tau$, and by the mean value theorem,
\[
\ell_t = -\log(1-q_t)
\le
\frac{q_t}{\tau}.
\]
Therefore,
\[
\tau\,\ell_t \le q_t \le \|r_t\|_2
\]
and
\[
\|r_t\|_2 \le \sqrt{2}\,q_t \le \sqrt{2}\,\ell_t,
\]
which proves the claim.
\end{proof}

The exact retained blocks in Eq.~\eqref{eq:app-normalized-blocks} depend not only on the magnitudes $\|r_t\|_2$, but also on the \emph{pairwise alignment} between residual directions.
To make this dependence explicit, define
\[
\gamma_{ij}
:=
\frac{r_i^\top r_j}{\|r_i\|_2\,\|r_j\|_2}
\in[-1,1].
\]

We assume that within each retained block, these pairwise alignment coefficients are concentrated around a blockwise average value.

\begin{assumption}[Blockwise residual alignment concentration]
\label{assump:app-alignment}
There exist coefficients $\rho_{AA},\rho_{AQ}\in[0,1]$ and dispersion levels $\delta_{AA},\delta_{AQ}\ge 0$ such that
\[
\begin{aligned}
\gamma_{ij}
&=
\rho_{AA} + \Delta^{AA}_{ij}, \\
|\Delta^{AA}_{ij}|
&\le
\delta_{AA},
\qquad i,j\in I_A .
\end{aligned}
\]
and
\[
\begin{aligned}
\gamma_{ij}
&=
\rho_{AQ} + \Delta^{AQ}_{ij}, \\
|\Delta^{AQ}_{ij}|
&\le
\delta_{AQ},
\qquad i\in I_A,\; j\in I_Q .
\end{aligned}
\]
\end{assumption}

Under this assumption, the exact retained interactions admit a clean decomposition into a residual-magnitude-weighted forward proxy plus an explicit alignment error.

Define the residual-weighted summaries
\begin{equation}
\label{eq:app-residual-weighted-summaries}
\begin{aligned}
\bar w_A
&=
\frac{1}{|I_A|}
\sum_{i\in I_A}\|r_i\|_2\,u_i, \\
\bar w_Q
&=
\frac{1}{|I_Q|}
\sum_{j\in I_Q}\|r_j\|_2\,u_j .
\end{aligned}
\end{equation}
and the corresponding residual-weighted block scores
{\small
\begin{equation}
\label{eq:app-residual-weighted-proxy}
\widehat T_{AA}(x) := \|\bar w_A\|_2^2,
\qquad
\widehat T_{AQ}(x) := \bar w_A^\top \bar w_Q.
\end{equation}
}

\begin{proposition}[Alignment-controlled residual-weighted proxy]
\label{prop:app-alignment-controlled}
Under Assumption~\ref{assump:app-alignment},
\begin{equation}
\label{eq:app-aa-align-decomp}
\widetilde G_{AA}(x)
=
\rho_{AA}\,\widehat T_{AA}(x)
+
E_{AA}^{\mathrm{align}}(x),
\end{equation}
and
\begin{equation}
\label{eq:app-aq-align-decomp}
\widetilde G_{AQ}(x)
=
\rho_{AQ}\,\widehat T_{AQ}(x)
+
E_{AQ}^{\mathrm{align}}(x),
\end{equation}
where
{\small
\begin{equation}
\label{eq:app-align-errors-aa}
E_{AA}^{\mathrm{align}}(x)
=
\frac{1}{|I_A|^2}
\sum_{i,j\in I_A}
(u_i^\top u_j)\|r_i\|_2\|r_j\|_2\,\Delta_{ij}^{AA},
\end{equation}
}

{\small
\begin{equation}
\label{eq:app-align-errors-aq}
E_{AQ}^{\mathrm{align}}(x)
=
\frac{1}{|I_A||I_Q|}
\sum_{\substack{i\in I_A \\ j\in I_Q}}
(u_i^\top u_j)\|r_i\|_2\|r_j\|_2\,\Delta_{ij}^{AQ}.
\end{equation}
}
and satisfy
\begin{equation}
\label{eq:app-align-errors-bound-aa}
|E_{AA}^{\mathrm{align}}(x)|
\le
\delta_{AA}\,
\widehat T_{AA}^{\mathrm{abs}}(x),
\end{equation}
\begin{equation}
\label{eq:app-align-errors-bound-aq}
|E_{AQ}^{\mathrm{align}}(x)|
\le
\delta_{AQ}\,
\widehat T_{AQ}^{\mathrm{abs}}(x),
\end{equation}
with
\[
\widehat T_{AA}^{\mathrm{abs}}(x)
:=
\frac{1}{|I_A|^2}
\sum_{i,j\in I_A}
|u_i^\top u_j|\,\|r_i\|_2\|r_j\|_2,
\]
and
\[
\widehat T_{AQ}^{\mathrm{abs}}(x)
:=
\frac{1}{|I_A||I_Q|}
\sum_{i\in I_A,\;j\in I_Q}
|u_i^\top u_j|\,\|r_i\|_2\|r_j\|_2.
\]
\end{proposition}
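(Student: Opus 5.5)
The plan is a direct algebraic substitution followed by a triangle-inequality bound, handled the same way for the AA and AQ blocks. For each pair $(i,j)$ in a retained block, I would first rewrite the residual inner product in polar form as $r_i^\top r_j = \|r_i\|_2\|r_j\|_2\,\gamma_{ij}$. This needs some care when $r_i=0$ or $r_j=0$: the product is then zero whatever value $\gamma_{ij}$ takes, so the identity still holds with any choice of $\gamma_{ij}$ satisfying Assumption~\ref{assump:app-alignment}. Next, I would insert the decomposition $\gamma_{ij}=\rho_{AA}+\Delta^{AA}_{ij}$ from Assumption~\ref{assump:app-alignment} into the definition of $\widetilde G_{AA}(x)$ in Eq.~\eqref{eq:app-normalized-blocks}. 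That splits the double sum into a $\rho_{AA}$ part and a $\Delta^{AA}$ part.

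The main step is to identify the $\rho_{AA}$ part with $\rho_{AA}\widehat T_{AA}(x)$. Because $\rho_{AA}$ is constant, it factors out of the sum. What remains is
\[
\frac{1}{|I_A|^2}\sum_{i,j\in I_A}(u_i^\top u_j)\|r_i\|_2\|r_j\|_2
=\Big(\frac{1}{|I_A|}\sum_{i}\|r_i\|_2u_i\Big)^{\!\top}\Big(\frac{1}{|I_A|}\sum_{j}\|r_j\|_2u_j\Big)
=\|\bar w_A\|_2^2 .
\]
This is bilinearity of the inner product together with the definition of $\bar w_A$ in Eq.~\eqref{eq:app-residual-weighted-summaries}. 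The AQ block works the same way. The mixed sum over $i\in I_A$, $j\in I_Q$ with normalization $1/(|I_A||I_Q|)$ factors as $\bar w_A^\top\bar w_Q=\widehat T_{AQ}(x)$. In both blocks, the leftover $\Delta$ terms are exactly $E^{\mathrm{align}}_{AA}(x)$ and $E^{\mathrm{align}}_{AQ}(x)$ as written in Eqs.~\eqref{eq:app-align-errors-aa}--\eqref{eq:app-align-errors-aq}. This establishes Eqs.~\eqref{eq:app-aa-align-decomp}--\eqref{eq:app-aq-align-decomp}.

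For the bounds, I would apply the triangle inequality termwise to the error sums. Using $|\Delta_{ij}|\le\delta$ and the nonnegativity of $\|r_i\|_2\|r_j\|_2$ gives
\[
|E^{\mathrm{align}}_{AA}(x)|\le\frac{1}{|I_A|^2}\sum_{i,j}|u_i^\top u_j|\,\|r_i\|_2\|r_j\|_2\,\delta_{AA}=\delta_{AA}\widehat T^{\mathrm{abs}}_{AA}(x).
\]
The AQ bound is analogous. None of these steps is a real obstacle. The only subtle point is the degenerate-residual convention in the first step; I would address it by stating that $\gamma_{ij}$ is defined arbitrarily within the assumed range whenever a residual vanishes. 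Beyond that, the proof is a matter of keeping the normalizations $1/|I_A|^2$ and $1/(|I_A||I_Q|)$ consistent with the averaged summaries $\bar w_A$ and $\bar w_Q$.
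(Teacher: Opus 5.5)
Your proposal is correct and follows essentially the same route as the paper's proof: you write $r_i^\top r_j=\|r_i\|_2\|r_j\|_2\gamma_{ij}$, substitute $\gamma_{ij}=\rho+\Delta_{ij}$, identify the $\rho$-part as $\|\bar w_A\|_2^2$ (resp.\ $\bar w_A^\top\bar w_Q$) by bilinearity, and bound the remainder termwise using $|\Delta_{ij}|\le\delta$. Your convention for vanishing residuals is harmless but not actually needed, since softmax probabilities are strictly positive, so $r_t=p_t-e_{y_t}\neq 0$ whenever $V\ge 2$.
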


\begin{proof}
Substituting
\[
r_i^\top r_j
=
\|r_i\|_2\,\|r_j\|_2\,\gamma_{ij}
\]
into Eq.~\eqref{eq:app-normalized-blocks} and using Assumption~\ref{assump:app-alignment} gives
{\small
\[
\widetilde G_{AA}(x)
=
\frac{1}{|I_A|^2}
\sum_{i,j\in I_A}
(u_i^\top u_j)\|r_i\|_2\|r_j\|_2(\rho_{AA}+\Delta_{ij}^{AA}).
\]
}
The term involving $\rho_{AA}$ is exactly
\[
\rho_{AA}
\left\|
\frac{1}{|I_A|}\sum_{i\in I_A}\|r_i\|_2\,u_i
\right\|_2^2
=
\rho_{AA}\,\widehat T_{AA}(x),
\]
and the remainder is $E_{AA}^{\mathrm{align}}(x)$.
The AQ case is identical.
The absolute-value bounds follow from $|\Delta_{ij}^{B}|\le \delta_B$ and the triangle inequality.
\end{proof}

Proposition~\ref{prop:app-alignment-controlled} isolates the first source of approximation error: \emph{alignment dispersion within the retained blocks}.
The remaining step is to replace residual magnitudes $\|r_t\|_2$ by forward uncertainty signals.
By Lemma~\ref{lem:app-residual-nll}, token NLL provides such a signal up to bounded positive distortion.

Motivated by this, define the token-sensitive forward summaries
\begin{equation}
\label{eq:app-token-weighted-summaries}
\bar v_A
=
\frac{1}{|I_A|}\sum_{i\in I_A}\ell_i\,u_i,
\qquad
\bar v_Q
=
\frac{1}{|I_Q|}\sum_{j\in I_Q}\ell_j\,u_j,
\end{equation}
and the corresponding token-sensitive forward proxies
\begin{equation}
\label{eq:app-token-sensitive-proxy}
T_{AA}(x) := \|\bar v_A\|_2^2,
\qquad
T_{AQ}(x) := \bar v_A^\top \bar v_Q.
\end{equation}

To keep the subsequent reduction explicit, define the distortion incurred by replacing residual magnitudes with token NLL by
\begin{equation}
\label{eq:app-nll-distortion-def}
\begin{aligned}
\varepsilon_{AA}^{\mathrm{nll}}(x)
&:=
\left|
\widehat T_{AA}(x)-\kappa_{AA}T_{AA}(x)
\right|, \\
\varepsilon_{AQ}^{\mathrm{nll}}(x)
&:=
\left|
\widehat T_{AQ}(x)-\kappa_{AQ}T_{AQ}(x)
\right| .
\end{aligned}
\end{equation}
for some positive blockwise scale coefficients $\kappa_{AA},\kappa_{AQ}>0$.
These coefficients absorb the bounded positive distortion between $\|r_t\|_2$ and $\ell_t$, while $\varepsilon_{AA}^{\mathrm{nll}}$ and $\varepsilon_{AQ}^{\mathrm{nll}}$ capture the residual discrepancy due to within-block variation of that distortion.

Thus, after the alignment-controlled reduction in Proposition~\ref{prop:app-alignment-controlled}, the natural forward-only object is the token-sensitive proxy $(T_{AA},T_{AQ})$.
We next show why the implemented score in the main text uses \emph{blockwise mean} token NLL rather than token-specific uncertainty weights.

\subsection{Blockwise constant reduction and why mean token NLL}
\label{app:theory-mean-field}

Recall the blockwise mean representations
\begin{equation}
\label{eq:app-block-means}
\mu_A
=
\frac{1}{|I_A|}\sum_{i\in I_A}u_i,
\qquad
\mu_Q
=
\frac{1}{|I_Q|}\sum_{j\in I_Q}u_j,
\end{equation}
and the blockwise mean NLLs
\begin{equation}
\label{eq:app-block-mean-nll}
d_A
=
\frac{1}{|I_A|}\sum_{i\in I_A}\ell_i,
\qquad
d_Q
=
\frac{1}{|I_Q|}\sum_{j\in I_Q}\ell_j.
\end{equation}

We first formalize why the blockwise mean is the natural answer-level summary of token uncertainty.

\begin{lemma}[Optimal blockwise constant approximation]
\label{lem:app-best-constant}
The blockwise means $d_A$ and $d_Q$ are the unique minimizers of the within-block squared approximation errors:
\[
\begin{aligned}
d_A
&=
\operatorname*{arg\,min}_{c\in\mathbb R}
\frac{1}{|I_A|}
\sum_{i\in I_A}(\ell_i-c)^2, \\
d_Q
&=
\operatorname*{arg\,min}_{c\in\mathbb R}
\frac{1}{|I_Q|}
\sum_{j\in I_Q}(\ell_j-c)^2 .
\end{aligned}
\]
\end{lemma}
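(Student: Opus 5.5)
The proof is the standard bias--variance computation for a one-dimensional quadratic, carried out once for a generic finite block and then specialized to $I_A$ and $I_Q$. Fix a nonempty index set $B\in\{I_A,I_Q\}$ with token NLLs $\ell_t$, $t\in B$. Define
\[
f_B(c):=\frac{1}{|B|}\sum_{t\in B}(\ell_t-c)^2 ,
\]
and let $\bar\ell_B:=\frac{1}{|B|}\sum_{t\in B}\ell_t$. By Eq.~\eqref{eq:app-block-mean-nll}, $\bar\ell_B$ equals $d_A$ when $B=I_A$ and $d_Q$ when $B=I_Q$. The whole argument rests on showing $\bar\ell_B$ is the unique minimizer of $f_B$.

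The key step is to add and subtract $\bar\ell_B$ inside each square:
\[
f_B(c)=\frac{1}{|B|}\sum_{t\in B}(\ell_t-\bar\ell_B)^2+2(\bar\ell_B-c)\,\frac{1}{|B|}\sum_{t\in B}(\ell_t-\bar\ell_B)+(\bar\ell_B-c)^2 .
\]
The cross term vanishes because, by definition of the mean, $\sum_{t\in B}(\ell_t-\bar\ell_B)=0$. What remains is
\[
f_B(c)=f_B(\bar\ell_B)+(c-\bar\ell_B)^2 .
\]
This identity settles both claims at once. First, $f_B(c)\ge f_B(\bar\ell_B)$ for every $c\in\mathbb R$, so $\bar\ell_B$ is a minimizer. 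Second, equality holds iff $c=\bar\ell_B$, so the minimizer is unique.

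The same conclusion can be cross-checked by calculus: $f_B$ is strictly convex with $f_B''(c)=2>0$, and $f_B'(c)=-\frac{2}{|B|}\sum_{t\in B}(\ell_t-c)$ vanishes only at $c=\bar\ell_B$. I would present the algebraic identity in the proof, since it also gives the within-block variance as the explicit approximation residual $f_B(\bar\ell_B)$, which may be useful for bounding the subsequent blockwise-constant error.

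There is no real obstacle. The only point to state explicitly is that the blocks are nonempty, $|I_A|,|I_Q|\ge 1$, so the averages are well defined; this holds because both the question and the answer contain at least one token.
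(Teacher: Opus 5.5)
Your proof is correct, but your main argument differs from the paper's. The paper differentiates $f_A(c)=\frac{1}{|I_A|}\sum_{i\in I_A}(\ell_i-c)^2$, solves $f_A'(c)=0$ to get the unique stationary point $c=d_A$, and concludes from $f_A''(c)=2>0$; that is exactly the calculus argument you relegate to a cross-check. Your completing-the-square identity $f_B(c)=f_B(\bar\ell_B)+(c-\bar\ell_B)^2$ is more elementary and self-contained. It gives global minimality and uniqueness in one line. The paper instead relies implicitly on the fact that a stationary point of an everywhere strictly convex function on $\mathbb{R}$ is its unique global minimizer. Your identity also gives the minimum value explicitly as the within-block variance $\sigma_A^2$ (resp.\ $\sigma_Q^2$). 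The paper introduces this variance separately in Eq.~\eqref{eq:app-block-nll-variance} and then uses it as the error scale in Lemma~\ref{lem:app-mean-field}, so your version connects the two results directly, whereas the paper's equally short route leaves that link unstated. Your remark that $|I_A|,|I_Q|\ge 1$ is needed for the averages to be defined is a small point the paper does not state.
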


\begin{proof}
For the answer block,
\[
f_A(c)
=
\frac{1}{|I_A|}\sum_{i\in I_A}(\ell_i-c)^2.
\]
Differentiating gives
\[
f_A'(c)
=
-\frac{2}{|I_A|}\sum_{i\in I_A}(\ell_i-c),
\]
so the unique stationary point satisfies
\[
c=\frac{1}{|I_A|}\sum_{i\in I_A}\ell_i=d_A.
\]
Since $f_A''(c)=2>0$, this stationary point is the unique minimizer.
The proof for $d_Q$ is identical.
\end{proof}

Lemma~\ref{lem:app-best-constant} shows that replacing token-level uncertainty weights by $d_A$ and $d_Q$ is not an arbitrary heuristic.
It is the least-squares optimal blockwise constant reduction of token-level uncertainty.
This reduction is also natural for answer selection, which operates at the \emph{answer level} rather than at the level of token-wise uncertainty allocation.
In addition, averaging suppresses idiosyncratic token-level spikes, making the resulting answer ranking more robust to rare tokens, tokenization artifacts, and other localized effects that may not be stable across semantically equivalent candidates.

Let
\begin{equation}
\label{eq:app-block-nll-variance}
\begin{aligned}
\sigma_A^2
&=
\frac{1}{|I_A|}
\sum_{i\in I_A}(\ell_i-d_A)^2, \\
\sigma_Q^2
&=
\frac{1}{|I_Q|}
\sum_{j\in I_Q}(\ell_j-d_Q)^2 .
\end{aligned}
\end{equation}
denote the within-block NLL variances.

We now define the blockwise quantities used in the main text:
\begin{equation}
\label{eq:app-implemented-blocks}
\begin{aligned}
C_{AA}(q,a;\theta)
&=
d_A^2\|\mu_A\|_2^2, \\
C_{AQ}(q,a;\theta)
&=
d_A d_Q\,(\mu_A^\top\mu_Q).
\end{aligned}
\end{equation}

The next lemma formalizes the approximation error incurred by reducing the token-sensitive proxy to these blockwise quantities.

\begin{lemma}[Mean-field reduction error]
\label{lem:app-mean-field}
We have
{\small
\begin{equation}
\label{eq:app-va-bound}
\begin{aligned}
\|\bar v_A - d_A\mu_A\|_2
&\le
\sigma_A, \\
\|\bar v_Q - d_Q\mu_Q\|_2
&\le
\sigma_Q.
\end{aligned}
\end{equation}
}
Consequently,
{\small
\begin{equation}
\label{eq:app-aa-error}
\left|
T_{AA}(x) - C_{AA}(q,a;\theta)
\right|
\le
2d_A\sigma_A + \sigma_A^2,
\end{equation}
}
and
{\small
\begin{equation}
\label{eq:app-aq-error}
\left|
T_{AQ}(x) - C_{AQ}(q,a;\theta)
\right|
\le
d_Q\sigma_A + d_A\sigma_Q + \sigma_A\sigma_Q.
\end{equation}
}
\end{lemma}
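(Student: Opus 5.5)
The first bound comes from writing the difference of the two summaries as a weighted average of unit vectors with centred weights. Since $\frac{1}{|I_A|}\sum_{i\in I_A} d_A u_i = d_A\mu_A$, we have
\[
\bar v_A - d_A\mu_A = \frac{1}{|I_A|}\sum_{i\in I_A}(\ell_i-d_A)\,u_i .
\]
The triangle inequality and $\|u_i\|_2=1$ then give
\[
\|\bar v_A - d_A\mu_A\|_2 \le \frac{1}{|I_A|}\sum_{i\in I_A}|\ell_i-d_A| .
\]
Jensen (equivalently Cauchy--Schwarz against the uniform weights) bounds this by $\big(\frac{1}{|I_A|}\sum_{i\in I_A}(\ell_i-d_A)^2\big)^{1/2}=\sigma_A$. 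The $Q$ block is handled identically. This step is routine and uses only the normalization $u_t=h_t/\|h_t\|_2$ together with the definitions in Eqs.~\eqref{eq:app-block-means}--\eqref{eq:app-block-nll-variance}.

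For the consequences, I will write $\bar v_A = d_A\mu_A + e_A$ and $\bar v_Q = d_Q\mu_Q + e_Q$, where $\|e_A\|_2\le\sigma_A$ and $\|e_Q\|_2\le\sigma_Q$. Expanding the AA proxy gives
\[
T_{AA}=\|d_A\mu_A+e_A\|_2^2 = d_A^2\|\mu_A\|_2^2 + 2d_A\mu_A^\top e_A + \|e_A\|_2^2 .
\]
Since $\mu_A$ is an average of unit vectors, $\|\mu_A\|_2\le 1$; the same holds for $\mu_Q$. Also $d_A\ge 0$, because token NLLs are nonnegative. Cauchy--Schwarz therefore gives
\[
|T_{AA}-C_{AA}|\le 2d_A\sigma_A+\sigma_A^2 .
\]
For the AQ proxy, I expand the bilinear form:
\[
T_{AQ} = d_Ad_Q\,\mu_A^\top\mu_Q + d_A\,\mu_A^\top e_Q + d_Q\,e_A^\top\mu_Q + e_A^\top e_Q .
\]
Bounding each cross term the same way yields $d_A\sigma_Q + d_Q\sigma_A + \sigma_A\sigma_Q$, which matches the stated bound.

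There is no real obstacle here. The only points to state explicitly are $\|\mu_A\|_2,\|\mu_Q\|_2\le 1$ and the nonnegativity of $d_A,d_Q$. Without these, the cross-term coefficients would carry extra factors or absolute values.
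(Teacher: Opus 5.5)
Your proposal is correct and follows the paper's proof. The first pair of bounds uses the same centred-weight identity with the triangle inequality and Jensen. For the consequences, your full expansion of $\|d_A\mu_A+e_A\|_2^2$ and of $\bar v_A^\top\bar v_Q$ is the unfactored form of the paper's difference-of-squares and add-and-subtract steps, which use $\|\bar v_A\|_2\le d_A+\sigma_A$ and $\|\bar v_Q\|_2\le d_Q+\sigma_Q$; both routes give identical constants under $\|\mu_A\|_2,\|\mu_Q\|_2\le 1$ and $d_A,d_Q\ge 0$, with the last fact used only implicitly by the paper.
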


\begin{proof}
For the answer block,
\[
\bar v_A - d_A\mu_A
=
\frac{1}{|I_A|}\sum_{i\in I_A}(\ell_i-d_A)u_i.
\]
Using $\|u_i\|_2=1$ and Jensen's inequality,
\[
\begin{aligned}
\|\bar v_A - d_A\mu_A\|_2
&\le
\frac{1}{|I_A|}
\sum_{i\in I_A}|\ell_i-d_A| \\
&\le
\left(
\frac{1}{|I_A|}
\sum_{i\in I_A}(\ell_i-d_A)^2
\right)^{1/2}
=
\sigma_A .
\end{aligned}
\]
The bound for $\bar v_Q$ is identical.

For Eq.~\eqref{eq:app-aa-error}, write
\[
T_{AA}(x) = \|\bar v_A\|_2^2.
\]
Then
\[
\begin{aligned}
&\left|
\|\bar v_A\|_2^2 - d_A^2\|\mu_A\|_2^2
\right| \\
&=
\left|
(\bar v_A-d_A\mu_A)^\top(\bar v_A+d_A\mu_A)
\right| \\
&\le
\|\bar v_A-d_A\mu_A\|_2
\big(\|\bar v_A\|_2 + d_A\|\mu_A\|_2\big).
\end{aligned}
\]
Since $\|\mu_A\|_2\le 1$ and
\[
\|\bar v_A\|_2
\le
\|d_A\mu_A\|_2 + \|\bar v_A-d_A\mu_A\|_2
\le
d_A + \sigma_A,
\]
we obtain
\[
\begin{aligned}
\left|
T_{AA}(x) - C_{AA}(q,a;\theta)
\right|
&\le
\sigma_A(2d_A+\sigma_A) \\
&=
2d_A\sigma_A+\sigma_A^2 .
\end{aligned}
\]

For Eq.~\eqref{eq:app-aq-error}, add and subtract $d_A\mu_A^\top \bar v_Q$:
\[
\begin{aligned}
\left|
\bar v_A^\top \bar v_Q
-
d_A d_Q\,\mu_A^\top\mu_Q
\right|
&\le
\left|
(\bar v_A-d_A\mu_A)^\top \bar v_Q
\right| \\
&\quad+
\left|
d_A\mu_A^\top(\bar v_Q-d_Q\mu_Q)
\right| .
\end{aligned}
\]
Using $\|\mu_A\|_2\le 1$, $\|\bar v_Q\|_2\le d_Q+\sigma_Q$, and Eq.~\eqref{eq:app-va-bound},
\[
\begin{aligned}
&\left|
T_{AQ}(x) - C_{AQ}(q,a;\theta)
\right| \\
&\quad\le
\sigma_A(d_Q+\sigma_Q) + d_A\sigma_Q \\
&\quad=
d_Q\sigma_A + d_A\sigma_Q + \sigma_A\sigma_Q .
\end{aligned}
\]
This proves the claim.
\end{proof}

Lemma~\ref{lem:app-mean-field} shows that the implemented score is obtained by a controlled blockwise reduction of the token-sensitive forward proxy.
The error terms are governed by the within-block uncertainty dispersions $\sigma_A$ and $\sigma_Q$.
When token-level uncertainty is relatively homogeneous within each block, the sequence-level mean NLL provides a tight summary.
When it is heterogeneous, the reduction trades some fidelity for improved answer-level robustness.

\subsection{A combined proxy view and a ranking guarantee}
\label{app:theory-final}

We now combine the previous steps.

Let
\begin{equation}
\label{eq:app-mf-errors-def}
\begin{aligned}
\varepsilon_{AA}^{\mathrm{mf}}(q,a;\theta)
&:=
2d_A\sigma_A+\sigma_A^2, \\
\varepsilon_{AQ}^{\mathrm{mf}}(q,a;\theta)
&:=
d_Q\sigma_A + d_A\sigma_Q + \sigma_A\sigma_Q .
\end{aligned}
\end{equation}

\begin{theorem}[Blockwise reduced forward proxy]
\label{thm:app-combined}
Under Assumption~\ref{assump:app-alignment}, there exist nonnegative coefficients $\alpha,\beta\ge 0$ such that
{\small
\begin{equation}
\label{eq:app-combined-bound}
\begin{aligned}
&\left|
\widetilde{\mathcal G}_{\lambda^\star}(q,a;\theta)
-
\Big(
\alpha\,C_{AA}(q,a;\theta)
+
\beta\,C_{AQ}(q,a;\theta)
\Big)
\right| \\
&\quad\le
\varepsilon_{\mathrm{tot}}(q,a;\theta),
\end{aligned}
\end{equation}
}
where
{\small
\begin{align}
\label{eq:app-total-error}
\varepsilon_{\mathrm{tot}}(q,a;\theta)
&=
(1-\lambda^\star)
\Big(
\varepsilon_{AA}^{\mathrm{align}}(x)
+
|\rho_{AA}|\,\varepsilon_{AA}^{\mathrm{nll}}(x)
\nonumber\\
&\qquad\qquad\qquad
+
|\rho_{AA}|\,\kappa_{AA}\,
\varepsilon_{AA}^{\mathrm{mf}}(q,a;\theta)
\Big)
\nonumber\\
&\quad+
\lambda^\star
\Big(
\varepsilon_{AQ}^{\mathrm{align}}(x)
+
|\rho_{AQ}|\,\varepsilon_{AQ}^{\mathrm{nll}}(x)
\nonumber\\
&\qquad\qquad\qquad
+
|\rho_{AQ}|\,\kappa_{AQ}\,
\varepsilon_{AQ}^{\mathrm{mf}}(q,a;\theta)
\Big),
\end{align}
}
with
{\small
\[
\begin{aligned}
\varepsilon_{AA}^{\mathrm{align}}(x)
&:=
|E_{AA}^{\mathrm{align}}(x)|, \\
\varepsilon_{AQ}^{\mathrm{align}}(x)
&:=
|E_{AQ}^{\mathrm{align}}(x)|.
\end{aligned}
\]
}
\end{theorem}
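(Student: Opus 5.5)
We prove the bound by exhibiting the coefficients explicitly. We set
\[
\alpha := (1-\lambda^\star)\rho_{AA}\kappa_{AA},\qquad \beta := \lambda^\star\rho_{AQ}\kappa_{AQ}.
\]
Both are nonnegative because $\lambda^\star\in[0,1]$, $\rho_{AA},\rho_{AQ}\in[0,1]$ by Assumption~\ref{assump:app-alignment}, and $\kappa_{AA},\kappa_{AQ}>0$. The argument then telescopes the three approximation layers developed earlier, one block at a time, and recombines them with the weights $1-\lambda^\star$ and $\lambda^\star$.

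For the AA block we write
\[
\begin{aligned}
\widetilde G_{AA}-\rho_{AA}\kappa_{AA}C_{AA}
&= \big(\widetilde G_{AA}-\rho_{AA}\widehat T_{AA}\big)
+\rho_{AA}\big(\widehat T_{AA}-\kappa_{AA}T_{AA}\big)\\
&\quad+\rho_{AA}\kappa_{AA}\big(T_{AA}-C_{AA}\big).
\end{aligned}
\]
Each piece is then bounded by an earlier result.
\begin{itemize}
\item The first term equals $E_{AA}^{\mathrm{align}}(x)$ by Proposition~\ref{prop:app-alignment-controlled}. Its absolute value is, by definition, $\varepsilon_{AA}^{\mathrm{align}}(x)$.
\item The second term has absolute value $|\rho_{AA}|\,\varepsilon_{AA}^{\mathrm{nll}}(x)$, directly from the definition in Eq.~\eqref{eq:app-nll-distortion-def}.
\item The third term is bounded by $|\rho_{AA}|\kappa_{AA}\,\varepsilon^{\mathrm{mf}}_{AA}$ using Lemma~\ref{lem:app-mean-field}, Eq.~\eqref{eq:app-aa-error}.
\end{itemize}
The triangle inequality then gives the AA bracket of Eq.~\eqref{eq:app-total-error}. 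The AQ block is handled identically, using Eq.~\eqref{eq:app-aq-decomp} and Eq.~\eqref{eq:app-aq-error}.

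To finish, we expand $\widetilde{\mathcal G}_{\lambda^\star}-(\alpha C_{AA}+\beta C_{AQ})$ as $(1-\lambda^\star)$ times the AA difference plus $\lambda^\star$ times the AQ difference. One more triangle inequality, using that both weights are nonnegative, yields exactly $\varepsilon_{\mathrm{tot}}$.

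There is no substantial obstacle here; the step needing the most care is bookkeeping. First, the $\kappa$ factor must multiply the mean-field error and not the NLL distortion, matching how $\varepsilon^{\mathrm{nll}}$ was defined. Second, $|\rho|$ must appear rather than $\rho$, although the two coincide under the assumption. Third, the inequality must be stated per candidate: the coefficients $\alpha,\beta$ depend on the fixed $\lambda^\star$, $\rho$, and $\kappa$, and the ranking interpretation requires these to be shared across candidates for the same question.
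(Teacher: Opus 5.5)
Your proof is correct and follows the paper's argument: the same choice $\alpha=(1-\lambda^\star)\rho_{AA}\kappa_{AA}$ and $\beta=\lambda^\star\rho_{AQ}\kappa_{AQ}$, the same three-layer telescoping in each block (alignment error from Proposition~\ref{prop:app-alignment-controlled}, NLL distortion by definition, mean-field error from Lemma~\ref{lem:app-mean-field}), and a final triangle inequality using the nonnegative weights $1-\lambda^\star$ and $\lambda^\star$. One cosmetic fix: the AQ alignment identity you cite should be Eq.~\eqref{eq:app-aq-align-decomp}, which is part of Proposition~\ref{prop:app-alignment-controlled}.
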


\begin{proof}
From Proposition~\ref{prop:app-alignment-controlled},
\[
\begin{aligned}
\widetilde G_{AA}(x)
&=
\rho_{AA}\widehat T_{AA}(x)
+
E_{AA}^{\mathrm{align}}(x), \\
\widetilde G_{AQ}(x)
&=
\rho_{AQ}\widehat T_{AQ}(x)
+
E_{AQ}^{\mathrm{align}}(x).
\end{aligned}
\]
By the definition of $\varepsilon_{AA}^{\mathrm{nll}}$ and $\varepsilon_{AQ}^{\mathrm{nll}}$ in Eq.~\eqref{eq:app-nll-distortion-def},
\[
\begin{aligned}
\widehat T_{AA}(x)
&=
\kappa_{AA}T_{AA}(x)
+
\Delta_{AA}^{\mathrm{nll}}(x), \\
|\Delta_{AA}^{\mathrm{nll}}(x)|
&\le
\varepsilon_{AA}^{\mathrm{nll}}(x).
\end{aligned}
\]
and analogously for the AQ block.
Finally, Lemma~\ref{lem:app-mean-field} gives
\[
\begin{aligned}
T_{AA}(x)
&=
C_{AA}(q,a;\theta)
+
\Delta_{AA}^{\mathrm{mf}}(q,a;\theta), \\
|\Delta_{AA}^{\mathrm{mf}}(q,a;\theta)|
&\le
\varepsilon_{AA}^{\mathrm{mf}}(q,a;\theta).
\end{aligned}
\]
and analogously for the AQ block.
Substituting these relations into Eq.~\eqref{eq:app-retained-energy} and collecting terms yields Eq.~\eqref{eq:app-combined-bound}, with
{\small
\[
\alpha=(1-\lambda^\star)\rho_{AA}\kappa_{AA},
\qquad
\beta=\lambda^\star\rho_{AQ}\kappa_{AQ}.
\]
}
\end{proof}

Theorem~\ref{thm:app-combined} gives the precise interpretation of the main-text score:
it is a \emph{blockwise reduced forward proxy} for the retained last-layer answer-involving interaction energy.
The total approximation error decomposes into three sources:
(i) \emph{alignment error}, arising from residual-direction dispersion within each retained block;
(ii) \emph{NLL-scale distortion}, arising from replacing residual magnitudes by token NLL;
and (iii) \emph{mean-field reduction error}, arising from replacing token-level uncertainty allocation by its optimal blockwise constant approximation.

Because $\alpha$ and $\beta$ are nonnegative, whenever $\alpha+\beta>0$ the proxy in Eq.~\eqref{eq:app-combined-bound} can be rescaled into the same nonnegative-mixture form as the implemented score in the main text:
\[
\begin{aligned}
S(q,a;\theta)
&=
(1-\lambda)\,C_{AA}(q,a;\theta)
+
\lambda\,C_{AQ}(q,a;\theta), \\
&\hspace{2em}
\lambda\in[0,1].
\end{aligned}
\]
In practice, $\lambda$ absorbs the relative blockwise scale between the AA and AQ terms.

The relevance of this proxy to answer selection is captured by the following ranking corollary.

\begin{corollary}[Margin-based ranking preservation]
\label{cor:app-ranking}
Fix a question $q$ and two verified candidate answers $a,b\in\mathcal A(q)$.
Suppose
\begin{equation}
\label{eq:app-ranking-margin}
\begin{aligned}
&\widetilde{\mathcal G}_{\lambda^\star}(q,a;\theta)
+
\varepsilon_{\mathrm{tot}}(q,a;\theta) \\
&\quad<
\widetilde{\mathcal G}_{\lambda^\star}(q,b;\theta)
-
\varepsilon_{\mathrm{tot}}(q,b;\theta).
\end{aligned}
\end{equation}
Then
\[
\begin{aligned}
&\alpha\,C_{AA}(q,a;\theta)
+
\beta\,C_{AQ}(q,a;\theta) \\
&\quad<
\alpha\,C_{AA}(q,b;\theta)
+
\beta\,C_{AQ}(q,b;\theta).
\end{aligned}
\]
Equivalently, whenever $\alpha+\beta>0$, after normalizing the nonnegative coefficients $\alpha$ and $\beta$, the implemented blockwise proxy preserves the ordering of $a$ and $b$.
\end{corollary}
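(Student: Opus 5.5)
The corollary follows from Theorem~\ref{thm:app-combined} by a sandwich argument, so I plan to apply that theorem once to each candidate and chain the resulting inequalities. Write $P(q,c;\theta):=\alpha\,C_{AA}(q,c;\theta)+\beta\,C_{AQ}(q,c;\theta)$ for $c\in\{a,b\}$. One point must be settled first: the coefficients $\alpha,\beta$ have to be the \emph{same} for $a$ and $b$. In the proof of Theorem~\ref{thm:app-combined} they are $\alpha=(1-\lambda^\star)\rho_{AA}\kappa_{AA}$ and $\beta=\lambda^\star\rho_{AQ}\kappa_{AQ}$. I will therefore read Assumption~\ref{assump:app-alignment} and the scale coefficients $\kappa_{AA},\kappa_{AQ}$ in \eqref{eq:app-nll-distortion-def} as fixed for the question $q$ and the student $\theta$, and not per candidate. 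All dependence on the candidate then sits in the error terms $\varepsilon^{\mathrm{align}}$, $\varepsilon^{\mathrm{nll}}$ and $\varepsilon^{\mathrm{mf}}$, which are already evaluated per sequence $x=[q;c]$. I expect this uniformity to be the main obstacle, since the theorem's statement only says coefficients ``exist''. I will handle it by stating the convention explicitly and pointing to the proof's closed-form $\alpha,\beta$.

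With common coefficients fixed, Theorem~\ref{thm:app-combined} applied to $a$ and to $b$ gives
\[
P(q,a;\theta)\le \widetilde{\mathcal G}_{\lambda^\star}(q,a;\theta)+\varepsilon_{\mathrm{tot}}(q,a;\theta)
\]
and
\[
\widetilde{\mathcal G}_{\lambda^\star}(q,b;\theta)-\varepsilon_{\mathrm{tot}}(q,b;\theta)\le P(q,b;\theta).
\]
The margin hypothesis \eqref{eq:app-ranking-margin} links the right-hand side of the first inequality strictly to the left-hand side of the second. Chaining the three gives $P(q,a;\theta)<P(q,b;\theta)$, which is the first claim.

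For the equivalent normalized form, assume $\alpha+\beta>0$ and set $\lambda:=\beta/(\alpha+\beta)\in[0,1]$. Then $P(q,c;\theta)=(\alpha+\beta)\,S(q,c;\theta)$ with $S$ as in \eqref{eq:score-repeat}, using this $\lambda$. Dividing by the positive constant $\alpha+\beta$ preserves the strict inequality, so $S(q,a;\theta)<S(q,b;\theta)$. The implemented score therefore orders $a$ before $b$, exactly as the retained energy $\widetilde{\mathcal G}_{\lambda^\star}$ does. Nonnegativity of $\alpha,\beta$ is what makes $\lambda\in[0,1]$; it follows from $\rho_{AA},\rho_{AQ}\in[0,1]$, $\kappa>0$ and $\lambda^\star\in[0,1]$.
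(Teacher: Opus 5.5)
Your proposal is correct and matches the paper's proof: apply Theorem~\ref{thm:app-combined} to $a$ for an upper bound and to $b$ for a lower bound, then chain the two through the margin hypothesis \eqref{eq:app-ranking-margin}. You also state explicitly that $\alpha,\beta$ (that is, $\rho_{AA},\rho_{AQ},\kappa_{AA},\kappa_{AQ}$) must be common to both candidates, and you give the normalization $\lambda=\beta/(\alpha+\beta)$; the paper leaves both implicit, and neither changes the argument.
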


\begin{proof}
Eq.~\eqref{eq:app-combined-bound} implies
\[
\begin{aligned}
&\alpha\,C_{AA}(q,a;\theta)
+
\beta\,C_{AQ}(q,a;\theta) \\
&\quad\le
\widetilde{\mathcal G}_{\lambda^\star}(q,a;\theta)
+
\varepsilon_{\mathrm{tot}}(q,a;\theta),
\end{aligned}
\]
and
\[
\begin{aligned}
&\alpha\,C_{AA}(q,b;\theta)
+
\beta\,C_{AQ}(q,b;\theta) \\
&\quad\ge
\widetilde{\mathcal G}_{\lambda^\star}(q,b;\theta)
-
\varepsilon_{\mathrm{tot}}(q,b;\theta).
\end{aligned}
\]
The claim follows immediately from Eq.~\eqref{eq:app-ranking-margin}.
\end{proof}

Corollary~\ref{cor:app-ranking} makes explicit why the proposed score is suitable for answer selection.
The score need not numerically match the exact retained interaction energy; it is sufficient that the approximation error remain smaller than the inter-candidate margin.
This is precisely the regime relevant to \Ours{}, whose objective is to preserve the relative ordering of candidate answers rather than to estimate the exact interaction value itself.

\section{Teacher--Student Family Compatibility}
\label{app:family-compatibility}

We investigate whether the gradient-based scores capture compatibility between
teacher and student model families. We conduct a controlled analysis using nine
teachers: Gemma-4-31B, GPT-5-Chat, GPT-OSS-120B, Llama-3.3-70B,
OLMo-3.1-32B, Phi-4-Reasoning, Qwen2.5-72B, Qwen3-32B, and Qwen3.5-27B.
The students are Qwen2.5-3B and Llama-3.2-3B. For the Qwen student, the three
Qwen teachers are treated as same-family and the remaining six as
cross-family. For the Llama student, Llama-3.3-70B is treated as same-family
and the remaining eight as cross-family.
We follow the controlled setup of Figure~\ref{fig:intro_image}. For each of
DeepScaleR and MATH, we sample 1,000 questions for which all nine teachers
produce verified-correct answers. We rank the nine candidate answers using
either the gradient score or the student-centric learning-cost score and
measure how often the selected answer is produced by a same-family teacher.
We report results for direct minimum-score selection and for the grouped
selection procedure used by \Ours{}. In the grouped setting, candidates are
partitioned using the same configuration as the main experiments, and one
answer is sampled uniformly from the lowest-score group. Applying this
procedure to the student-centric learning-cost score corresponds to \Ours{}.
Under family-agnostic uniform selection, the expected same-family rates are
\(3/9=33.3\%\) for Qwen2.5-3B and \(1/9=11.1\%\) for Llama-3.2-3B. Direct
minimum-gradient selection exceeds these reference rates on both datasets,
indicating that teacher--student family compatibility is reflected in the
gradient analysis, although the strength of this effect depends on the
student. Direct minimum learning-cost selection exhibits the same tendency.
Under the full \Ours{} grouping-and-sampling procedure, however, cross-family
answers still account for 64.0\%--67.3\% of the selected responses. Grouping
therefore reduces concentration on a single teacher family and preserves
substantial cross-family diversity. These results suggest that model family
influences candidate ranking but does not determine teacher selection under
\Ours{}. Because model family is only a coarse proxy for similarity between
training-data distributions, this analysis should be interpreted as evidence
of compatibility rather than a causal attribution to training-data overlap.

\begin{table*}[t]
\caption{Same-family selection rates under direct minimum-score selection and
grouped sampling. Under family-agnostic uniform selection, the expected rates
are 33.3\% for Qwen2.5-3B and 11.1\% for Llama-3.2-3B.}
\centering
\small
\begin{tabular}{llcccc}
\toprule
\textbf{Dataset}
& \textbf{Student}
& \textbf{Min.\ Gradient}
& \textbf{Min.\ Learning Cost}
& \textbf{Grouped Gradient}
& \textbf{\Ours{}} \\
\midrule
DeepScaleR
& Qwen2.5-3B
& 93.1\%
& 99.2\%
& 57.1\%
& 33.6\% \\

DeepScaleR
& Llama-3.2-3B
& 26.4\%
& 84.5\%
& 23.1\%
& 32.7\% \\

MATH
& Qwen2.5-3B
& 93.1\%
& 98.6\%
& 61.9\%
& 34.0\% \\

MATH
& Llama-3.2-3B
& 26.8\%
& 92.0\%
& 24.0\%
& 36.0\% \\
\bottomrule
\end{tabular}
\label{tab:family-compatibility}
\end{table*}

\section{Prompt Templates for Teacher Generation and Evaluation}
\label{app:prompt-templates}

To make the data construction and evaluation protocol reproducible, we report the prompts used in our experiments.
In each prompt, the dataset-specific question is inserted into the Question field, and the assistant field is reserved for model generation.
The output field denotes the generated response.
Tables~\ref{tab:prompt-teacher-deepscaler}, \ref{tab:prompt-teacher-gsm8k}, \ref{tab:prompt-teacher-math}, \ref{tab:prompt-teacher-magpie}, and \ref{tab:prompt-teacher-openr1} show the prompts used to generate teacher answers for DeepScaleR, GSM8K, MATH, Magpie, and OpenR1-Math.
Tables~\ref{tab:prompt-eval-deepscaler}, \ref{tab:prompt-eval-gsm8k}, \ref{tab:prompt-eval-math}, and \ref{tab:prompt-eval-openr1} show the prompts used for model-based evaluation.
The model outputs shown in the evaluation prompt examples are responses generated by Qwen2.5-7B-Instruct.

\begin{table*}[t]
\caption{Prompt used to generate teacher answers for DeepScaleR.}
\label{tab:prompt-teacher-deepscaler}
\begin{teacherpromptbox}{Teacher generation prompt: DeepScaleR}
\promptrole{System Prompt:} You are a helpful and precise math assistant.
\promptseparator
\promptrole{User Prompt:}

Please answer the math question below exactly as asked.

When solving the problem, provide a detailed step-by-step solution. Clearly explain each step, including the reasoning, formulas, substitutions, simplifications, and calculations used.

Do not give unnecessary unrelated explanations. Do not change the problem.
\promptseparator
\promptrole{Question:}

In triangle $A B C$ with altitude $A D, \angle B A C=45^{\circ}, D B=3$, and $C D=2$. Find the area of triangle $A B C$.
\promptseparator
\promptrole{Assistant Response:} \answerplaceholder{\textless teacher-generated answer\textgreater}
\end{teacherpromptbox}
\end{table*}

\begin{table*}[t]
\caption{Prompt used to generate teacher answers for GSM8K.}
\label{tab:prompt-teacher-gsm8k}
\begin{teacherpromptbox}{Teacher generation prompt: GSM8K}
\promptrole{System Prompt:} You are a helpful and precise math assistant.
\promptseparator
\promptrole{User Prompt:}

Please answer the math question below exactly as asked.

When solving the problem, provide a detailed step-by-step solution. Clearly explain each step, including the reasoning, formulas, substitutions, simplifications, and calculations used.

Do not give unnecessary unrelated explanations. Do not change the problem.
\promptseparator
\promptrole{Question:}

Fred had 236 dollars to spend on 6 books. After buying them, he had 14 dollars. On average, how much did each book cost?
\promptseparator
\promptrole{Assistant Response:} \answerplaceholder{\textless teacher-generated answer\textgreater}
\end{teacherpromptbox}
\end{table*}

\begin{table*}[t]
\caption{Prompt used to generate teacher answers for MATH.}
\label{tab:prompt-teacher-math}
\begin{teacherpromptbox}{Teacher generation prompt: MATH}
\promptrole{System Prompt:} You are a helpful and precise math assistant.
\promptseparator
\promptrole{User Prompt:}

Please answer the math question below exactly as asked.

When solving the problem, provide a detailed step-by-step solution. Clearly explain each step, including the reasoning, formulas, substitutions, simplifications, and calculations used.

Do not give unnecessary unrelated explanations. Do not change the problem.
\promptseparator
\promptrole{Question:}

In a rectangular coordinate system, what is the number of units in the distance from the origin to the point $(-15, 8)$?
\promptseparator
\promptrole{Assistant Response:} \answerplaceholder{\textless teacher-generated answer\textgreater}
\end{teacherpromptbox}
\end{table*}

\begin{table*}[t]
\caption{Prompt used to generate candidate responses for Magpie.}
\label{tab:prompt-teacher-magpie}
\begin{teacherpromptbox}{Teacher generation prompt: Magpie}
\promptrole{System Prompt:} You are a helpful and precise assistant.
\promptseparator
\promptrole{User Prompt:}

Please answer the question or instruction below exactly as asked.

When reasoning is needed, provide a clear step-by-step solution and explain each step.

Do not give unnecessary unrelated explanations. Do not change the question or instruction.
\promptseparator
\promptrole{Question or Instruction:}

What is the probability that a randomly selected integer from the set $\{10, 12, 15, 18, 20\}$ is divisible by both 2 and 3?
\promptseparator
\promptrole{Assistant Response:} \answerplaceholder{\textless teacher-generated response\textgreater}
\end{teacherpromptbox}
\end{table*}

\begin{table*}[t]
\caption{Prompt used to generate teacher answers for OpenR1-Math.}
\label{tab:prompt-teacher-openr1}
\begin{teacherpromptbox}{Teacher generation prompt: OpenR1-Math}
\promptrole{System Prompt:} You are a helpful and precise math assistant.
\promptseparator
\promptrole{User Prompt:}

Please answer the math question below exactly as asked.

When solving the problem, provide a detailed step-by-step solution. Clearly explain each step, including the reasoning, formulas, substitutions, simplifications, and calculations used.

Do not give unnecessary unrelated explanations. Do not change the problem.
\promptseparator
\promptrole{Question:}

I4.1 Given that $x$ and $y$ are real numbers such that $|x|+x+y=10$ and $|y|+x-y=10$. If $P=x+y$, find the value of $P$.
\promptseparator
\promptrole{Assistant Response:} \answerplaceholder{\textless teacher-generated answer\textgreater}
\end{teacherpromptbox}
\end{table*}

\begin{table*}[t]
\caption{Evaluator prompt used for DeepScaleR.}
\label{tab:prompt-eval-deepscaler}
\begin{evalpromptbox}{Evaluator prompt: DeepScaleR}
\promptrole{System Prompt:}

You are an expert mathematics evaluator. Your task is to evaluate whether a model's answer to a math problem is correct.

You will be given:
- The problem statement
- The reference answer
- The model's output

Carefully analyze the model's reasoning and final answer. Extract the answer from the model's output and compare it with the reference answer.

Respond with ONLY a JSON object (no other text):
\begin{quote}
\ttfamily\footnotesize
\{\\
\quad "correct": true or false,\\
\quad "model\_answer": "extracted answer",\\
\quad "reference\_answer": "reference answer",\\
\quad "reasoning": "brief explanation"\\
\}
\end{quote}
\promptseparator
\promptrole{User Prompt:}

\promptrole{Problem:}

$\tan 2\alpha = \frac{\cos \alpha}{2-\sin \alpha}$, where $0 < \alpha < \frac{\pi}{2}$, find the value of $\tan \alpha$.
\promptseparator
\promptrole{Reference Answer:}

\(\frac{\sqrt{15}}{15}\)
\promptseparator
\promptrole{Model Output:}

We are given:
\[
\tan 2\alpha = \frac{\cos \alpha}{2-\sin \alpha}, \quad 0 < \alpha < \frac{\pi}{2}.
\]
Use the double-angle identity
\[
\tan 2\alpha = \frac{2\tan \alpha}{1-\tan^2 \alpha}.
\]
Let \(t=\tan\alpha\). Expressing the right-hand side in terms of \(t\) gives
\[
\frac{\cos \alpha}{2-\sin \alpha}
=
\frac{1}{2\sqrt{1+t^2}-t}.
\]
Thus,
\[
\frac{2t}{1-t^2}=\frac{1}{2\sqrt{1+t^2}-t}.
\]
Cross-multiplying and simplifying gives
\[
4t\sqrt{1+t^2}=t^2+1.
\]
Squaring both sides,
\[
16t^2(1+t^2)=(t^2+1)^2,
\]
so
\[
15t^4+14t^2-1=0.
\]
Let \(u=t^2\). Then
\[
15u^2+14u-1=0,
\]
which yields \(u=\frac{1}{15}\) or \(u=-1\). Since \(u=t^2>0\), \(u=\frac{1}{15}\). Hence
\[
t=\tan\alpha=\frac{1}{\sqrt{15}}.
\]
Answer:
\[
\boxed{\tan\alpha=\frac{1}{\sqrt{15}}}.
\]
\promptseparator
\promptrole{Evaluation instruction:}
Evaluate if the model's answer is correct. Respond with JSON only.
\promptseparator
\promptrole{Assistant Response:} \answerplaceholder{\textless evaluator JSON output\textgreater}
\end{evalpromptbox}
\end{table*}

\begin{table*}[t]
\caption{Evaluator prompt used for GSM8K.}
\label{tab:prompt-eval-gsm8k}
\begin{evalpromptbox}{Evaluator prompt: GSM8K}
\promptrole{System Prompt:}

You are an expert mathematics evaluator. Your task is to evaluate whether a model's answer to a math problem is correct.

You will be given:
- The problem statement
- The reference answer
- The model's output

Carefully analyze the model's reasoning and final answer. Extract the answer from the model's output and compare it with the reference answer.

Respond with ONLY a JSON object (no other text):
\begin{quote}
\ttfamily\footnotesize
\{\\
\quad "correct": true or false,\\
\quad "model\_answer": "extracted answer",\\
\quad "reference\_answer": "reference answer",\\
\quad "reasoning": "brief explanation"\\
\}
\end{quote}
\promptseparator
\promptrole{User Prompt:}

\promptrole{Problem:}

James decides to run 3 sprints 3 times a week. He runs 60 meters each sprint. How many total meters does he run a week?
\promptseparator
\promptrole{Reference Answer:}

He sprints \(3*3=9\) times.
So he runs \(9*60=540\) meters.
\#\#\#\# 540
\promptseparator
\promptrole{Model Output:}

To find out how many total meters James runs in a week, we need to calculate the distance he covers in one session and then multiply it by the number of sessions he has in a week.

James runs 3 sprints each session, and each sprint is 60 meters. So, the distance he covers in one session is:
\[
3 \text{ sprints/session} \times 60 \text{ meters/sprint} = 180 \text{ meters/session}.
\]
Since he runs 3 times a week, we multiply the distance covered in one session by 3 to get the total distance for the week:
\[
180 \text{ meters/session} \times 3 \text{ sessions/week} = 540 \text{ meters/week}.
\]
Therefore, James runs a total of 540 meters a week.
\promptseparator
\promptrole{Evaluation instruction:}
Evaluate if the model's answer is correct. Respond with JSON only.
\promptseparator
\promptrole{Assistant Response:} \answerplaceholder{\textless evaluator JSON output\textgreater}
\end{evalpromptbox}
\end{table*}

\begin{table*}[t]
\caption{Evaluator prompt used for MATH.}
\label{tab:prompt-eval-math}
\begin{evalpromptbox}{Evaluator prompt: MATH}
\promptrole{System Prompt:}

You are an expert mathematics evaluator. Your task is to evaluate whether a model's answer to a math problem is correct.

You will be given:
- The problem statement
- The reference answer
- The model's output

Carefully analyze the model's reasoning and final answer. Extract the answer from the model's output and compare it with the reference answer.

Respond with ONLY a JSON object (no other text):
\begin{quote}
\ttfamily\footnotesize
\{\\
\quad "correct": true or false,\\
\quad "model\_answer": "extracted answer",\\
\quad "reference\_answer": "reference answer",\\
\quad "reasoning": "brief explanation"\\
\}
\end{quote}
\promptseparator
\promptrole{User Prompt:}

\promptrole{Problem:}

What is the sum of the four positive factors of the positive integer value of \(\sqrt{196}\)?
\promptseparator
\promptrole{Reference Answer:}

Calculate \(\sqrt{196}=\sqrt{2^2\cdot7^2}=2\cdot7\). The sum of the four positive factors is \(1+2+7+14=\boxed{24}\).
\promptseparator
\promptrole{Model Output:}

First, find the positive integer value of \(\sqrt{196}\):
\[
\sqrt{196}=14.
\]
Next, determine the positive factors of 14:

The factors of 14 are \(1, 2, 7,\) and \(14\).

Sum these factors:
\[
1+2+7+14=24.
\]

Answer:
\[
\boxed{24}
\]
\promptseparator
\promptrole{Evaluation instruction:}
Evaluate if the model's answer is correct. Respond with JSON only.
\promptseparator
\promptrole{Assistant Response:} \answerplaceholder{\textless evaluator JSON output\textgreater}
\end{evalpromptbox}
\end{table*}

\begin{table*}[t]
\caption{Evaluator prompt used for OpenR1-Math.}
\label{tab:prompt-eval-openr1}
\begin{evalpromptbox}{Evaluator prompt: OpenR1-Math}
\promptrole{System Prompt:}

You are an expert mathematics evaluator. Your task is to evaluate whether a model's answer to a math problem is correct.

You will be given:
- The problem statement
- The reference answer
- The model's output

Carefully analyze the model's reasoning and final answer. Extract the answer from the model's output and compare it with the reference answer.

Respond with ONLY a JSON object (no other text):
\begin{quote}
\ttfamily\footnotesize
\{\\
\quad "correct": true or false,\\
\quad "model\_answer": "extracted answer",\\
\quad "reference\_answer": "reference answer",\\
\quad "reasoning": "brief explanation"\\
\}
\end{quote}
\promptseparator
\promptrole{User Prompt:}

\promptrole{Problem:}

Problem 2. Zoki cycled \(64\%\) of the way, and walked the remaining \(9 \mathrm{~km}\). How many kilometers did Zoki cycle?
\promptseparator
\promptrole{Reference Answer:}

\(16\mathrm{~}\)
\promptseparator
\promptrole{Model Output:}

Let the total distance be \(D\) km.

Zoki cycled \(64\%\) of the way, so the cycling distance is:
\[
0.64D.
\]
The remaining distance is:
\[
D-0.64D=0.36D.
\]
We are told this equals \(9\) km:
\[
0.36D=9.
\]
Solve for \(D\):
\[
D=\frac{9}{0.36}=25.
\]
So the total distance is \(25\) km. Cycling distance:
\[
0.64\times25=16.
\]

Answer: 16 km.
\promptseparator
\promptrole{Evaluation instruction:}
Evaluate if the model's answer is correct. Respond with JSON only.
\promptseparator
\promptrole{Assistant Response:} \answerplaceholder{\textless evaluator JSON output\textgreater}
\end{evalpromptbox}
\end{table*}
\end{document}